\def\eqref#1{equation~\ref{#1}}
\def\1{\bm{1}}
\DeclareMathAlphabet{\mathsfit}{\encodingdefault}{\sfdefault}{m}{sl}
\SetMathAlphabet{\mathsfit}{bold}{\encodingdefault}{\sfdefault}{bx}{n}
\useunder{\uline}{\ul}{}
\newcommand{\cmark}{\ding{51}}%
\newcommand{\xmark}{\ding{55}}%
\newcommand{\iclr}[1]{{#1}}
\definecolor{customblue}{rgb}{0.1, 0.3, 0.8}
\newcommand{\tmlr}[1]{{#1}}
\newcommand{\method}{\texttt{PhASER}}
\definecolor{blue}{rgb}{0,0,1}
\definecolor{darkgreen}{rgb}{0,0.40,0}
\definecolor{firebrick}{rgb}{0.698,0.133,0.133}
\theoremstyle{plain}
\newtheorem{theorem}{Theorem}[section]
\newtheorem{proposition}[theorem]{Proposition}
\newtheorem{lemma}[theorem]{Lemma}
\theoremstyle{definition}
\newtheorem{definition}[theorem]{Definition}
\theoremstyle{remark}
\title{Phase-driven Generalizable Representation Learning for\\ Nonstationary Time Series Classification}
\author{\name Payal Mohapatra\thanks{These authors contributed equally.} \email payal.mohapatra@northwestern.edu \\
  \addr Northwestern University, Evanston, Illinois, USA
  \AND
  \name Lixu Wang\footnotemark[1]  \email lixu.wang@northwestern.edu \\
  \addr Northwestern University, Evanston, Illinois, USA
  \AND
  \name Qi Zhu \email qi.zhu@northwestern.edu \\
  \addr Northwestern University, Evanston, Illinois, USA
}
\begin{document}

\maketitle

\begin{abstract}

Pattern recognition is a fundamental task in continuous sensing applications, but real-world scenarios often experience distribution shifts that necessitate learning generalizable representations for such tasks. This challenge is exacerbated with time-series data, which also exhibit inherent \emph{nonstationarity}—variations in statistical and spectral properties over time. In this work, we offer a fresh perspective on learning generalizable representations for time-series classification by considering the phase information of a signal as an approximate proxy for nonstationarity and propose a phase-driven generalizable representation learning framework for time-series classification, \method{}. It consists of three key elements: 1) \emph{Hilbert transform-based augmentation}, which diversifies nonstationarity while preserving task-specific discriminatory semantics, 2) \emph{separate magnitude-phase encoding}, viewing time-varying magnitude and phase as independent modalities, and 3) \emph{phase-residual feature broadcasting}, integrating 2D phase features with a residual connection to the 1D signal representation, providing inherent regularization to improve distribution-invariant learning. Extensive evaluations on five datasets from sleep-stage classification, human activity recognition, and gesture recognition against 13 state-of-the-art baseline methods demonstrate that \method{} consistently outperforms the best baselines by an average of 5\% and up to 11\% in some cases. Additionally, the principles of \method{} can be broadly applied to enhance the generalizability of existing time-series representation learning models.

\end{abstract}

\section{Introduction}


Time-series data play a ubiquitous and crucial role in numerous real-world applications, such as continuous monitoring for human activity recognition~\citep{li2020survey}, gesture identification~\citep{ozdemir2020emg}, sleep tracking~\citep{kemp2000analysis}, fatigue monitoring~\citep{mohapatra2024wearable} and more. Continuous time series often exhibit \textit{non-stationarity}, i.e., the statistical and spectral properties of the data evolve over time. \tmlr{Another practical challenge is domain shift, where the data-generating process systematically evolves over time due to factors such as changes in sensor type, sub-population shifts, or environmental variations, thereby degrading model performance on previously unseen distributions.} Thus, developing methods for more generalizable pattern recognition in nonstationary time series classification is crucial.




Most existing methods~\citep{ragab2023adatime, ragab2023source, he2023domain} tackle distribution shifts in time-series applications via domain adaptation, assuming accessible target domain samples. Yet, obtaining data from unseen distributions in advance is not always feasible.
To overcome this challenge, a few works~\citep{gagnon2022woods, xu2022globem} applied standard domain generalization (DG) algorithms~\citep{volpi2018generalizing, sagawa2019distributionally, parascandolo2020learning} to temporally-varying time-series data, but reported a significant performance gap when compared with visual data. Recent research on DG tailored for time series explores latent-domain characterization~\citep{lu2023outofdistribution,du2021adarnn}, augmentation strategies~\citep{iwana2021empirical, li2021simple}, preservation of non-stationarity dictionary~\citep{non_transformer_1, kim2021reversible}, and utilization of spectral characteristics of time series~\citep{he2023domain, yang2022unsupervised, kim2021broadcasted}. While successful in some cases, these methods have their limitations. Latent-domain characterization heavily relies on the hypotheses of latent domains, limiting its broader applicability. Augmentation strategies (shift, jittering, masking, etc.) for time series may not be universally applicable and can impair the task~\citep{iwana2021empirical}. For instance, in physiological signal analysis, morphological alterations from augmentations are harmful, and time-slicing is unsuitable for periodic signals. Advanced augmentation techniques like spectral perturbations (time-frequency warping, decomposition techniques, etc.) are usually heavily parametric~\citep{time_augmentation} and application-specific. Other approaches specific to preserving non-stationarity are constrained by maintaining the same input-output space, making them unsuitable for multivariate time-series classification tasks. While some works~\citep{he2023domain, yang2022unsupervised} focus on frequency domain representations for robustness to feature shifts, they overlook cases with time-varying spectral responses. Another significant issue is that many of these studies rely on domain identity, which in practice is expensive and intrusive to obtain, especially in healthcare and finance~\citep{yan2024prompt, bai2022temporal}. \tmlr{Thus, achieving generalizable time-series classification without access to unseen distributions and the labels of available distributions remains a challenging yet crucial pursuit.}

\smallskip \textbf{Our Approach and Contributions.}
We propose a novel \uline{Ph}ase-\uline{A}ugmented \uline{S}eparate \uline{E}ncoding and \uline{R}esidual (\method{}) framework to achieve generalizable representations for classification of \emph{nonstationary} real-world time series. Figure~\ref{fig:overall_system_diag} illustrates an overview of \method{}, which includes three key modules. First, we diversify the source domain data through an intra-instance phase shift by leveraging the generality and non-parametric nature of the Hilbert Transform (HT)~\citep{king2009hilbert} to handle nonstationary signals and introduce phase-shift-based augmentation. Next, we encode the time-varying magnitude and phase responses separately for enhanced integration of the time-frequency information. Finally, we design an effective broadcasting mechanism with a non-linear residual connection between the phase-encoded embedding and the backbone representation to learn generalizable~\citep{regularize_resnet,marion2023implicit} task-specific features~\citep{he2016deep}. We experiment with 13 baselines on 5 datasets to quantitatively demonstrate \method{}'s superiority in learning generalizable representations, even in challenging scenarios such as transferring from one domain to multiple domains. Additionally, we provide detailed design insights through ablation analysis, explore \method{}'s applicability to other architectures, examine other augmentation schemes with \method{}, and present qualitative visualizations of its learned representations.





\begin{figure*}[t]
\centering
\includegraphics[width=\linewidth]{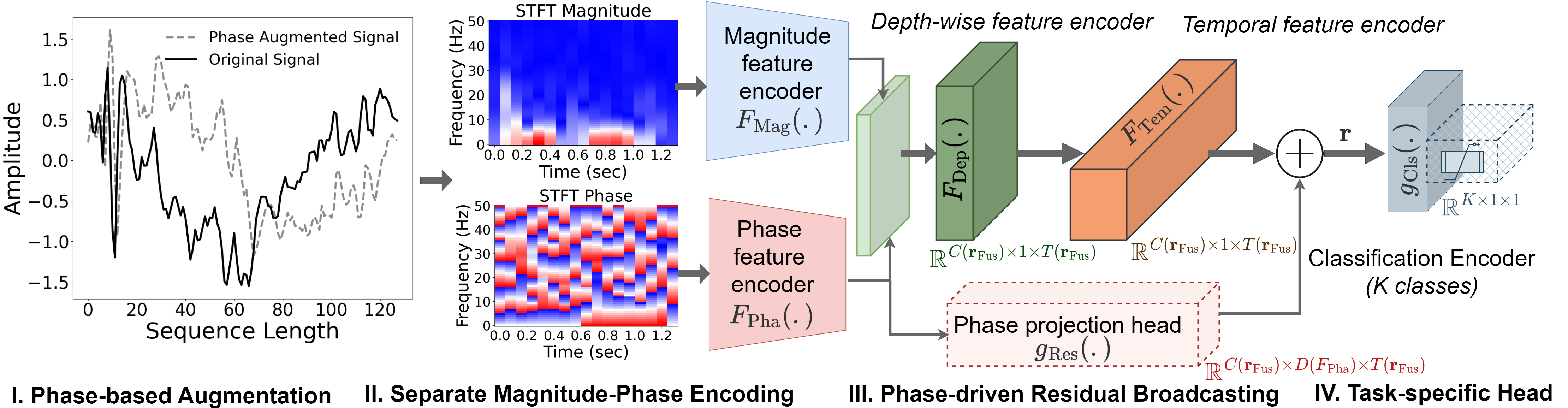}
\caption{\small \method{}'s components: I. Hilbert transform-based phase augmentation. II. Separate feature encoding of time-varying phase and magnitude derived from Short-Term Fourier Transform (STFT) using $F_\mathrm{Mag}$ and $F_\mathrm{Pha}$. III. Key elements of the phase-residual broadcasting network, demonstrating design of depth-wise feature encoder ($F_\mathrm{Dep}$), temporal encoder ($F_\mathrm{Tem}$), and incorporation of phase-projection head's output ($g_\mathrm{Res}$) for broadcasting (annotated dimensions of intermediate feature maps). IV. Task-specific classification encoder ($g_\mathrm{Cls}$).}
\label{fig:overall_system_diag}
\vspace{-15pt}
\end{figure*}

\section{Approach}\label{sec:approach}

\subsection{Problem Formulation}
\begin{definition}[\textbf{Nonstationary Time Series}] 
\label{definition_nonstationary}
Following the definition of mixed decomposition-based nonstationary signals in~\citet{timeseriesdecomposition}, we assume that a nonstationary time-series sample $\mathbf{x} = \{x_0, ..., x_t, ...\}$ drawn from a domain $\mathcal{D}_\mathbf{x}$ can be decomposed into components with mean $\mu_t$ and variance $\sigma_t$ (both $\mu_t$ and $\sigma_t$ are not always zero) as:
\begin{equation}
\begin{aligned}
\mathrm{Pr}_{\mathbf{x} \sim \mathcal{D}_\mathbf{x}}(\mathbf{x})(t) = \mu_t + \sigma_t \times z,\,\text{where}\,\forall L \geq 1, \exists t, \left[\mu_t \neq \mu_{t+L}\right] \vee \left[\sigma_t \neq \sigma_{t+L}\right],
\end{aligned}
\end{equation}
where $z$ is a stationary stochastic component with a zero mean and a unit variance. 
\end{definition}

\smallskip
\begin{definition}[\textbf{\tmlr{Time-Series Generalization to Unseen Domains}}]
Suppose there is a dataset $\mathbf{S} \!=\! \{(\mathbf{x}_i, y_i)\}_{i=1}^{M}$ with $M$ nonstationary time-series samples drawn from a set of $N_S$ source domains $S \!=\! \{\mathcal{S}_i\}_{i=1}^{N_S}$. The joint distribution of $\mathbf{S}$ is $\mathrm{Pr}(\mathcal{X}_\mathbf{S}, \mathcal{Y}_\mathbf{S})$, i.e., $\mathbf{x}_i \!\sim\! \mathcal{X}_\mathbf{S}, y_i \!\sim\! \mathcal{Y}_\mathbf{S}$ and $\mathbf{x}_i \in \mathbb{R}^{V \times T}$, where $V$ is the number of time-series feature dimensions and $T$ is the sequence length. $y_i \in \mathbb{R}^{1 \times 1}$ is the categorical label. \iclr{Note that the joint distributions of different source domains are similar (with shared underlying patterns) but domain-specific distinctions:}
\begin{equation}\label{eqn:dg}
    \mathrm{Pr}(\mathcal{X}_{\mathcal{S}_i}, \mathcal{Y}_{\mathcal{S}_i}) \neq \mathrm{Pr}(\mathcal{X}_{\mathcal{S}_j}, \mathcal{Y}_{\mathcal{S}_j}), 1 < i \neq j \leq N_S.
\end{equation}

For any potential unseen target domain $\mathcal{D}_\mathrm{U}$, its joint distribution remains distinct like Eq.~(\ref{eqn:dg}). In our problem, although the source dataset is assumed to contain multiple domains, the annotations that specify the domain identity are unavailable. Our goal is to train a model consisting of a feature extractor $F$ and a classifier $g$ using the given source dataset ($F \circ g:\mathcal{X}_\mathbf{S} \xrightarrow{} \mathcal{Y}_\mathbf{S}$), such that
\begin{equation}
    \min \mathop{{}\mathbb{E}}_{(\mathbf{x}, y) \sim  \mathcal{D}_\mathrm{U}} [\mathcal{L}(g(F(\mathbf{x})), y)],
\end{equation}
where $\mathcal{L}(\cdot)$ is a certain cost that measures the errors between model predictions and the ground truth.
\end{definition}

\begin{figure}[!htbp]
\centering
\includegraphics[width=0.75\linewidth]{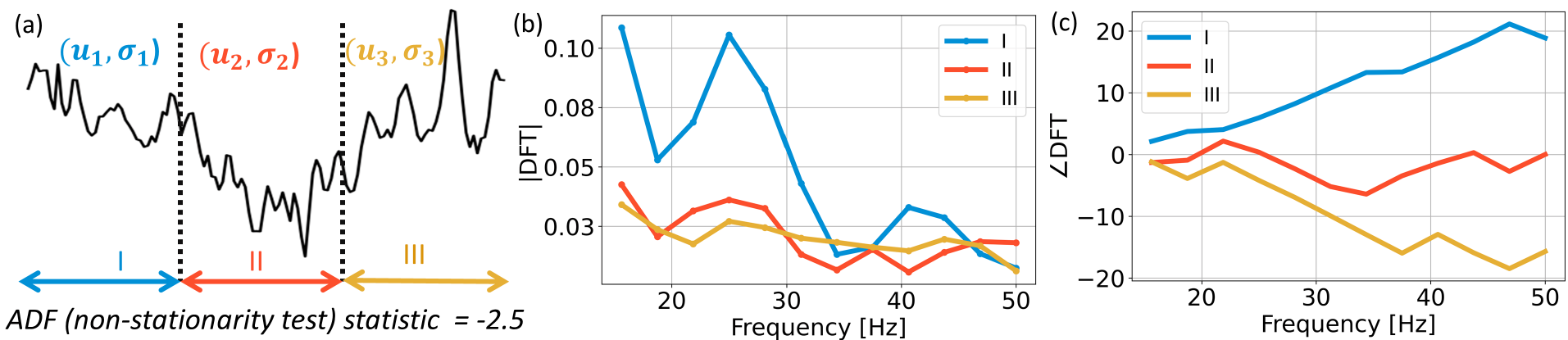}
\vspace{-12pt}
\caption{\small (\textbf{Real-world time-series are nonstationary.}) Example of non-stationarity using a sample from a human activity recognition dataset (HHAR) where (a) shows the temporal non-stationarity denoted by varying mean $\mu$ and variance $\sigma$ within a domain for three regions color-coded and denoted as I, II, and III. (b) shows that the magnitude response ($|\text{DFT}|$) of the Discrete Fourier Transform (DFT) for each region is distinct. There is a clear difference in the dominant frequency for each region. (c) shows the phase responses ($\angle(\text{DFT})$) for each region. The $\angle(\text{DFT})$ of each region is also distinct.}
\label{fig:motivation}
\end{figure}


\smallskip \textbf{Motivation.} Consider a human activity recognition (HAR) application, where non-stationarity is unavoidable due to changes in sensor characteristics~\citep{bangaru2020data}. We illustrate an instance of non-stationarity in Figure~\ref{fig:motivation}(a), which visualizes a univariate accelerometer data sample from a dataset called HHAR~\citep{stisen2015smart} in the time domain. By segmenting this sample into sequential windows and conducting a Discrete Fourier Transform (DFT) to obtain its magnitude and phase responses, as shown in Figures~\ref{fig:motivation}(b) and (c), we observe shifts in the spectral domain. The Augmented Dickey-Fuller (ADF) statistic~\citep{said1984testing} also supports the signal's non-stationarity. Thus, there is non-stationarity in terms of signal statistics and spectral properties. Most real-world time-series datasets for activity recognition, sleep stage classification, and gesture recognition applications are nonstationary, as indicated by their ADF statistics reported in Table~\ref{tab:app_dataset} in the Appendix. \tmlr{Furthermore, more structured nonstationarities (in time domain statistics, spectral properties etc.) arise from sensor and subpopulation shifts, which manifest as distribution shifts and highlight the need to learn generalizable representations for unseen distributions.}

\begin{wrapfigure}{r}{0.3\textwidth}
    \begin{center}
        \includegraphics[width=0.26\textwidth]{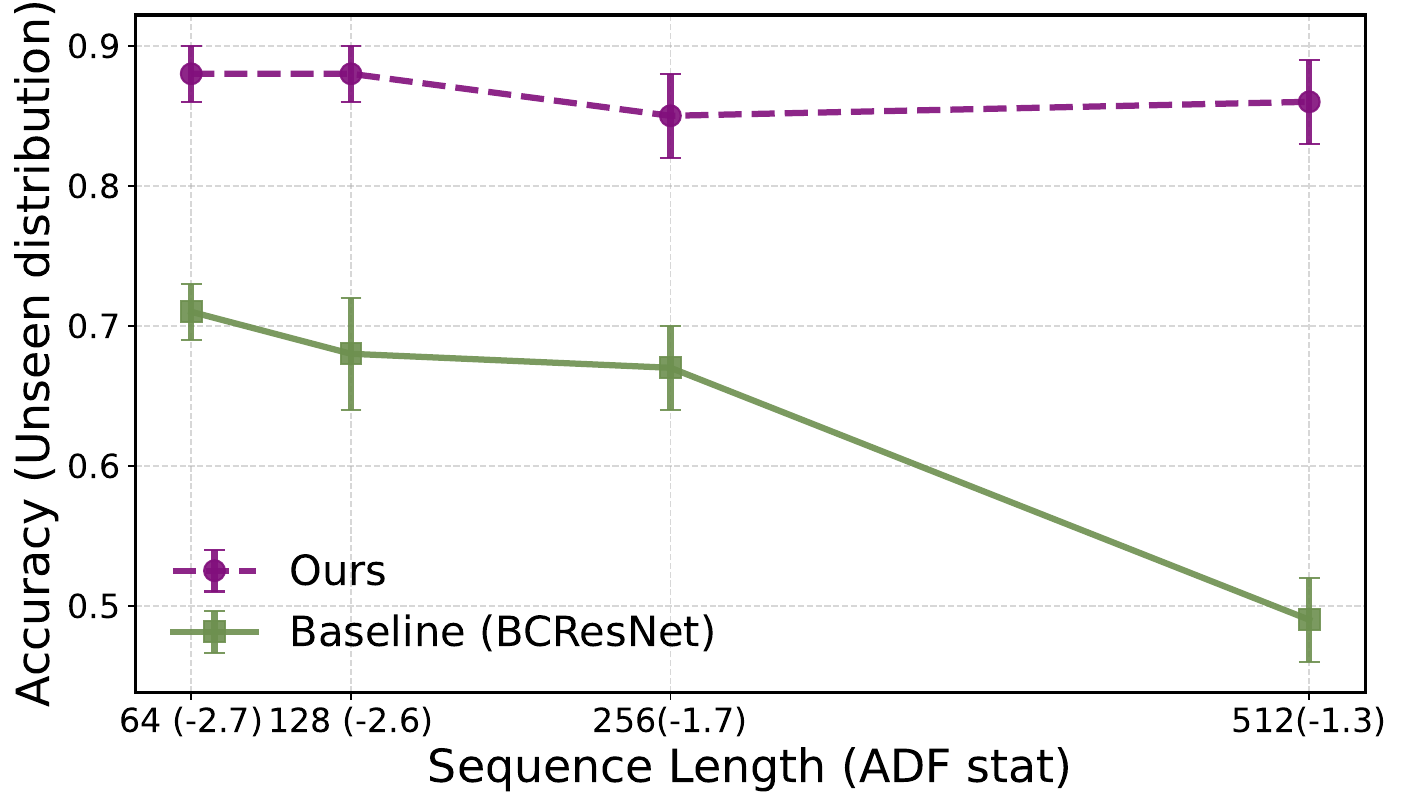}
        \vspace{-13pt}
        \caption{\small (\textbf{Nonstationarity impacts generalization}) Comparison between \method{} and BCResNet with increasing non-stationarity in HHAR dataset.}
        \vspace{-15pt}
        \label{fig:motivation_2}      
    \end{center}
\end{wrapfigure}
Now we pose the question: \textit{What is the impact of non-stationarity of time series on a model's generalization ability \tmlr{to unseen distributions}?} We conduct a simple empirical study on the HHAR dataset by varying the sequence length to synthesize increasing non-stationarity, measured by the ADF statistic (a higher ADF value indicates greater non-stationarity). More details of the ADF test are provided in Section~\ref{app:nw_design} of the Appendix. We adopt a DG model BCResNet from~\citet{kim2021broadcasted} for time-series classification to explore the relationship between the degree of non-stationarity and the model's generalization ability to unseen distributions. \tmlr{Figure~\ref{fig:motivation_2} shows an evident drop in the accuracy of BCResNet as non-stationarity increases, highlighting that non-stationarity has a detrimental impact on generalization performance, which is also observed in prior work~\citep{10.1109/TPAMI.2024.3355212}, possibly due to model overfitting to the source domains' non-task-specific nonstationary attributes.}

\tmlr{We are motivated to leverage the phase information of a signal as a proxy for its non-stationarity, as the phase response captures underlying local time shifts and time-localized frequency variations characteristic of nonstationary signals~\citep{klein2001non, oppenheim1999discrete}. In this work, we propose \method{}, whose components are anchored in phase to achieve generalizable performance for nonstationary time series classification tasks, as demonstrated in Figure~\ref{fig:motivation_2}.}

\textbf{Overview of \method{}.} Our proposed \method{} framework, shown in Figure~\ref{fig:overall_system_diag}, begins with an augmentation module that utilizes the Hilbert Transform to generate out-of-phase augmentations for time series, which diversify non-stationarity while preserving the category-discriminatory semantics for classification tasks. We use the short-term Fourier Transform (STFT) to obtain temporal magnitude and phase responses. Two separate encoders process the magnitude and phase as distinct input modalities. Next, \method{} establishes a phase-feature broadcasting mechanism as a residual connection to emphasize learning task-relevant information. \tmlr{Finally, the classifier extracts robust task-discriminatory representations.} In the following Sections~\ref{subsec:ht} to~\ref{subsec:broadcasting}, we introduce the details of these three components of \method{} and discuss the theoretical insights that inspire our design.




\subsection{Hilbert Transform based Phase Augmentation}
\label{subsec:ht}

\begin{wrapfigure}{r}{0.6\textwidth}
    \vspace{-20pt}
    \begin{center}
        \includegraphics[width=0.6\textwidth]{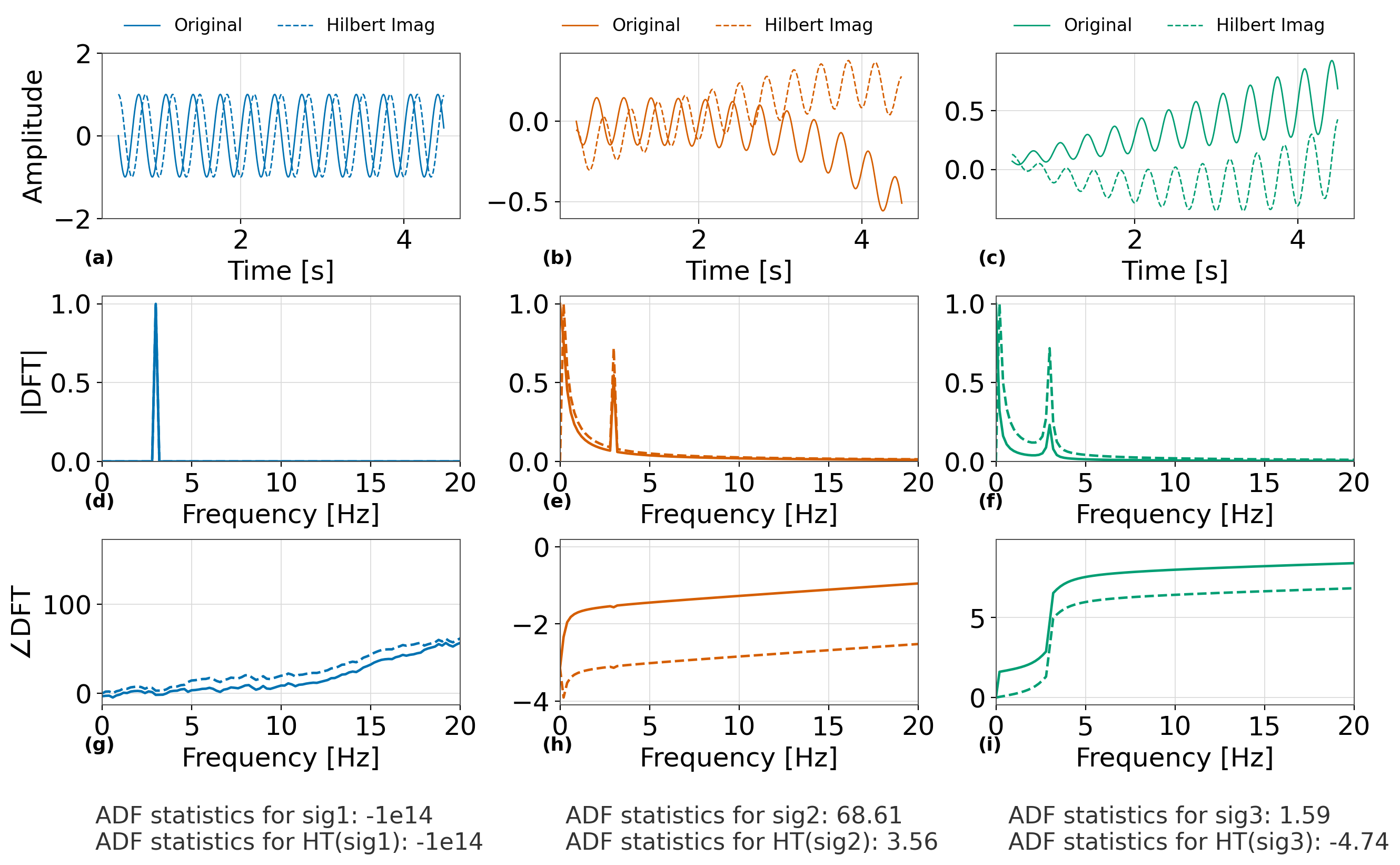}
        \vspace{-15pt}
        \caption{\small 
    \tmlr{\textbf{(Phase as a proxy for non-stationarity.)} 
    (a–c) Time-domain signals: (a) Stationary sinusoid; (b, c) Non-stationary sinusoids with the same base frequency. 
    (d–f) Magnitude spectra (DFT) of the corresponding signals, showing similar frequency content. 
    (g–i) Phase spectra, displaying distinct responses that reflect differences in underlying dynamics. 
    ADF statistics below each column summarize the stationarity of the respective signals and show how it changes with the proposed Hilbert Transform (HT)-based augmentation.}
}
        \label{fig:motivation_ns_phase}
    \end{center}
    \vspace{-15pt}
\end{wrapfigure}Our motivating study in Figure~\ref{fig:motivation_2}, demonstrates the adverse effects of increasing non-task-specific non-stationarity on a model's generalization ability, and Figure~\ref{fig:motivation_ns_phase} shows that the phase response of a signal encodes its non-stationarity. Inspired by these observations, we propose to employ a data augmentation technique that diversifies the non-stationarity in the training data while preserving the original data's discriminatory properties to ensure semantic differentiability.


Different from most existing time-series augmentation techniques, we introduce a phase shift to a signal while preserving the magnitude response, thereby offering an augmented view. This intra-sample phase-augmentation technique is less studied in the context of time-series classification for domain generalization (although some recent works, like~\citet{demirel2024finding}, explore phase-mixup for contrastive learning). We intuitively justify our design choice by exploring a question: \textit{Does shifting the phase response of a time series change its non-stationarity?} \tmlr{In Figure~\ref{fig:motivation_ns_phase}(a--c), we illustrate a stationary sinusoid and two non-stationary sinusoids all sharing the same frequency, as evidenced by their magnitude responses shown in Figure~\ref{fig:motivation_ns_phase}(d--f). However, the distinct phase responses of each signal reveal changes in the underlying dynamics. These phase variations occur as time-local oscillometric fluctuations arise, motivating the use of phase information as a proxy for capturing the underlying non-stationarity~\citep{Wu2009}.} 

We propose a simple but effective data augmentation technique based on the Hilbert Transform (HT) to diversify the non-stationarity and preserve discriminatory features. Specifically, for each time-series sample $\mathbf{x}$ in the source dataset $\mathbf{S}$, we can assume it is a real-valued signal $\mathbf{x}=\{x_0, ..., x_t, ...\} \in \mathbb{R}$ that is characterized by a deterministic function $x_t = \mathbf{x}(t)$. 
Then, $\mathrm{HT}(\mathbf{x}(t)) = \widehat{\mathbf{x}}(t) = \int_{-\infty}^{\infty}\mathrm{x}(\tau)\frac{1}{\pi(t-\tau)}d\tau$. 
HT can be easily interpreted in the frequency domain via Fourier analysis: 
$$
\begin{aligned}
    &f_\mathbf{x}(\xi) = \mathcal{F}\{\mathbf{x}(t)\} = \int^{\infty}_{-\infty} \mathbf{x}(t)e^{i 2 \pi \xi t} dt, -\infty < \xi < \infty, \\
    &\mathbf{x}(t) = \mathcal{F}^{-1} \{f_\mathbf{x}(\xi)\} = \int^{\infty}_{-\infty} f_\mathbf{x}(\xi) e^{i 2 \pi \xi t} d\xi, -\infty < t < \infty,
\end{aligned}
$$
where $\mathcal{F}, \mathcal{F}^{-1}$ denote the Fourier transform and inverse, and $\xi$ is the frequency variable. To interpret $\widehat{\mathbf{x}}$ in the frequency domain, the negative frequency spectrum of $f_\mathbf{x}(\xi)$ is multiplied with the imaginary unit $i$, while the positive spectrum is multipled with $-i$. Then we have:
\begin{equation}
    \mathrm{HT}(\mathbf{x}(t)) = \widehat{\mathbf{x}}(t) = \mathcal{F}^{-1}\{-i \cdot \mathrm{sgn}(\xi)f_\mathbf{x}(\xi)\},
\label{eq:ht}
\end{equation} 
where $\mathrm{sgn}(\cdot)$ is a sign function. Applying HT on a signal results in a phase shift of $-\pi/2$, yielding a new out-of-phase signal. After obtaining the transformed $\widehat{\mathbf{x}}$ for across all feature dimensions, we merge the augmented dataset $\widehat{\mathbf{S}}$ and the original $\mathbf{S}$ to form a new larger dataset $\mathbf{S}^\prime = \widehat{\mathbf{S}} \cup \mathbf{S}$. For the rest of the design, there is no distinction among the samples in $\mathbf{S}^\prime$, whether they belong to $\widehat{\mathbf{S}}$ or $\mathbf{S}$.

\textbf{Theoretical Motivation: Phase-Based Hilbert Transform Diversifies Non-Stationarity.} Suppose there are $M_\mathcal{D}$ samples (observations) available for a nonstationary time-series domain $\mathcal{D}_\mathbf{x}$, and each sample $\mathbf{x}_i=\{x_{i, 0}, ..., x_{i, t}, ...\}$ is characterized by, $\mathbf{x}_i(t) = x_{i, t} = \mathrm{x}_i(t)$, $i\in[1, M_\mathcal{D}]$. If we apply Hilbert Transformation $\mathrm{HT}(\mathbf{x}(t)) = \widehat{\mathbf{x}}(t) = \int_{-\infty}^{\infty}\mathrm{x}(\tau)\frac{1}{\pi(t-\tau)}d\tau$ to augment these time-series samples, the nonstationary statistics of augmented samples are different from the original ones,
$
\mathrm{Pr}_{\mathbf{x} \sim \widehat{\mathcal{D}}_\mathbf{x}}(\mathbf{x})(t) \neq \mathrm{Pr}_{\mathbf{x} \sim \mathcal{D}_\mathbf{x}}(\mathbf{x})(t).
$

\begin{wraptable}{r}{0.52\textwidth}
\footnotesize
\vspace{-20pt}
\centering
\begin{minipage}[t]{0.22\textwidth}
\centering
\caption{\small Domain discrepancy accuracy for $\mathbf{S}$ vs $\widehat{\mathbf{S}}$.}
\label{tab:domain_discrepancy}
\begin{tabularx}{\linewidth}{@{}lc@{}}
\toprule
Dataset & Accuracy \\
\midrule
WISDM  & $0.99_{\pm 0.01}$ \\
UCIHAR & $0.99_{\pm 0.01}$ \\
HHAR   & $1.00_{\pm 0.02}$ \\
SSC    & $0.98_{\pm 0.03}$ \\
GR     & $0.97_{\pm 0.02}$ \\
\bottomrule
\end{tabularx}
\end{minipage}%
\hfill
\begin{minipage}[t]{0.28\textwidth}
\centering
\caption{\small Generalization accuracy across domains (train $\rightarrow$ test).}
\label{tab:gen_results}
\begin{tabularx}{\linewidth}{@{}lcc@{}}
\toprule
Dataset & $\mathbf{S} \rightarrow \mathbf{S}$ & $\mathbf{S} \rightarrow \widehat{\mathbf{S}}$ \\
\midrule
WISDM  & $0.88_{\pm0.02}$ & $0.88_{\pm0.01}$ \\
UCIHAR & $0.86_{\pm0.02}$ & $0.83_{\pm0.04}$ \\
HHAR   & $0.76_{\pm0.01}$ & $0.74_{\pm0.06}$ \\
SSC    & $0.76_{\pm0.03}$ & $0.77_{\pm0.01}$ \\
GR     & $0.78_{\pm0.03}$ & $0.75_{\pm0.01}$ \\
\bottomrule
\end{tabularx}
\end{minipage}
\end{wraptable} \tmlr{\textbf{Implication.}
This result establishes that phase manipulation via the Hilbert transform can meaningfully diversify the non-stationarity of time series. Empirically (see Figure~\ref{fig:motivation_ns_phase}), we observe that phase-augmented signals exhibit distinct non-stationarity statistics, as measured by ADF tests. To demonstrate that the proposed augmentation produces meaningful diversification in real-world datasets, we conduct a domain discrepancy test~\citep{saito2018maximum}, where we train a classifier to distinguish between samples drawn from $\widehat{\mathbf{S}}$ or $\mathbf{S}$. For all the datasets used in this work (more details on datasets are given in Section~\ref{sec:experiments}), we observe much higher-than-chance accuracy in Table~\ref{tab:domain_discrepancy}, supporting that $\widehat{\mathbf{S}}$ and $\mathbf{S}$ are indeed diverse.

To verify that the proposed diversification does not compromise task-specific discriminative features, we conduct a controlled experiment: a model is trained on $\mathbf{S}$ and evaluated on held-out samples from both $\mathbf{S}$ and their augmented counterparts $\widehat{\mathbf{S}}$. As shown in Table~\ref{tab:gen_results}, the model exhibits minimal performance degradation across domains, indicating that the augmentation preserves task-relevant semantics. This aligns with our design motivation to preserve the magnitude response, which, as shown in the following section (Table~\ref{tab:table_pilot}, contains more prominent task-relevant information.}



\subsection{Magnitude-Phase Separate Encoding} \label{subsec:stft}

After augmenting the source domain with phase-shift using HT, next, we identify optimal ways to encode time series for generalization. While employing spectral transformation is a common approach, our perspective diverges from most existing methods which typically focus on separating time and frequency information. Rather, we unify the time and frequency context and instead consider the \emph{magnitude} and \emph{phase} information as distinct modalities of the original signals.

\textbf{Intuition of treating phase and magnitude as separate modalities.} Building on insights from prior studies~\citep{he2023domain, kim2021broadcasted} highlighting the importance of spectral input in generalizable learning, we conduct a small-scale empirical study on the WISDM HAR dataset~\citep{kwapisz2011activity} to explore optimal time-frequency input methods. Specifically, we compare four approaches: magnitude-only, phase-only, concatenated magnitude and phase, and separate encoders for magnitude and phase. Results (see Table~\ref{tab:table_pilot}) demonstrate that using only phase input yields inferior performance compared to magnitude-only input, suggesting the latter contains more discriminative information for classification tasks. \begin{wraptable}{r}{0.4\textwidth}
\centering
\small
\vspace{-20pt}
    \caption{Comparison of various time-frequency input configurations.}
    \begin{tabular}{lc}
        \toprule
            Input Modality    & Accuracy \\ \midrule
            Only Magnitude (Mag)          & 0.81 $\pm$ 0.03   \\
            Only Phase (Pha)              & 0.62 $\pm$ 0.03    \\
            Mag-Pha Concatenate           & 0.73 $\pm$ 0.03 \\
            Mag-Pha Separate              & 0.85 $\pm$ 0.01 \\ \bottomrule
    \end{tabular}
\label{tab:table_pilot}
\end{wraptable}Here the phase-only features achieve an accuracy of 0.62 in a six-class classification task -- significantly higher than chance accuracy (0.17) -- supporting the presence of task-discriminating but time-varying attributes in the phase response; motivating us to use it as an approximate proxy for signal's nonstationarity in \method{}. Also, concatenating magnitude and phase does not improve performance, whereas separate encoding followed by late fusion proves superior in this case. This may be attributed to 1) the independent selection of high-level features from the magnitude and phase for the task of classification, and 2) the learning about non-stationarity from the phase information. Additionally, these magnitude and phase encoders for separate encoding can be deployed in parallel after training since they have no data dependencies, potentially optimizing computational efficiency during inference.



We adopt STFT instead of DFT because the DFT is typically suited for stationary, periodic signals, whereas time-varying signals require a method that accounts for changes over time. The STFT addresses this by applying the DFT sequentially within a moving window, capturing both the frequency and time information across the entire time series. Specifically, for each training sample $\mathbf{x} \in \mathbf{S}^\prime$ with a continuous time function $\mathbf{x}(t)$, sampling it at a fixed rate generates a discrete time series denoted as $\mathbf{x}[n]$ with a sequence length $N$, we have:
\vspace{-10pt}
\begin{equation}
    f_\mathbf{x}[n, k] = \sum_{m=n-(W-1)}^n w[n-m] \mathbf{x}[m] e^{i\xi_k m}.
    \label{eq:stft}
\end{equation}
The STFT of $\mathbf{x}[n]$, $f_\mathbf{x}[n, k]$, is a function of both discrete time $n$ and frequency bin indices $k$ with lengths $\widetilde{N}$ and $\Xi$, respectively. $\xi_k$ is a digital frequency variable given by $\xi_k = \frac{2 \pi k}{\Xi}$ and $w[\cdot]$ is a window function. Without losing generality, we adopt the Hanning window with window length $W$, i.e., $w[n] = 0.5(1-cos\frac{2\pi n}{W-1})$ where $0\leq n \leq W-1$. Note that the length and shape of the window determine the time-frequency resolution. A larger $W$ provides better frequency resolution and a smaller $W$ gives a better temporal scale. We set $W$ to be randomly sampled powers of 2 for each time-series feature, i.e., $W_i = 2^{p_i} \leq \Xi, p_i \sim \mathcal{U}\in\mathbb{Z}_0^+, i\in[1, V]$, where $\mathcal{U}$ denotes a uniform distribution for integers. After obtaining $f_\mathbf{x}[n, k]$, we can compute its magnitude and phase as:
\begin{equation}
\begin{aligned}
    \mathrm{Mag}(\mathbf{x}) = \sqrt{\mathrm{Re}(f_\mathbf{x}[n, k])^2 + \mathrm{Im}(f_\mathbf{x}[n, k])^2},
    \mathrm{Pha}(\mathbf{x}) = \arctan2\left(\mathrm{Im}(f_\mathbf{x}[n]), \mathrm{Re}(f_\mathbf{x}[n, k])\right),
\end{aligned}
\end{equation}
where $\mathrm{Im}(\cdot)$ and $\mathrm{Re}(\cdot)$ indicate imaginary and real parts of a complex number, and $\arctan2(\cdot)$ is the two-argument form of arctan. Then we take $\mathrm{Mag}(\mathbf{x}), \mathrm{Pha}(\mathbf{x}) \in \mathbb{R}^{V \times \Xi \times \widetilde{N}}$ as inputs of two separate encoders $F_{\mathrm{Mag}}$ and $F_{\mathrm{Pha}}$. This approach is motivated by the viability of reconstructing a time-series signal using phase and magnitude responses~\citep{hayes1980signal, jacques2020keep}, which is supported by our study below.

Before fusing the extracted embeddings of $F_\mathrm{Mag}$ and $F_\mathrm{Pha}$, we incorporate sub-feature normalization proposed by~\citet{chang2021subspectral}. Specifically, the embeddings of $F_\mathrm{Mag}$ and $F_\mathrm{Pha}$ are divided into $B$ sub-feature spaces. We apply normalization in each sub-feature space for each time-series variate, $
    F_\mathrm{Mag}(\mathbf{x}) \!=\! \left\{F_\mathrm{Mag}(\mathbf{x})_b \!:=\! \frac{F_\mathrm{Mag}(\mathbf{x})_b - \overline{F_\mathrm{Mag}(\mathbf{x})_b}}{\sigma\left({F_\mathrm{Mag}(\mathbf{x})_b}\right)}\right\}_{b=1}^B,
\label{eq:ssn}
$
where $\overline{(\cdot)}$ and $\sigma(\cdot)$ denote the computation of the mean and variance of the given input. The same sub-feature normalization is also conducted on $F_\mathrm{Pha}(\mathbf{x})$. Then, both $F_\mathrm{Mag}(\mathbf{x})$ and $F_\mathrm{Pha}(\mathbf{x})$ are fused along the variate axis by multiplying with 2D convolution kernels denoted as a fusing encoder $F_\mathrm{Fus}$. The fused embeddings $\mathbf{r}_\mathrm{Fus} = F_\mathrm{Fus}(F_\mathrm{Mag}(\mathbf{x}), F_\mathrm{Pha}(\mathbf{x}))$ are then fed into the following modules. 
\subsection {Phase-Residual Feature Broadcasting}
\label{subsec:broadcasting}

Lastly, we outline our phase-based broadcasting approach to achieve domain generalizable representation learning. It starts with a depthwise feature encoder, $F_\mathrm{Dep}$, which transforms the fused embeddings, $\mathbf{r}_\mathrm{Fus}$, into 1D feature maps, $\mathbf{r}_\mathrm{Dep}$, along the temporal dimension, given as:
$$
    \mathbb{R}^{C(\mathbf{r}_\mathrm{Fus}) \times D(\mathbf{r}_\mathrm{Fus}) \times T(\mathbf{r}_\mathrm{Fus})} \rightarrow \mathbb{R}^{C(\mathbf{r}_\mathrm{Fus}) \times 1 \times T(\mathbf{r}_\mathrm{Fus})},
$$
where $C(\cdot)$, $D(\cdot)$, and $T(\cdot)$ represent the channel number, the feature dimensions, and the temporal dimensions of an embedding. $F_\mathrm{Dep}$ is implemented as several convolution layers followed by an average pooling operation to unify all features at each temporal index. Once the 1D feature map is obtained, we attach a sequence-to-sequence (the dimension format of the feature map remains intact) temporal encoder, $F_\mathrm{Tem}$, to characterize its temporal dependency and semantics. The choice of backbone for $F_\mathrm{Tem}$ is not central to our design and a suitable sequence-to-sequence encoder can be chosen. Here we leverage convolution layers to form $F_\mathrm{Tem}$, and we have also tested other architectures (please refer to Section~\ref{app:F_temp} in the Appendix for details). We adopt this feature consolidation approach to enable specialized learning of spectral attributes by $F_\mathrm{Dep}$ and global temporal dependencies using $F_\mathrm{Tem}$, resulting in a more valuable overall semantic characterization.

We now introduce a non-linear projection of $F_\mathrm{Pha}(\mathbf{x})$ as a shortcut through $F_\mathrm{Dep}$ to $F_\mathrm{Tem}$. To suitably broadcast with the output dimensions of $F_\mathrm{Tem}$, we use a projection head, $g_\mathrm{Res}$ for the transformation:
$$
\begin{aligned}
    &\mathbb{R}^{C(F_\mathrm{Pha}(\mathbf{x})) \times D(F_\mathrm{Pha}(\mathbf{x})) \times T(F_\mathrm{Pha}(\mathbf{x}))} \rightarrow \mathbb{R}^{C(\mathbf{r}_\mathrm{Fus}) \times D(F_\mathrm{Pha}(\mathbf{x})) \times T(\mathbf{r}_\mathrm{Fus})}.
\end{aligned}
$$
\begin{wrapfigure}{r}{0.3\textwidth}
    \vspace{-20pt}
    \begin{center}
        \includegraphics[width=0.27\textwidth]{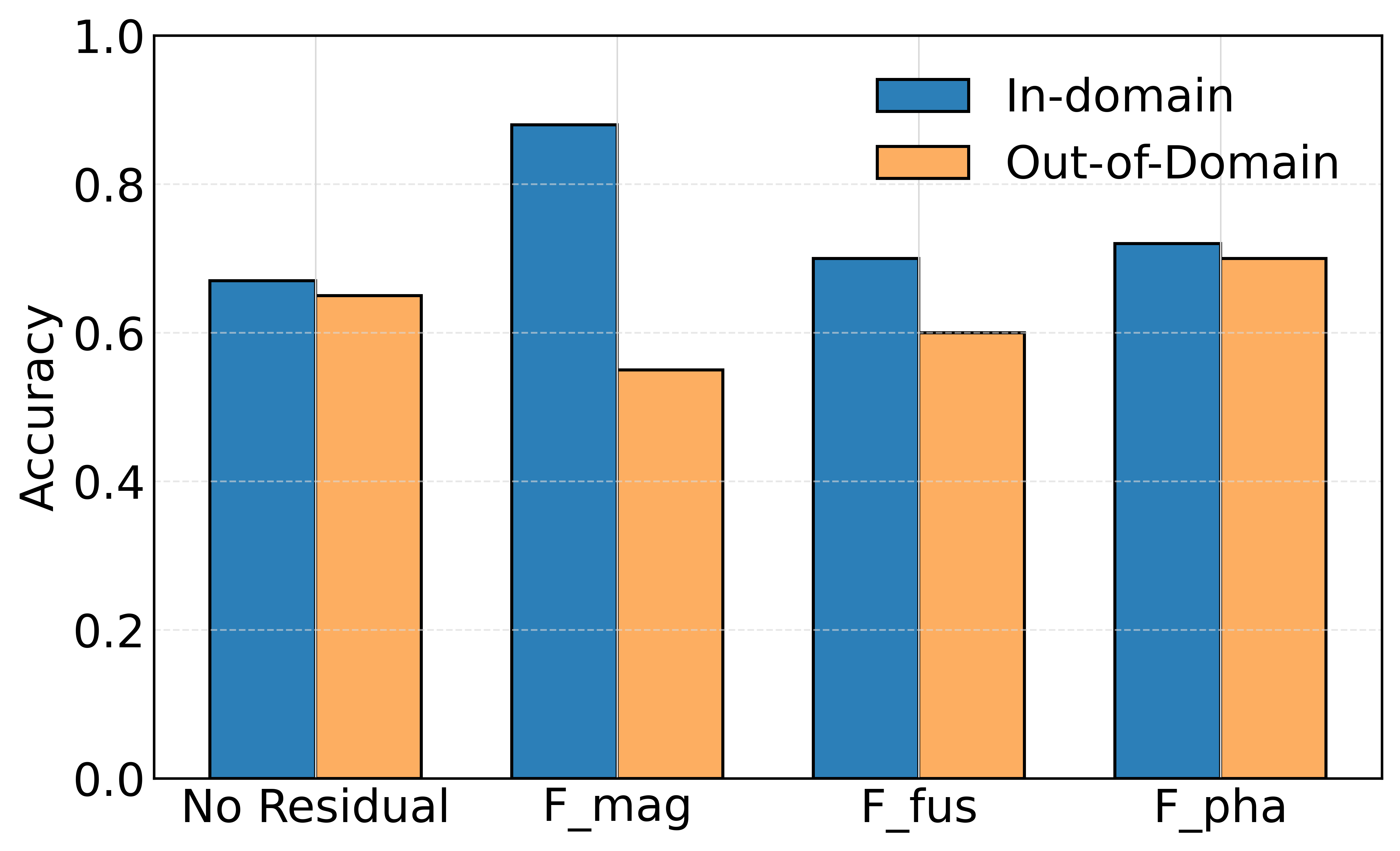}
        \vspace{-10pt}
        \caption{\small \tmlr{Comparison of generalization performance of different residual broadcasting features.}}
        \vspace{-15pt}
        \label{fig:phase_res}      
    \end{center}
\end{wrapfigure}
\tmlr{We conduct a controlled experiment using different residual feature broadcasting—no residual connection, using magnitude ($F_\mathrm{Mag}$), using phase ($F_\mathrm{Pha}$), and using the fused magnitude and phase ($F_\mathrm{Fus}$)—and evaluate this on held-out samples from the source domain, denoted as in-domain accuracy, and on the target domain, denoted as out-of-domain accuracy. We present the results in Figure~\ref{fig:phase_res} on a Gesture Recognition (GR) dataset. It is not surprising that the magnitude residual performs very well for in-domain evaluation; however, the drop in OOD accuracy can be indicative that the model has overfit to the in-domain non-task specific non-stationarity. Using a phase residual especially since the phase component of the dataset is diversified thorugh the porosed augmentation helps implicitly regularize the model's learnt against non-task specific nonstationarities.}

After the projection, we can broadcast the output of $F_\mathrm{Tem}$ to form the final representation $\mathbf{r}$ that is intended to learn discriminatory characteristics despite non-stationarity: 
\begin{equation}
    \mathbf{r} = F_\mathrm{Tem}(\mathbf{r}_\mathrm{Dep}) + g_\mathrm{Res}(F_{\mathrm{Pha}}(\mathrm{Pha}(\mathbf{x}))).
\end{equation}
\vspace{-6pt}
After these efforts to preserve and enhance the discriminatory characteristics amid input's non-stationarity, we now optimize for semantic distinction. This optimization is achieved with a Cross-Entropy Loss applied to a classification head $g_\mathrm{Cls}$, which is attached to $F_\mathrm{Tem}$ as $
    \mathcal{L}_\mathrm{CE} = \frac{1}{N_B}\sum_{i=1}^{N_B} \mathbf{y}_i \log{g_\mathrm{Cls}(\mathbf{r})},
$
where $N_B$ is the size of a batch in the mini-batch training, and $\mathbf{y}_i$ is the one-hot form of the label $y_i$. 
\begingroup
\renewcommand\thefootnote{}\footnotetext{Code is available at \url{https://github.com/payalmohapatra/PhASER}}
\addtocounter{footnote}{-1}
\endgroup

\subsection{\tmlr{Theoretical Motivation}}
\label{subsec:theory}

Here we provide some theoretical insights to demonstrate that our method design is rigorously motivated. Detailed definitions and proofs are provided in Section~\ref{sec_proof} of the Appendix.
\begin{definition}[\textbf{$\beta$-Divergence}] Suppose two data domains $\mathcal{D}_1, \, \mathcal{D}_2$ are built on input variable $\mathbf{x}$ and label variable $y$. Let $q > 0$ be a constant. The $\beta$-Divergence between $\mathcal{D}_1$ and $\mathcal{D}_2$ is defined as:
\begin{equation}
\beta_q(\mathcal{D}_1 \| \mathcal{D}_2) = \left[\mathbb{E}_{(\mathbf{x}, y) \sim \mathcal{D}_2}\left(\frac{\mathcal{D}_1(\mathbf{x}, y)}{\mathcal{D}_2(\mathbf{x}, y)} \right)^q \right]^{\frac{1}{q}}.
\end{equation}
Per the definition in~\citep{divergence}, $\beta$-Divergence can be linked to the R\'{e}nyi Divergence~\citep{van2014renyi} $\mathrm{RD}_q(\cdot)$ as:
\begin{equation}
\beta_q(\mathcal{D}_1 \| \mathcal{D}_2) = 2^{\frac{q-1}{q}\mathrm{RD}_q(\mathcal{D}_1 \| \mathcal{D}_2)}.
\end{equation}
\end{definition}

\begin{lemma}[\textbf{Bounding $\beta$-Divergence in A Convex Hull}]
\label{lemma_bounding_beta}
Let $S$ be a set of source domains, denoted as $S = \{\mathcal{S}_i\}_{i=1}^{N_S}$. A convex hull $\Lambda_S$ considered here consists of a mixture distributions $\Lambda_S = \{\Bar{\mathcal{S}}: \Bar{\mathcal{S}}(\cdot) = \sum_{i=1}^{N_S} \pi_i \mathcal{S}_i (\cdot), \pi_i \in \Delta_{N_S - 1}\}$, where $\Delta_{N_S - 1}$ is the ($N_S \!-\! 1$)-th dimensional simplex. Let $\beta_q(\mathcal{S}_i \| \mathcal{S}_j) \leq \epsilon$ for $\forall i, j \in [N_S]$, and then we have the following relation for the $\beta$-Divergence between any pair of two domains $\mathcal{D}^\prime, \, \mathcal{D}^{\prime \prime} \in \Lambda_S$ in the convex hull:
\vspace{-2pt}
\begin{equation}
\beta_q(\mathcal{D}^\prime \| \mathcal{D}^{\prime \prime}) \leq \epsilon.
\vspace{-2pt}
\end{equation}
\end{lemma}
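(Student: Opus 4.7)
The plan is to reduce the claim to a pointwise joint-convexity estimate on the integrand of $\beta_q(\mathcal{D}' \| \mathcal{D}'')^q$. First I would unroll the definition as
\begin{equation*}
\beta_q(\mathcal{D}_1 \| \mathcal{D}_2)^q \;=\; \mathbb{E}_{\mathcal{D}_2}\!\left[\left(\tfrac{\mathcal{D}_1}{\mathcal{D}_2}\right)^{\!q}\right] \;=\; \int \mathcal{D}_1(\mathbf{x},y)^q\,\mathcal{D}_2(\mathbf{x},y)^{1-q}\,d\mu(\mathbf{x},y),
\end{equation*}
so the target quantity is the integral of $\phi(u,v) := u^q v^{1-q}$ evaluated at $(\mathcal{D}'(\mathbf{x},y),\mathcal{D}''(\mathbf{x},y))$.

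The next step is a simple re-indexing that exposes the mixture structure. Because both $\pi'$ and $\pi''$ are probability vectors, I can rewrite the two mixtures over a common index set $\{(i,j)\}_{i,j=1}^{N_S}$ with shared weights $\lambda_{ij} := \pi'_i \pi''_j$:
\begin{align*}
\mathcal{D}'(\mathbf{x},y) &= \sum_{i,j}\pi'_i\pi''_j\,\mathcal{S}_i(\mathbf{x},y), & \mathcal{D}''(\mathbf{x},y) &= \sum_{i,j}\pi'_i\pi''_j\,\mathcal{S}_j(\mathbf{x},y).
\end{align*}
The central analytic fact I would then invoke is that $\phi(u,v) = u^q v^{1-q}$ is jointly convex on $\mathbb{R}_{>0}^2$ for $q \geq 1$. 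I would justify this either through its perspective form $v \cdot (u/v)^q$ of the convex function $t \mapsto t^q$, by a Hessian computation (the determinant vanishes identically and $\phi_{uu} = q(q-1)u^{q-2}v^{1-q} \geq 0$), or via a one-line Hölder inequality with conjugate exponents $q$ and $q/(q-1)$.

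With joint convexity in hand, I would apply it pointwise in $(\mathbf{x},y)$ to obtain
\begin{equation*}
\phi\!\bigl(\mathcal{D}',\mathcal{D}''\bigr) \;\leq\; \sum_{i,j}\pi'_i\pi''_j\,\phi(\mathcal{S}_i,\mathcal{S}_j) \;=\; \sum_{i,j}\pi'_i\pi''_j\,\mathcal{S}_i^q\,\mathcal{S}_j^{1-q},
\end{equation*}
integrate, exchange the finite sum with the integral, recognize each resulting term as $\beta_q(\mathcal{S}_i \| \mathcal{S}_j)^q$, bound each by $\epsilon^q$ via hypothesis, and use $\sum_{i,j}\pi'_i\pi''_j = 1$ to collapse the sum. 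Taking $q$-th roots then yields $\beta_q(\mathcal{D}' \| \mathcal{D}'') \leq \epsilon$.

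The hard part will be the regime $q \in (0,1)$, where $\phi$ is concave and the pointwise inequality reverses, so the argument above does not close as stated. Since $\beta_q$ only behaves as a genuine dissimilarity measure (bounded below by $1$) for $q \geq 1$, I expect this is the intended regime of the lemma; if the statement is to be read for all $q > 0$, I would detour through the link $\beta_q = 2^{(q-1)\mathrm{RD}_q/q}$ to the R\'enyi divergence, which is jointly convex for $q \in [0,1]$, and translate the bound back carefully after tracking the sign of the exponent $(q-1)/q$.
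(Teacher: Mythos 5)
Your proof is correct for $q\ge 1$ and follows the same skeleton as the paper's: re-index both mixtures over the product index set and reduce the divergence between hull elements to the pairwise divergences $\beta_q(\mathcal{S}_i\|\mathcal{S}_j)$. Where you differ is in the key analytic step, and your version is the rigorous one. The paper works inside the R\'enyi logarithm and asserts
$\int_\Omega \bigl[\sum_{k,l}\pi_k\pi_l\mathcal{S}_k\bigr]^q\bigl[\sum_{k,l}\pi_k\pi_l\mathcal{S}_l\bigr]^{1-q}dx = \sum_{k,l}\pi_k\pi_l\int_\Omega \mathcal{S}_k^q\mathcal{S}_l^{1-q}dx$
as an \emph{equality}, which is false in general; it is exactly the Jensen inequality for your $\phi(u,v)=u^qv^{1-q}$, valid in the needed direction only when $\phi$ is jointly convex, i.e.\ $q\ge 1$. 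Your perspective-function/Hessian/H\"older justification is precisely the missing ingredient, and your intermediate bound $\beta_q(\mathcal{D}'\|\mathcal{D}'')^q\le\sum_{i,j}\pi'_i\pi''_j\,\beta_q(\mathcal{S}_i\|\mathcal{S}_j)^q$ is slightly sharper than the paper's jump to the maximum (you also correctly allow the two hull elements to carry distinct weight vectors, which the paper's notation obscures). Your worry about $q\in(0,1)$ is well founded — the paper silently ignores that the sign of $1/(q-1)$ flips several of its inequalities — but you can close that case in one line rather than detouring through R\'enyi convexity: for $q\in(0,1)$, H\"older with exponents $1/q$ and $1/(1-q)$ gives $\beta_q(\mathcal{D}'\|\mathcal{D}'')\le 1$ for \emph{any} pair of distributions, while the hypothesis applied with $i=j$ forces $\epsilon\ge\beta_q(\mathcal{S}_i\|\mathcal{S}_i)=1$, so the claim holds trivially there.
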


\begin{proposition}[\textbf{\tmlr{Heuristic Risk Bound for an Unseen Domain}}]
\label{theorem_risk}
Let $\mathcal{H}$ be a hypothesis space built from a set of source time-series domains, denoted as $S = \{\mathcal{S}_i\}_{i=1}^{N_S}$ with the same value range (i.e., the supports of these source domains are the same). Suppose $q>0$ is a constant. For any unseen time-series domain $\mathcal{D}_\mathrm{U}$ from the convex hull $\Lambda_S$, we have its closest element $\mathcal{D}_{\Bar{\mathrm{U}}}$ in $\Lambda_S$, i.e., $\mathcal{D}_{\Bar{\mathrm{U}}} = \arg \min\limits_{\pi_1, ..., \pi_{N_S}} \beta_q(\mathcal{D}_{\Bar{\mathrm{U}}} \| \sum_{i=1}^{N_S}\pi_i\mathcal{S}_i)$. Then the risk of $\mathcal{D}_\mathrm{U}$ on any $\rho$ in $\mathcal{H}$ is:
\vspace{-6pt}
\begin{equation}
R_{\mathcal{D}_\mathrm{U}}[\rho] \leq \frac{1}{2}\mathrm{d}_{\mathcal{D}_\mathrm{U}}(\rho) + \epsilon \cdot \left[\mathrm{e}_{\mathcal{D}_{\Bar{\mathrm{U}}}}(\rho) \right]^{1-\frac{1}{q}},
\label{eq_upperbound}
\end{equation}
where $\mathrm{d}_{\mathcal{D}}(\rho)$ and $\mathrm{e}_{\mathcal{D}}(\rho)$ are an expected disagreement and an expected joint error of a domain $\mathcal{D}$, respectively.
The $\epsilon$ is a value larger than the maximum $\beta$-Divergence in $\Lambda_S$:
\begin{equation}
\epsilon \geq \max\limits_{i, j\in[N_S], i \neq j, t\in [0, +\infty)} 2^{\frac{q-1}{q}\mathrm{RD}_q(\mathcal{S}_i(t) \| \mathcal{S}_j(t))},
\vspace{-5pt}
\end{equation}
\vspace{-6pt}
\begin{equation}
\begin{aligned}
\text{where}\,\,\mathrm{RD}_q(\mathcal{S}_i(t) \| \mathcal{S}_j(t)) =&\frac{q(\mu_{j, t}-\mu_{i, t})^2}{2(1-q)\sigma_{i, t}^2 + 2\sigma_{j, t}^2} + \frac{\ln{\frac{\sqrt{(1-q)\sigma_{i, t}^2 + \sigma_{j, t}^2}}{\sigma_{i, t}^{1-q}\sigma_{j, t}^q}}}{1-q}.
\label{eq_rd_details}
\end{aligned}
\end{equation}
\end{proposition}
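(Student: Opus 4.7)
The plan is to follow the classical domain-generalization template that bounds a target risk by a disagreement term plus a joint-error term weighted by a source--target divergence, and then to specialize it to the convex-hull setting through Lemma~\ref{lemma_bounding_beta}.

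First, I would begin from the standard identity that any domain's risk decomposes into $\tfrac{1}{2}\mathrm{d}_{\mathcal{D}_\mathrm{U}}(\rho)$ plus a correction equal to an expected joint-error indicator under $\mathcal{D}_\mathrm{U}$. I would rewrite that correction as an expectation under $\mathcal{D}_{\Bar{\mathrm{U}}}$ via the Radon--Nikodym derivative $d\mathcal{D}_\mathrm{U}/d\mathcal{D}_{\Bar{\mathrm{U}}}$, which is well-defined thanks to the shared-support hypothesis on the sources. Applying H\"{o}lder's inequality with the conjugate exponents $(q,\,q/(q-1))$ factors this expectation into $\beta_q(\mathcal{D}_\mathrm{U}\|\mathcal{D}_{\Bar{\mathrm{U}}})$ multiplied by a residual $\bigl(\mathbb{E}_{\mathcal{D}_{\Bar{\mathrm{U}}}}[\mathbf{1}_{\mathrm{err}}^{q/(q-1)}]\bigr)^{(q-1)/q}$; because the error indicator is idempotent under any positive power, this residual collapses to $[\mathrm{e}_{\mathcal{D}_{\Bar{\mathrm{U}}}}(\rho)]^{1-1/q}$, matching the second term of~(\ref{eq_upperbound}).

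Second, I would invoke Lemma~\ref{lemma_bounding_beta} to bound $\beta_q(\mathcal{D}_\mathrm{U}\|\mathcal{D}_{\Bar{\mathrm{U}}}) \le \epsilon$. Since $\mathcal{D}_{\Bar{\mathrm{U}}}$ is defined as the closest convex-hull element to $\mathcal{D}_\mathrm{U}$ and $\mathcal{D}_\mathrm{U}$ is assumed to come from $\Lambda_S$, both domains lie inside the hull, so the pairwise $\beta$-divergence bound applies. Combining this with the H\"{o}lder step above immediately yields~(\ref{eq_upperbound}). To obtain the explicit formula for $\epsilon$, I would then use the identity $\beta_q = 2^{(q-1)\mathrm{RD}_q/q}$ and specialize it using Definition~\ref{definition_nonstationary}: each source component can be written as a Gaussian-like perturbation $\mu_{i,t} + \sigma_{i,t} z$ with $z$ of zero mean and unit variance. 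Substituting the well-known closed form for the R\'{e}nyi divergence between $\mathcal{N}(\mu_{i,t},\sigma_{i,t}^2)$ and $\mathcal{N}(\mu_{j,t},\sigma_{j,t}^2)$ reproduces~(\ref{eq_rd_details}); taking the supremum over all $(i,j,t)$ gives the stated lower bound on $\epsilon$.

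The hard part will be legitimizing the ``heuristic'' step that $\mathcal{D}_\mathrm{U}$ may be treated as a member of $\Lambda_S$ when applying Lemma~\ref{lemma_bounding_beta}; strictly, the lemma only controls divergences between elements already inside the hull, and any residual mismatch between $\mathcal{D}_\mathrm{U}$ and its projection $\mathcal{D}_{\Bar{\mathrm{U}}}$ has to be either absorbed into $\epsilon$ or declared as a modelling assumption (presumably why the proposition is labelled ``heuristic''). A secondary technicality is ensuring the H\"{o}lder step remains well-posed when $d\mathcal{D}_\mathrm{U}/d\mathcal{D}_{\Bar{\mathrm{U}}}$ could be unbounded; the shared-support assumption on the sources is precisely what is needed to keep this derivative finite almost everywhere and to keep $\beta_q$ meaningful.
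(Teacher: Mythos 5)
Your proposal is correct and follows essentially the same route as the paper: the paper imports the decomposition $R_{\mathcal{T}}[\rho] \leq \frac{1}{2}\mathrm{d}_{\mathcal{T}}(\rho) + \beta_q(\mathcal{T}\,\|\,\mathcal{S})\cdot\left[\mathrm{e}_{\mathcal{S}}(\rho)\right]^{1-\frac{1}{q}} + \eta_{\mathcal{T}\setminus\mathcal{S}}$ from Theorem~3 of the cited divergence reference (your change-of-measure-plus-H\"{o}lder sketch is precisely how that bound is established), sets $\eta_{\mathcal{T}\setminus\mathcal{S}}=0$ via the shared-support assumption exactly as you do, applies Lemma~\ref{lemma_bounding_beta} to get $\beta_q(\mathcal{D}_\mathrm{U}\,\|\,\mathcal{D}_{\Bar{\mathrm{U}}})\leq\epsilon$, and substitutes the Gaussian R\'{e}nyi closed form for Eq.~(\ref{eq_rd_details}). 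Your closing caveats---that $\mathcal{D}_\mathrm{U}$ must be assumed to lie in $\Lambda_S$ for the lemma to apply, and that the shared support is what keeps the Radon--Nikodym step meaningful---are exactly the assumptions the paper makes (and why it labels the result heuristic).
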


\textbf{Insights.} Theorem~\ref{theorem_risk} indicates potential efforts to reduce the generalization risk of an unseen target domain. According to Eq.~(\ref{eq_upperbound}), the risk is bounded by two terms. The first term $\mathrm{d}_{\mathcal{D}_\mathrm{U}}(\rho)$ is the expected disagreement of $\mathcal{D}_\mathrm{U}$ and we are unable to conduct any approximation without accessing the data from $\mathcal{D}_\mathrm{U}$. Regarding the second term, the coefficient $\epsilon$ can be viewed as the maximum $\beta$-Divergence of source domains, and according to Eq.~(\ref{eq_rd_details}), the nonstationary statistics of time series are arguments of the $\beta$-Divergence. We regard the $\beta$-Divergence as a proxy for non-stationarity. \tmlr{Our architectural design leverages an implicit regularization effect, achieved by reintroducing the phase dictionary through phase-residual broadcasting at deeper network layers. In this mechanism, the $F_{\mathrm{Pha}}$ features, which approximately encapsulate diverse (due to augmentation) non-stationarity (shown earlier in Figure~\ref{fig:motivation_ns_phase}), are used to modulate the task-specific information learned so far using $F_{\mathrm{Temp}}$. This forces the network to repeatedly disentangle and emphasize task-relevant representations deeper in the layers as well~\citep{noh2017regularizing}, resulting in implicit regularization that contributes to a tighter generalization risk bound according to Theorem~\ref{theorem_risk}. Empirically, this translates to enhanced robustness and improved transferability to unseen target domains, as shown in Figure~\ref{fig:phase_res}.}

\section{Experiments}
\label{sec:experiments}
We extensively evaluate our proposed \method{} framework against \iclr{13} state-of-the-art approaches (including a large foundation time-series model), on 5 datasets across three time-series applications. Our evaluation metric is per-segment accuracy. More implementation-specific details are provided in Section~\ref{app:implementation} of the Appendix. Our source codes are provided in the Supplementary Materials.

\begin{table*}[!htbp]
\vspace{-15pt}
\small
\caption{\small Classification accuracy of Target 1$\sim$4 scenarios for cross-person generalization in Human Activity Recognition on WISDM, HHAR, and UCIHAR (\textbf{Best} in bold, {\ul second-best} underlined).}
\resizebox{1\textwidth}{!}{
\setlength{\tabcolsep}{1.6mm}{
\begin{tabular}{l|ccccc|ccccc|ccccc|c}
\toprule
\multicolumn{1}{c|}{Dataset}                  & \multicolumn{5}{c|}{WISDM}                                                                                                             & \multicolumn{5}{c|}{HHAR}                                                                                                              & \multicolumn{5}{c|}{UCIHAR}                                                                                                            & HAR           \\ \cmidrule{1-16}
\multicolumn{1}{c|}{Target}                   & \multicolumn{1}{c}{1} & \multicolumn{1}{c}{2} & \multicolumn{1}{c}{3} & \multicolumn{1}{c|}{4}             & \multicolumn{1}{c|}{Avg.} & \multicolumn{1}{c}{1} & \multicolumn{1}{c}{2} & \multicolumn{1}{c}{3} & \multicolumn{1}{c|}{4}             & \multicolumn{1}{c|}{Avg.} & \multicolumn{1}{c}{1} & \multicolumn{1}{c}{2} & \multicolumn{1}{c}{3} & \multicolumn{1}{c|}{4}             & \multicolumn{1}{l|}{Avg.} & Avg.          \\ \midrule
ERM                      & \multicolumn{1}{c}{0.57}  & \multicolumn{1}{c}{0.50}  & \multicolumn{1}{c}{0.51}  & \multicolumn{1}{c|}{0.55}              & \multicolumn{1}{c|}{0.53}     & \multicolumn{1}{c}{0.49}  & \multicolumn{1}{c}{0.46}  & \multicolumn{1}{c}{0.45}  & \multicolumn{1}{c|}{0.47}              & \multicolumn{1}{c|}{0.47}     & \multicolumn{1}{c}{0.72}  & \multicolumn{1}{c}{0.64}  & \multicolumn{1}{c}{0.70}  & \multicolumn{1}{c|}{0.72}              & \multicolumn{1}{c|}{0.70}     &  0.57             \\
GroupDRO                 & 0.71                  & 0.67                  & 0.60                  & \multicolumn{1}{c|}{0.67}          & 0.66                      & 0.60                  & 0.53                  & 0.59                  & \multicolumn{1}{c|}{0.64}          & 0.59                      & {\ul 0.91}            & {\ul 0.84}            & 0.89                  & \multicolumn{1}{c|}{0.85}          & 0.87                      & 0.71          \\
DANN                     & 0.71                  & 0.65                  & 0.65                  & \multicolumn{1}{c|}{0.70}          & 0.68                      & \multicolumn{1}{c}{0.66}  & \multicolumn{1}{c}{0.71}  & \multicolumn{1}{c}{0.67}  & \multicolumn{1}{c|}{0.69}              & \multicolumn{1}{c|}{0.68}     & 0.84                  & 0.79                  & 0.81                  & \multicolumn{1}{c|}{0.86}          & 0.83                      &  0.73             \\
RSC                      & 0.69                  & 0.71                  & 0.64                  & \multicolumn{1}{c|}{0.61}          & 0.66                      & 0.52                  & 0.49                  & 0.44                  & \multicolumn{1}{c|}{0.47}          & 0.48                      & 0.82                  & 0.73                  & 0.74                  & \multicolumn{1}{c|}{0.81}          & 0.78                      & 0.64          \\
ANDMask                  & 0.74                  & 0.73                  & 0.69                  & \multicolumn{1}{c|}{0.69}          & 0.71                      & 0.63                  & 0.64                  & 0.66                  & \multicolumn{1}{c|}{0.69}          & 0.66                      & 0.86                  & 0.80                  & 0.76                  & \multicolumn{1}{c|}{0.78}          & 0.80                      & 0.72          \\
\iclr{InceptionTime}    & \iclr{\ul 0.83}       & \iclr{\ul 0.82}       & \iclr{0.80}           & \multicolumn{1}{c|}{\iclr{0.77}}    & \iclr{0.81}               & \iclr{0.77}           & \iclr{\ul 0.80}       & \iclr{\ul 0.82}       & \multicolumn{1}{c|}{\iclr{\ul 0.83}} & \iclr{\ul 0.80}           & \iclr{{\ul 0.91}}           & \iclr{0.82}           & \iclr{0.88}           & \multicolumn{1}{c|}{\iclr{0.91}}    & \iclr{0.88}               & \iclr{0.82}   \\
BCResNet                 & {\ul 0.83}            & 0.79                  & 0.75                  & \multicolumn{1}{c|}{0.78}          & 0.79                      & 0.66                  & 0.70                  & 0.75                  & \multicolumn{1}{c|}{0.68}          & 0.70                      & 0.81                  & 0.77                  & 0.78                  & \multicolumn{1}{c|}{0.83}          & 0.80                      & 0.76          \\
NSTrans                      & 0.43                  & 0.40                  & 0.37                  & \multicolumn{1}{c|}{0.37}          & 0.40                      & 0.21                  & 0.22                  & 0.27                  & \multicolumn{1}{c|}{0.28}          & 0.24                      & 0.35                  & 0.35                  & 0.51                  & \multicolumn{1}{c|}{0.47}          & 0.42                      & 0.35          \\
Koopa                      &0.63 &0.61 &0.72 &\multicolumn{1}{c|}{0.57} &0.63 &0.72 & 0.63 & 0.72 &\multicolumn{1}{c|}{0.69} &0.69 & 0.81 & 0.72 & 0.81 &\multicolumn{1}{c|}{0.77} & 0.78 & 0.70                    \\
Ours+RevIN & 0.86         & 0.85         & 0.84         & \multicolumn{1}{c|}{0.84} & 0.85             & 0.82         & 0.82         & 0.92         & \multicolumn{1}{c|}{0.85} & 0.85             & 0.96         & 0.90         & 0.93         & \multicolumn{1}{c|}{0.97} & 0.94             & 0.88 \\ 
MAPU                     & 0.75                  & 0.69                  & 0.79                  & \multicolumn{1}{c|}{0.79}          & 0.75                      & 0.73                  & 0.72                  & 0.81                  & \multicolumn{1}{c|}{0.78}    & 0.76                      & 0.85                  & 0.80                  & 0.85                  & \multicolumn{1}{c|}{0.82}          & 0.83                      & 0.78          \\
Diversify                & 0.82                  &  0.82           & {\ul 0.84}            & \multicolumn{1}{c|}{{\ul 0.81}}    & {\ul 0.82}                & {\ul 0.82}            &  0.76            & 0.82            & \multicolumn{1}{c|}{0.68}          & 0.77                & 0.89                  & {\ul 0.84}            & {\ul 0.93}            & \multicolumn{1}{c|}{{\ul 0.90}}    & {\ul 0.89}                & {\ul 0.83}    \\ 
Chronos                & 0.71                  & 0.66            & 0.65            & \multicolumn{1}{c|}{{0.62}}    & {0.66}                & {0.66}            & {0.73}            & {0.75}            & \multicolumn{1}{c|}{0.66}          & {0.72}                & 0.56                  & {0.57}            & {0.50}            & \multicolumn{1}{c|}{{0.82}}    & {0.61}                & {0.67}    \\ \midrule
Ours                     & \textbf{0.86}         & \textbf{0.85}         & \textbf{0.85}         & \multicolumn{1}{c|}{\textbf{0.82}} & \textbf{0.85}             & \textbf{0.83}         & \textbf{0.83}         & \textbf{0.94}         & \multicolumn{1}{c|}{\textbf{0.88}} & \textbf{0.87}             & \textbf{0.96}         & \textbf{0.91}         & \textbf{0.95}         & \multicolumn{1}{c|}{\textbf{0.97}} & \textbf{0.95}             & \textbf{0.89} \\
\bottomrule
\end{tabular}
}}
\label{tab:har_result}
\vspace{-10pt}
\end{table*} 

\noindent \textbf{Datasets.} 
We conduct experiments on three common time-series applications -- Human Activity Recognition (HAR), Sleep-Stage Classification (SSC), and Gesture Recognition (GR). For HAR, we use 3 benchmark datasets: \textit{\textbf{WISDM}}~\citep{kwapisz2011activity} collected from 36 different users with 3 univariate dimensions, \textit{\textbf{UCIHAR}}~\citep{bulbul2018human} collected from 30 people with 9 variates, and \textit{\textbf{HHAR}}~\citep{stisen2015smart} collected from 9 users with 3 feature dimensions, comprising 6 distinct activities with a sequence length of 128. For SSC, the dataset~\citep{goldberger2000physiobank} consists of single-channel EEG data from 20 healthy individuals with a sequence length of 3000. For GR, the dataset~\citep{lobov2018latent} is 8-channel EMG data for 6 different gestures, with a sequence length of 200, prepared similarly as in~\citep{lu2022out}. We follow the setup of ADATime~\citep{ragab2023adatime} for HAR and SSC. More data-specific details are provided in Table~\ref{tab:app_dataset} of the Appendix. Specifically, the class distributions of the considered datasets in Figure~\ref{app:data_dist}, as well as the trends of two performance metrics, segment-wise Area-under-the-curve (AUC) and accuracy, for the WISDM dataset in Figure~\ref{app:auc_roc}, are provided in the Appendix to justify the choice of performance metrics in accordance with previous works~\citep{ragab2023adatime, lu2022out}.



\noindent \textbf{Experimental Setup.} Each dataset is divided into four distinct non-overlapping cross-domain scenarios, following the approach in~\citep{lu2023outofdistribution}. Details are provided in Section~\ref{app:subsubdataset} of the Appendix. 20\% of the training data is reserved for validation. Mean results from three trials are reported in the main text, with full statistics in Section~\ref{app:add_results} of the Appendix.

\noindent \textbf{Comparison Baselines.}
We conduct comparison with a range of state-of-the-art approaches including domain generalization algorithms -- ERM, DANN~\citep{ganin2016domain}, GroupDRO~\citep{sagawa2019distributionally}, RSC~\citep{huang2020self} and ANDMask~\citep{parascandolo2020learning} implemented based on the DomainBed benchmarking suite~\citep{gulrajani2020search}; an audio domain generalization method BCResNet~\citep{kim2021domain}; a time-series representation learning method MAPU~\citep{ragab2023source}; a strong deep-learning time-series classification model (top ranked by~\citet{middlehurst2024bake}), InceptionTime~\citep{ismail2020inceptiontime}, a time-series domain generalizable learning method Diversify~\citep{lu2022out}; and a large time-series foundation model Chronos~\citep{ansari2024chronos}. We also adapt the time-series forecasting models Nonstationary Transformer (NSTrans)~\citep{non_transformer_1} and Koopa~\citep{liu2024koopa}, and integrate a network-agnostic statistical technique RevIN~\citep{kim2021reversible} with our method (denoted as Ours+RevIN). We follow the default setups of these works and only conduct necessary modifications for our problem setting. Details are in Sections~\ref{app:baseline} and~\ref{app:comp_analyses} of the Appendix.

\begin{table}[!htbp]
    \centering
    \begin{minipage}[t]{0.475\textwidth}
        \centering
        \small
        \vspace{-12pt}
        \caption{ \small Classification accuracy with Source 0$\sim$8 person for one-person-to-another generalization on the HHAR dataset (\textbf{Best} in bold, {\ul second-best} underlined).}
        \vspace{10pt}
        \resizebox{\textwidth}{!}{
        \setlength{\tabcolsep}{.4mm}{
        \begin{tabular}{l|ccccccccc|c}
        \toprule
        \multicolumn{1}{c|}{Source}  & \multicolumn{1}{c}{0}    & \multicolumn{1}{c}{1}    & \multicolumn{1}{c}{2}    & \multicolumn{1}{c}{3}    & \multicolumn{1}{c}{4}    & \multicolumn{1}{c}{5}    & \multicolumn{1}{c}{6}    & \multicolumn{1}{c}{7}    & \multicolumn{1}{c}{8}    & \multicolumn{1}{|c}{Avg.}          \\ \midrule
        ERM             &0.27                          &0.40                          &0.41                          &0.44                          &0.42                          &0.44                          &0.45                          &0.44                          &0.48                          &0.42               \\
        GroupDRO        &0.33                          &0.53                          &0.38                          &0.48                          &0.47                          &0.51                          &0.47                          &0.48                          &0.49                          &0.46               \\
        DANN            &0.32                          &0.44                          & 0.42                          &0.45                          &0.42                          &0.48                          &0.49                          &0.45                          &0.51                          &0.44               \\
        RSC             & \multicolumn{1}{c}{0.27} & \multicolumn{1}{c}{0.45} & \multicolumn{1}{c}{0.38} & \multicolumn{1}{c}{0.45} & \multicolumn{1}{c}{0.40} &  \multicolumn{1}{c}{0.47}& \multicolumn{1}{c}{0.50} & \multicolumn{1}{c}{0.44} & \multicolumn{1}{c|}{0.53} &  0.43        \\
        ANDMask         & \multicolumn{1}{c}{0.34} & \multicolumn{1}{c}{0.50} & \multicolumn{1}{c}{0.37} & \multicolumn{1}{c}{0.43} & \multicolumn{1}{c}{0.46} &  \multicolumn{1}{c}{0.51}& \multicolumn{1}{c}{0.46} & \multicolumn{1}{c}{0.47} & \multicolumn{1}{c|}{0.52} &  0.45         \\
        {\iclr{InceptionTime}}        & \multicolumn{1}{c}{{\iclr{{\ul 0.52}}}} & \multicolumn{1}{c}{{\iclr{{\ul 0.62}}}} & \multicolumn{1}{c}{{\iclr{{\ul 0.44}}}} & \multicolumn{1}{c}{{\iclr{\textbf{0.69}}}} & \multicolumn{1}{c}{{\iclr{{\ul 0.60}}}} & \multicolumn{1}{c}{{\iclr{0.57}}} & \multicolumn{1}{c}{{\iclr{{\ul 0.66}}}} & \multicolumn{1}{c}{{\iclr{{\ul 0.64}}}} & \multicolumn{1}{c|}{{\iclr{{\ul 0.61}}}} & {\iclr{{\ul 0.59}}} \\
       
        BCResNet        & \multicolumn{1}{c}{0.28} & \multicolumn{1}{c}{0.48} & \multicolumn{1}{c}{0.32} & \multicolumn{1}{c}{0.47} & \multicolumn{1}{c}{0.42} & \multicolumn{1}{c}{0.52} & \multicolumn{1}{c}{0.44} & \multicolumn{1}{c}{0.45} & \multicolumn{1}{c|}{0.49} & 0.43          \\
        NSTrans         & \multicolumn{1}{c}{0.20} & \multicolumn{1}{c}{0.22} & \multicolumn{1}{c}{0.17} & \multicolumn{1}{c}{0.20} & \multicolumn{1}{c}{0.21} & \multicolumn{1}{c}{0.22} & \multicolumn{1}{c}{0.26} & \multicolumn{1}{c}{0.17} & \multicolumn{1}{c|}{0.20}  & 0.21               \\
        Koopa &0.32 &0.42 &0.37 &0.40 &0.42 &0.45 &0.35 &0.43  &0.48  &0.40\\
        \tmlr{Ours+RevIN}            & \multicolumn{1}{c}{0.48} & \multicolumn{1}{c}{0.66} & \multicolumn{1}{c}{0.57} & \multicolumn{1}{c}{0.65} & \multicolumn{1}{c}{0.61} & \multicolumn{1}{c}{0.64} & \multicolumn{1}{c}{0.65} & \multicolumn{1}{c}{0.64} & \multicolumn{1}{c|}{0.63} & 0.62 \\
        MAPU            &0.39                          &0.57                          &0.35                          &0.52                          &0.49                          &0.54                          &0.49                          &0.50                          &0.52                           &0.49               \\
        Diversify       & \multicolumn{1}{c}{0.42} & \multicolumn{1}{c}{{\ul 0.62}} & \multicolumn{1}{c}{0.32} & \multicolumn{1}{c}{0.62} & \multicolumn{1}{c}{0.56} & \multicolumn{1}{c}{0.61} & \multicolumn{1}{c}{0.53} & \multicolumn{1}{c}{0.52} & \multicolumn{1}{c|}{{\ul 0.61}} & 0.53    \\ 
        Chronos       & \multicolumn{1}{c}{{0.32}} & \multicolumn{1}{c}{0.23} & \multicolumn{1}{c}{0.26} & \multicolumn{1}{c}{0.25} & \multicolumn{1}{c}{{0.27}} & \multicolumn{1}{c}{0.23} & \multicolumn{1}{c}{0.21} & \multicolumn{1}{c}{0.24} & \multicolumn{1}{c|}{0.25} & {0.25}    \\ \midrule
        Ours           & \multicolumn{1}{c}{\textbf{0.53}} & \multicolumn{1}{c}{\textbf{0.70}} & \multicolumn{1}{c}{\textbf{0.63}} & \multicolumn{1}{c}{{\ul{0.66}}} & \multicolumn{1}{c}{\textbf{0.64}} & \multicolumn{1}{c}{\textbf{0.67}} & \multicolumn{1}{c}{\textbf{0.65}} & \multicolumn{1}{c}{{\textbf{0.67}}} & \multicolumn{1}{c|}{\textbf{0.62}} & \textbf{0.64} \\
        \bottomrule
        \end{tabular}}}
        \label{tab:one_to_x}
    \end{minipage}
    \hspace{0.01\textwidth}
    \begin{minipage}[t]{0.475\textwidth}
        \centering
        \small
        \vspace{-12pt}
        \caption{\small Classification accuracy for cross-person generalization (Target 1$\sim$4) Sleep-Stage Classification (EEG) and Gesture Recognition (EMG) (\textbf{Best} in bold, {\ul second-best} underlined).}
        \resizebox{\textwidth}{!}{
        \setlength{\tabcolsep}{.4mm}{
        \begin{tabular}{l|ccccc|ccccc}
        \toprule
        \multicolumn{1}{c|}{Application}   & \multicolumn{5}{c|}{Sleep-Stage   Classification}                                                                                      & \multicolumn{5}{c}{Gesture Recognition}                                                                                               \\ \midrule
        \multicolumn{1}{c|}{Target}    & \multicolumn{1}{c}{1} & \multicolumn{1}{c}{2} & \multicolumn{1}{c}{3} & \multicolumn{1}{c|}{4}             & \multicolumn{1}{c|}{Avg.} & \multicolumn{1}{c}{1} & \multicolumn{1}{c}{2} & \multicolumn{1}{c}{3} & \multicolumn{1}{c|}{4}             & \multicolumn{1}{c}{Avg.} \\ \midrule
        ERM       & \multicolumn{1}{c}{0.50}  & \multicolumn{1}{c}{0.46}  & \multicolumn{1}{c}{0.49}  & \multicolumn{1}{c|}{0.45}              & \multicolumn{1}{c|}{0.47}     & \multicolumn{1}{c}{0.45}  & \multicolumn{1}{c}{0.58}  & \multicolumn{1}{c}{0.57}  & \multicolumn{1}{c|}{0.54}              & \multicolumn{1}{c}{0.54}     \\
        GroupDRO  & 0.57                  & 0.56                  & 0.55                  & \multicolumn{1}{c|}{0.59}          & 0.57                      & 0.53                  & 0.36                  & 0.59                  & \multicolumn{1}{c|}{0.45}          & 0.48                     \\
        DANN      & \multicolumn{1}{c}{0.64}  & \multicolumn{1}{c}{0.63}  & \multicolumn{1}{c}{0.69}  & \multicolumn{1}{c|}{0.63}              & \multicolumn{1}{c|}{0.65}     & \multicolumn{1}{c}{0.60}  & \multicolumn{1}{c}{0.66}  & \multicolumn{1}{c}{0.65}  & \multicolumn{1}{c|}{0.64}              & \multicolumn{1}{c}{0.64}     \\
        RSC       & 0.50                  & 0.48                  & 0.52                  & \multicolumn{1}{c|}{0.46}          & 0.49                      & 0.50                  & 0.66                  & 0.64                  & \multicolumn{1}{c|}{0.56}          & 0.59                     \\
        ANDMask   & 0.55                  & 0.50                  & 0.54                  & \multicolumn{1}{c|}{0.57}          & 0.54                      & 0.41                  & 0.54                  & 0.45                  & \multicolumn{1}{c|}{0.39}          & 0.45                     \\
        \iclr{InceptionTime} & \iclr{0.74} & \iclr{0.78} & \iclr{0.72} & \multicolumn{1}{c|}{\iclr{0.80}} & \iclr{0.76} & \iclr{\ul 0.68} & \iclr{0.70} & \iclr{0.72} & \multicolumn{1}{c|}{\iclr{0.69}} & \iclr{0.70} \\

        BCResNet  & {\ul 0.79}            & {\ul 0.82}            & {\ul 0.79}            & \multicolumn{1}{c|}{{\ul 0.81}}    & {\ul 0.80}                & 0.62                  & 0.67                  & 0.65                  & \multicolumn{1}{c|}{0.61}          & 0.64                     \\
        NSTrans   & 0.43                  & 0.37                  & 0.42                  & \multicolumn{1}{c|}{0.35}          & 0.39                      & 0.31                  & 0.34                  & 0.34                  & \multicolumn{1}{c|}{0.32}          & 0.33                     \\
        Koopa &0.58 &0.62 &0.53 &\multicolumn{1}{c|}{0.49} &0.56 &0.47 &0.54 &0.60 &\multicolumn{1}{c|}{0.70} &0.58 \\
        \tmlr{Ours+RevIN}      & 0.82         & 0.79         & 0.78         & \multicolumn{1}{c|}{0.81} & 0.80             & 0.68         & 0.81         & 0.77         & \multicolumn{1}{c|}{0.76}    & 0.76            \\
        MAPU      & 0.69                  & 0.68                  & 0.65                  & \multicolumn{1}{c|}{0.69}          & 0.68                      & 0.64                  & 0.69                  & 0.71                  & \multicolumn{1}{c|}{0.68}          & 0.68                     \\
        Diversify & 0.73                  & 0.76                  & 0.68                  & \multicolumn{1}{c|}{0.77}          & 0.73                      & {\ul 0.68}            & {\ul 0.80}            & {\ul 0.75}            & \multicolumn{1}{c|}{\textbf{0.76}} & {\ul 0.75}               \\ 
        Chronos & 0.53                  & 0.47                  & 0.47                  & \multicolumn{1}{c|}{0.57}          & 0.51                      & {0.49}            & {0.54}            & {0.51}            & \multicolumn{1}{c|}{0.48} & {0.51}               \\ \midrule
        Ours      & \textbf{0.85}         & \textbf{0.80}         & \textbf{0.79}         & \multicolumn{1}{c|}{\textbf{0.83}} & \textbf{0.82}             & \textbf{0.70}         & \textbf{0.82}         & \textbf{0.77}         & \multicolumn{1}{c|}{{\ul 0.75}}    & \textbf{0.76}            \\ \bottomrule
        \end{tabular}}}
        \label{tab:eeg_result}
    \end{minipage}
    \vspace{-15pt}
\end{table}

\subsection{Effectiveness of \method{} across Applications}
\textbf{Human Activity Recognition.} We assess the generalization ability of \method{} framework in two settings: 1) \textit{cross-person generalization}, where the model is trained on $N_S$ ($N_S > 1$) source domains and evaluated on unseen target domains, and 2) \textit{one-person-to-another}, where the model is trained on one person ($N_S=1$) and evaluated on another person. In the cross-person setting, as shown in Table~\ref{tab:har_result}, we find that existing state-of-the-art domain generalization methods, popular in vision-based domains, do not perform as well in time-series classification (such observation is consistent with previous works~\citep{gagnon2022woods, lu2022out}). \textbf{\method{} achieves superior out-of-domain generalization performance across all cases, notably outperforming the best baseline on WISDM, HHAR, and UCIHAR by 3\%, 9\%, and 6\%, respectively}. In the more challenging one-person-to-another setting, as shown in Table~\ref{tab:one_to_x}, we select the HHAR dataset due to its high non-stationarity, and the results show that \textbf{\method{} excels in this setting as well, outperforming Diversify by almost 20\% \iclr{and InceptionTime by almost 8\%}}.

\textbf{Sleep-Stage Classification.}
Next, we evaluate \method{} for \textit{cross-person generalization} in five types of sleep-stage classification using EEG. Past methods~\citep{ragab2023adatime, he2023domain} generally report the lowest performance in their respective settings for SSC tasks indicating its inherent complexity. The results in Table~\ref{tab:eeg_result} (left) show that \textbf{\method{} provides the best performance in all cases, outperforming the best baseline (BCResNet) by 2\% and the time-series domain generalization baseline (Diversify) by almost 11\%}.


\textbf{Gesture Recognition.}
In GR, the used bio-electronic signals are heavily influenced by user behavior and sensor time-varying properties, which correspond to natural non-stationarity. We follow the approach in~\citep{lu2023outofdistribution} to use 6 common classes when conducting evaluations in a \textit{cross-person setting}. The results in Table~\ref{tab:eeg_result} (right) show that \textbf{\method{} again offers the best overall performance}.

\subsection{Further Analysis}
\noindent \textbf{Ablation Study.} We examine the impact of our proposed design components in two cases: WISDM and GR (Table~\ref{tab:ablation}). The first row represents the performance of the complete \method{} framework, with subsequent rows showing performance with specific components detached or modified (details in Section~\ref{app:implementation} of the Appendix). When phase augmentation is omitted (row 2), performance notably decreases (by 11.6\% on WISDM and 5.8\% on GR). This finding along with results from Tables~\ref{tab:gen_results} and~\ref{tab:domain_discrepancy}, is consistent with recent vision domain research showing that Hilbert Transform-based augmentations indeed preserve task-relevant features while generating diverse representations~\citep{wang2025split}. Comparing the results of row 6 with that of row 5 confirms the importance of separate phase-magnitude encoding, aligning with earlier findings in Table~\ref{tab:table_pilot}. Under identical conditions (comparing row 5 with row 1), phase-residual broadcasting boosts the performance of \method{} by 4\%, aligning well with our design motivation that phase can be considered a proxy for non-stationarity. Reintroducing this phase-dictionary deeper in the layers enables the model to learn task-specific representations that are more robust to non-stationarity, making it better equipped to handle unseen non-stationarity in the target domains. Removing the phase-based residual and separate encoding structure (rows 3-7 in Table~\ref{tab:ablation}) results in average performance drops of 10.6\% and 13.7\%, respectively. \textbf{This demonstrates the value of all the components in \method{}}.


\begin{figure*}[t]
  \centering
  \begin{minipage}{0.45\textwidth}
    \centering
    \resizebox{\textwidth}{!}{
    \setlength{\tabcolsep}{0.6mm}
    \begin{tabular}{cccc|cc}
      \toprule
      & \textbf{Phase}        & \textbf{Separate} & \textbf{$F_\mathrm{Pha}$}      & \multicolumn{2}{c}{\textbf{Accuracy}}       \\ \cmidrule{5-6} 
      & \textbf{Augmentation} & \textbf{Encoders} & \textbf{Residual}              & \multicolumn{1}{c|}{\textbf{WISDM}} & \textbf{GR} \\ \midrule
      1 & \cmark                   & \cmark               & \cmark                              & \multicolumn{1}{c|}{$0.86_{\pm0.02}$} & $0.70_{\pm0.01}$        \\
      2 & \xmark                   & \cmark               & \cmark                              & \multicolumn{1}{c|}{$0.81_{\pm0.01}$} & $0.61_{\pm0.01}$        \\
      3 & \cmark                   & \cmark               & \xmark ($F_\mathrm{Mag}$ Res.)       & \multicolumn{1}{c|}{$0.82_{\pm0.01}$} & $0.55_{\pm0.01}$        \\
      4 & \cmark                   & \cmark               & \xmark ($F_\mathrm{Fus}$ Res.)       & \multicolumn{1}{c|}{$0.84_{\pm0.01}$} & $0.60_{\pm0.01}$        \\
      5 & \cmark                   & \cmark               & \xmark                               & \multicolumn{1}{c|}{$0.82_{\pm0.01}$} & $0.65_{\pm0.01}$        \\
      6 & \cmark                   & \xmark (Mag Only)     & \xmark                              & \multicolumn{1}{c|}{$0.73_{\pm0.01}$} & $0.59_{\pm0.03}$        \\
      7 & \cmark                   & \xmark (Mag Only)     & \xmark ($F_\mathrm{Mag}$ Res.)      & \multicolumn{1}{c|}{$0.83_{\pm0.01}$} & $0.66_{\pm0.02}$        \\ 
      8 & \cmark                   & \xmark (Mag-Pha Concat.)     & \xmark    & \multicolumn{1}{c|}{$0.73_{\pm0.03}$} & $0.61_{\pm0.02}$        \\ 
      \bottomrule
    \end{tabular}}
    \vspace{10pt}
    \captionof{table}{\small Ablation of \method{} on WISDM and GR. The inclusion of a component is denoted as \cmark{} and exclusion as \xmark{} (modification).}
    \label{tab:ablation}
  \end{minipage}
  \hspace{0.01\textwidth}
  \begin{minipage}{0.5\textwidth}
    \centering
    \includegraphics[width=\textwidth]{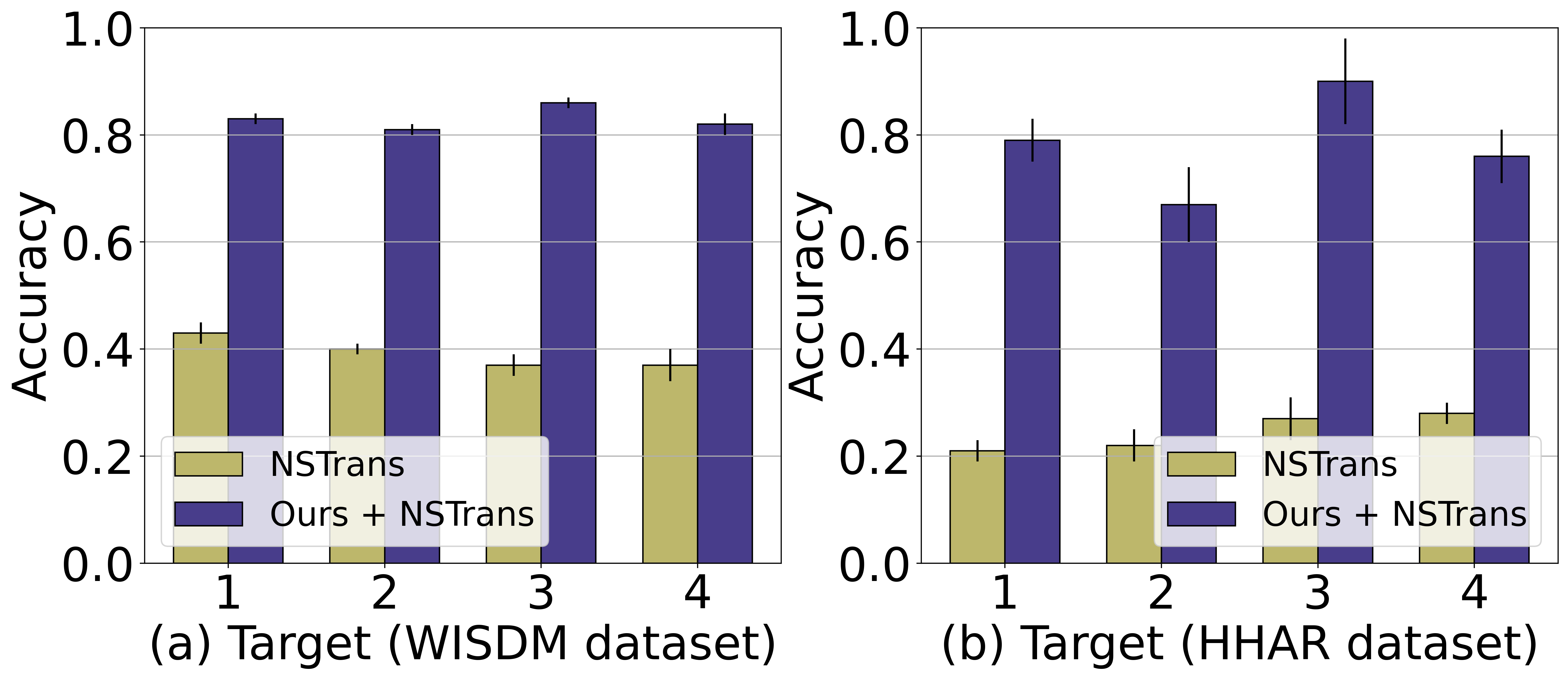} 
    \vspace{-20pt}
    \caption{\small Improvement in average cross-person generalization performance of NSTrans in (a) WISDM from 0.40 to 0.83 and (b) HHAR from 0.25 to 0.78, with our phase-driven approach.}   
    \label{fig:ablation_2}
  \end{minipage}
  \vspace{-15pt}
\end{figure*}

\noindent\textbf{General Applicability of \method{}.} We demonstrate the general applicability and flexibility of \method{} by incorporating three proposed design elements into the NSTrans model for classification: phase-based augmentation for nonstationarity diversification, separate magnitude-phase feature encoding, and phase incorporation with a residual connection. Significant performance improvements on WISDM and HHAR (Figure~\ref{fig:ablation_2}) highlight the effectiveness of these designs and the flexibility of \method{} with different backbone models. Further details are provided in Section~\ref{app:NST_ablation} of the Appendix. \tmlr{Additionally, our experiments integrating statistical modules for nonstationary time series, such as RevIN, into \method{} yield performance that is on par with or slightly below that of \method{} alone. This corroborates recent findings~\citep{chen2025fictsc} that incorporating RevIN offers no clear advantage for classification tasks. Overall, the nonstationary baselines—such as NSTrans, Koopa, and RevIN—originally designed for forecasting tasks—do not perform competitively on classification tasks (Tables~\ref{tab:har_result},~\ref{tab:one_to_x},~\ref{tab:eeg_result}), which is unsurprising~\citep{chen2025fictsc, wang2024tssurvey}; however, we include them for completeness of nonstationary time-series baselines.}

\noindent\textbf{Visualization.} We provide t-SNE visualizations of our method (\method{}), Diversify, and BCResNet on the HHAR dataset for left-out domains in scenario 1 (Figure~\ref{fig:tsne}). The plots depict out-of-domain data, with colors representing the six activity classes, showcasing \method{}'s superior separability without domain labels or target domain data. Further details are in Section~\ref{subsec:app_vis} of the Appendix.

\begin{figure}[htbp]
\centering
\includegraphics[width=0.9\linewidth]{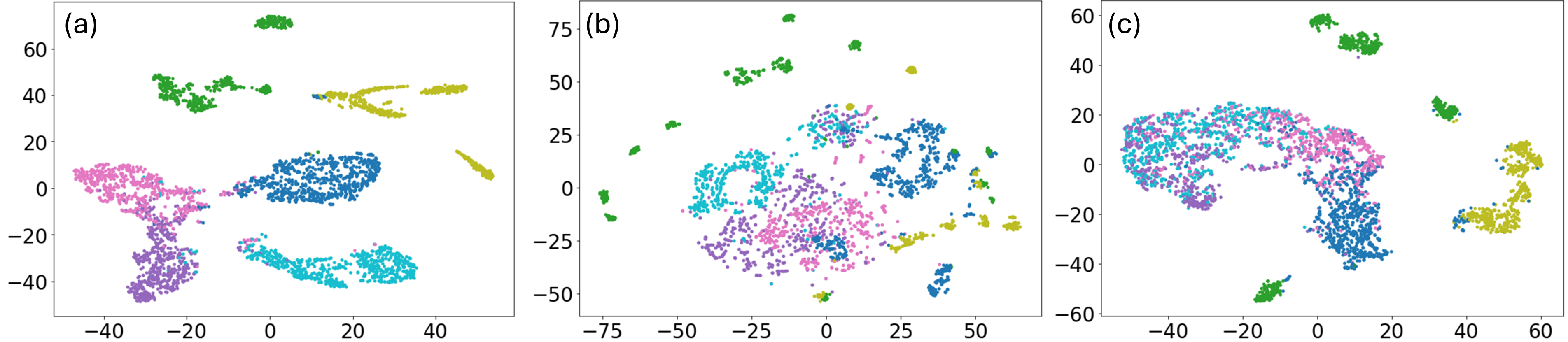}
\caption{\small t-sne visualization for (a) \method{},  (b) Diversify, and (c) BCResNet for HHAR scenario 1.}
\label{fig:tsne}
\end{figure}

\begin{wrapfigure}{r}{0.4\textwidth}
    \begin{center}
        \vspace{-15pt}
        \includegraphics[width=0.4\textwidth]{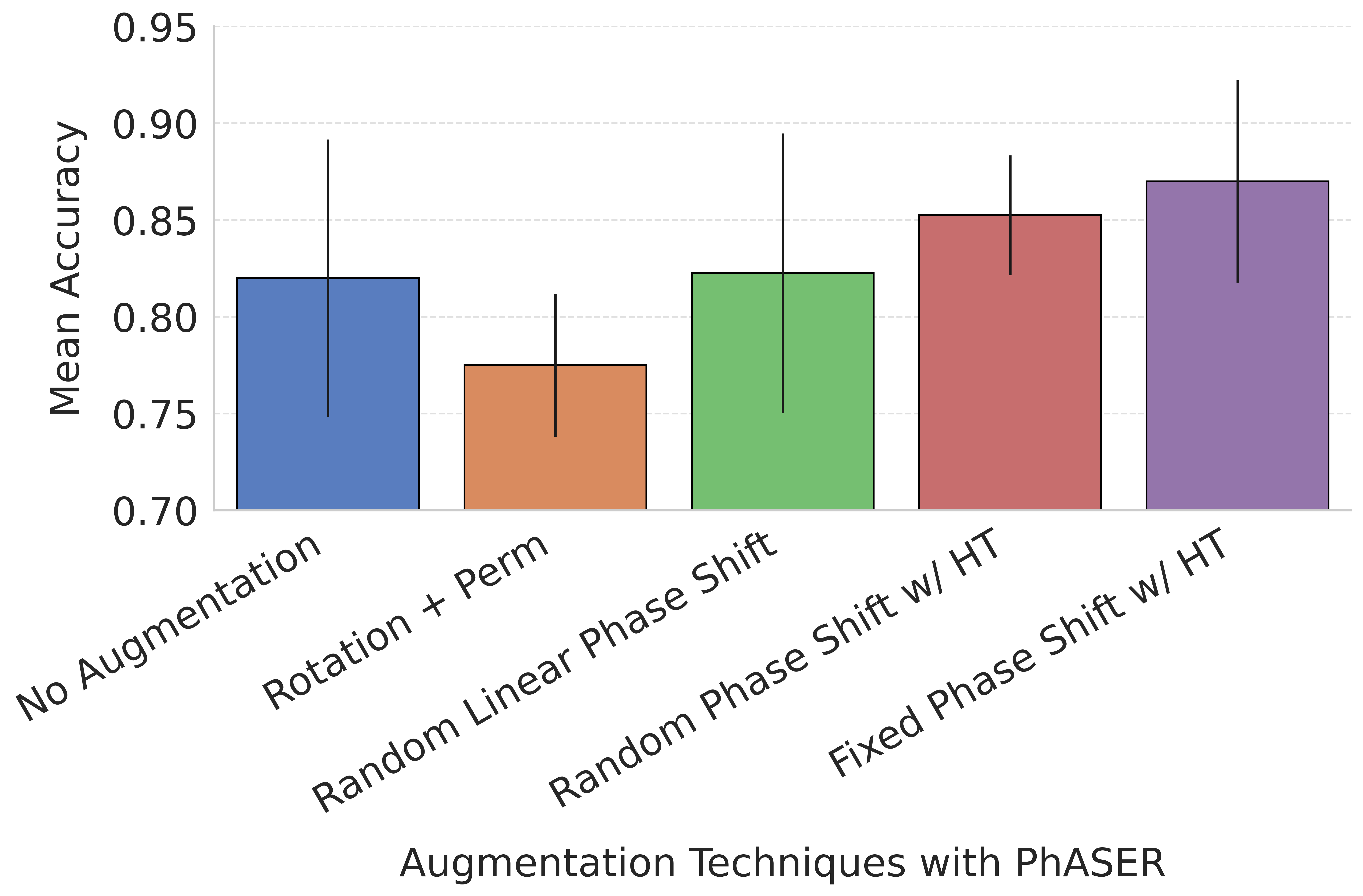}
        \vspace{-15pt}
        \caption{\small Brief comparison between different augmentation strategies with \method{}.}
        \vspace{-20pt}
        \label{fig:aug_pha}      
    \end{center}
\end{wrapfigure}
\noindent\textbf{\method{} with Other Augmentation Strategies.}
Here, we explore a random phase augmentation-variant using Hilbert Transform under certain signal periodicity assumptions (more details in Section~\ref{subsub:randHT} in the Appendix). Additionally, we adopt traditional augmentations like rotation, permutation, and circular time-shift as proposed by past works~\citep{qin2023generalizable, um2017data}; on the HHAR dataset with the \method{} framework. The results are illustrated in Figure~\ref{fig:aug_pha} and implementation details are provided in Section~\ref{subsec:app_analyses} of the Appendix. The rotation and permutation augmentations perform ~5\% worse than the no augmentation scenario in this case possibly due to semantic corruption~\citep{mintun2021interaction}. Time-shift may be viewed as a linear phase shift for a pure sinusoid (for example, for an input $\mathbf{x}(t) = sin(\omega t)$, a time-shifted version by $T$ time units is given by  $\mathbf{x}(t-T) = sin(\omega (t-T))$ which incurs a phase shift $\phi = \omega T$), however, most real-world signals are not stationary or pure tone. In such a case, a time shift introduces varied phase shifts for each frequency, and past works like~\citet{umapathy2010phase} expose the difficulty in the correct choice of a time-shift amount for retaining the signal's spectral properties of interest. This highlights the overall motivation of Hilbert Transform to provide an accurate phase shift of all frequency components by -$\pi/2$ without any explicit signal characterization. Our further exploration to induce random phase shift using HT does not show any particular advantage, hence we stick to the choice of using the fixed phase-shift augmentation followed by other phase-anchored components for domain generalization in nonstationary time-series classification tasks in the proposed \method{} framework.

\vspace{-5pt}

\section{Related Works}
\label{sec:related_works}
\noindent \textbf{Nonstationary Time-Series Analysis.} In real-world scenarios, nonstationary time-series data pose challenges for forecasting and classification~\citep{time_servey_1, time_classification_survey, timeseriesdecomposition}. While various solutions exist, including Bayesian models, normalization techniques, recurrent neural networks, and transformers, systematic works addressing non-stationarity's impact on time-series classification are limited~\citep{liang2005bayesian, chen2021bayesian, normalization_1, chang2021subspectral, normoalization_2, RNN_1, du2021adarnn, non_transformer_1, non_transformer_2}. \tmlr{Our study builds upon prior empirical findings and, to the best of our knowledge, is the first to investigate the impact of non‑stationarity on generalizable time‑series classification.~\citep{non_classification_1, non_classification_2, non_classification_3}.}
\vspace{-4pt}

\noindent \textbf{Domain Generalizable Learning.} 
 While domain generalizable learning is well-established in visual data~\citep{DG_survey}, applying it to time-series data poses unique challenges. Traditional approaches like data augmentation~\citep{image_augmentation_1} and domain discrepancy minimization~\citep{image_distance_1, image_distance_2} face limitations in time series due to less flexible augmentation and broader domain concepts~\citep{time_augmentation, concept_domain}. Some studies explore domain-invariant representation learning~\citep{lu2023outofdistribution, wang2023domino} and learnable data transformation~\citep{qin2023generalizable}. We highlight the non-stationarity of time series and its role in domain discrepancy, drawing on evidence from the visual domain regarding the importance of phase~\citep{kim2023domain, xu2021fourier}. A handful of works hint at phase's role in domain-invariant learning in time-series applications~\citep{lu2022domain}, and there is evidence in traditional signal processing that phase-only information is sufficient to reconstruct a signal~\citep{masuyama2023signal, jacques2020keep, jacques2021importance}. Inspired by these insights, we propose a novel phase-driven framework with an augmentation module and a phase-anchored representation learning to address non-stationarity and minimize domain discrepancy.

\tmlr{\noindent \textbf{Spectral Features for Time-Series Analysis.} Spectral representation of time series data is generally used for feature extraction~\citep{zhang2022tfad, woo2022cost, ma2024survey, yi2025survey,mohapatra2023effect} and compression~\citep{zhou2022film, rippel2015spectral}, which effectively captures the periodicity and global dependencies in the data. Commonly, Discrete Fourier Transform is used to obtain these spectral features~\citep{yang2022unsupervised, zhang2022self, wu2023timesnet}. However, for nonstationary signals where the spectral content is also time-dependent, a time-frequency representation is more suitable, and Short-term Fourier Transform~\citep{li2021units, yao2019stfnets}, Discrete Wavelet Transform~\citep{wang2018multilevel, khan2018learning}, and Empirical Mode Decomposition~\citep{cai2025ma, van2023tutorial} are used in such cases. More recently, time-series foundation models have also explored spectral representation as a tokenization scheme~\citep{Masserano2025}. While most prior work leverages the magnitude response, some works especially in audio denoising~\citep{paliwal2011importance} and beamforming applications have demonstrated the benefit of incorporating phase information. Other works on time series have shown promising phase-based augmented views for contrastive learning settings~\citep{qian2022makes, liu2023temporal, demirel2023finding}. Our work contributes to this line of investigation by demonstrating a design paradigm anchored in phase to learn generalizable time series representations.
}

\section{Limitations and Future Work}\label{sec:lim}
\method{} achieves domain generalization without explicit domain characterization or accessing target domain samples, by diversifying non-stationarity and anchoring design to signal's phase information. Our evaluation is currently limited to categorical tasks due to a scarcity of publicly available datasets with distinct domain definitions for continuous tasks like regression. Our future work aims to develop a universal representation for generalization across various tasks in dynamic conditions~\citep{mohapatra2025maestro, mohapatra24_interspeech}.

\section{Conclusion}
\label{sec:conclusion}

We address the generalization problem for nonstationary time-series classification using a phase-driven approach without accessing domain labels of source domains or samples from unseen distributions. Our approach conducts phase-based augmentation, treats time-varying magnitude and phase as separate modalities, and incorporates a phase-derived residual connection in the network. We support our design choices with rigorous theoretical and empirical evidence. Our method demonstrates significant improvement over baselines across \iclr{13} benchmarks on 5 real-world datasets.

\section*{Acknowledgements}
We gratefully acknowledge support in part from the National Science Foundation under Grants 2038853, 2324936, and 2328973.

\bibliography{main}
\bibliographystyle{tmlr}

\appendix
\onecolumn
\section*{\Large \centering{Appendix}}
This Appendix includes additional details for the paper\emph{``Phase-driven Domain Generalizable Learning for Nonstationary Time Series''}, including the reproducibility statement, theoretical proofs (Section~\ref{sec_proof}), additional details of \method{} (Section~\ref{app:nw_design}), detailed dataset introduction (Section~\ref{app:dataset}), implementation details (Section~\ref{app:implementation}), and detailed results (Section~\ref{app:add_results}) of main experiments. 
\appendix

\section{Theoretical Proofs}\label{app_theorem}
\label{sec_proof}
\begin{proof}[\textbf{Lemma}~\ref{lemma_bounding_beta}]
\emph{Let a set $S$ of source domains $S = \{\mathcal{S}_i\}_{i=1}^{N_S}$. A convex hull $\Lambda_S$ is considered here that consists of mixture distributions $\Lambda_S = \{\Bar{\mathcal{S}}: \Bar{\mathcal{S}}(\cdot) = \sum_{i=1}^{N_S} \pi_i \mathcal{S}_i (\cdot), \pi_i \in \Delta_{N_S - 1}\}$, where $\Delta_{N_S - 1}$ is the ($N_S \!-\! 1$)-th dimensional simplex. Let $\beta_q(\mathcal{S}_i \| \mathcal{S}_j) \leq \epsilon$ for $\forall i, j \in [N_S]$, we have the following relation for the $\beta$-Divergence between any pair of two domains $\mathcal{D}^\prime, \, \mathcal{D}^{\prime \prime} \in \Lambda_S$ in the convex hull,}
\begin{align}
\beta_q(\mathcal{D}^\prime \| \mathcal{D}^{\prime \prime}) \leq \epsilon.
\end{align}

\textbf{Proof.} Suppose two unseen domains $\mathcal{D}^\prime$ and $\mathcal{D}_{\prime \prime}$ on the convex hull $\Lambda_S$ of $N_S$ source domains with support $\Omega$. More specifically, let these two domains be $\mathcal{D}^\prime = \sum_{k=1}^{N_S}\pi_k\mathcal{S}_k(\cdot)$ and $\mathcal{D}^{\prime \prime} = \sum_{l=1}^{N_S}\pi_l\mathcal{S}_l(\cdot)$, then the $\beta$-Divergence between $\mathcal{D}^\prime$ and $\mathcal{D}^{\prime \prime}$ is
\begin{align}
\beta_q(\mathcal{D}^\prime \| \mathcal{D}^{\prime \prime}) = 2^{\frac{q-1}{q}\mathrm{RD}_q(\mathcal{D}^\prime \| \mathcal{D}^{\prime \prime})}.
\end{align}
Let us consider the part of R\'{e}nyi Divergence as follows,
\begin{equation}
\begin{aligned}
\mathrm{RD}_q(\mathcal{D}^\prime \| \mathcal{D}^{\prime \prime}) &= \frac{1}{q-1}\ln{\int_\Omega \left[\mathcal{D}^\prime(x)\right]^q \left[\mathcal{D}^{\prime \prime}(x)\right]^{1-q}dx} \\
&= \frac{1}{q-1}\ln{\int_\Omega \left[\sum_{k=1}^{N_S}\pi_k\mathcal{S}_k(x)\right]^q \left[\sum_{l=1}^{N_S}\pi_l\mathcal{S}_l(x)\right]^{1-q}dx} \\
&= \frac{1}{q-1}\ln{\int_\Omega \left[\sum_{k=1}^{N_S}\sum_{l=1}^{N_S}\pi_k\pi_l\mathcal{S}_k(x)\right]^q \left[\sum_{k=1}^{N_S}\sum_{l=1}^{N_S}\pi_k\pi_l\mathcal{S}_l(x)\right]^{1-q}dx} \\
&= \frac{1}{q-1}\ln{\sum_{k=1}^{N_S}\sum_{l=1}^{N_S}\pi_k\pi_l \int_\Omega \left[\mathcal{S}_k(x)\right]^q \left[\mathcal{S}_l(x)\right]^{1-q}dx} \\
&\leq \frac{1}{q-1}\ln{\sum_{k=1}^{N_S}\sum_{l=1}^{N_S}\pi_k\pi_l \max\limits_{k, l \in [N_S]}\int_\Omega \left[\mathcal{S}_k(x)\right]^q \left[\mathcal{S}_l(x)\right]^{1-q}dx} \\
&= \frac{1}{q-1}\ln{\max\limits_{k, l \in [N_S]}\int_\Omega \left[\mathcal{S}_k(x)\right]^q \left[\mathcal{S}_l(x)\right]^{1-q}dx}.
\end{aligned}
\end{equation}
According to the given assumption that $\beta_q(\mathcal{S}_i \| \mathcal{S}_j) \leq \epsilon$ for $\forall i, j \in [N_S]$, we have,
\begin{align}
\mathrm{RD}_q(\mathcal{D}^\prime \| \mathcal{D}^{\prime \prime}) \leq \frac{1}{q-1}\ln{\max\limits_{k, l \in [N_S]}\int_\Omega \left[\mathcal{S}_k(x)\right]^q \left[\mathcal{S}_l(x)\right]^{1-q}dx} = \max\limits_{k, l \in [N_S]}\mathrm{RD}_q(\mathcal{S}_k \| \mathcal{S}_l) \leq \frac{q}{q-1}\log_2\epsilon.
\end{align}
Thus $\beta_q(\mathcal{D}^\prime \| \mathcal{D}^{\prime \prime}) \leq \epsilon$.
\end{proof}

\begin{proof}[\textbf{Theorem}~\ref{theorem_risk}]
\emph{Let $\mathcal{H}$ be a hypothesis space built from a set of source time-series domains $S = \{\mathcal{S}_i\}_{i=1}^{N_S}$ with the same value range (i.e., the supports of these source domains are the same). Suppose $q>0$ is a constant, for any unseen time-series domain $\mathcal{D}_\mathrm{U}$ from the convex hull $\Lambda_S$, we have its closest element $\mathcal{D}_{\Bar{\mathrm{U}}}$ in $\Lambda_S$, i.e., $\mathcal{D}_{\Bar{\mathrm{U}}} = \arg \min\limits_{\pi_1, ..., \pi_{N_S}} \beta_q(\mathcal{D}_{\Bar{\mathrm{U}}} \| \sum_{i=1}^{N_S}\pi_i\mathcal{S}_i)$. Then the risk of $\mathcal{D}_\mathrm{U}$ on any $\rho$ in $\mathcal{H}$ is,
\begin{align}
R_{\mathcal{D}_\mathrm{U}}[\rho] \leq \frac{1}{2}\mathrm{d}_{\mathcal{D}_\mathrm{U}}(\rho) + \epsilon \cdot \left[\mathrm{e}_{\mathcal{D}_{\Bar{\mathrm{U}}}}(\rho) \right]^{1-\frac{1}{q}},
\end{align}
where $\mathrm{d}_{\mathcal{D}}(\rho)$ and $\mathrm{e}_{\mathcal{D}}(\rho)$ are an expected disagreement and an expected joint error of a domain $\mathcal{D}$, respectively, and they are defined as follows,
\begin{align}
\mathrm{d}_\mathcal{D}(\rho) &= \mathbb{E}_{\mathbf{x}\sim\mathcal{D}_\mathbf{x}} \mathbb{E}_{h \sim \rho} \mathbb{E}_{h^\prime \sim \rho} \mathrm{I}[h(\mathbf{x}) \neq h^\prime(\mathbf{x})], \\
\mathrm{e}_\mathcal{D}(\rho) &= \mathbb{E}_{(\mathbf{x}, y)\sim\mathcal{D}} \mathbb{E}_{h \sim \rho} \mathbb{E}_{h^\prime \sim \rho} \mathrm{I}[h(\mathbf{x}) \neq y]\mathrm{I}[h^\prime(\mathbf{x}) \neq y],
\end{align}
where $\mathrm{I}[\cdot]$ is an indicator function with $\mathrm{I}[\mathrm{True}]=1$ and $\mathrm{I}[\mathrm{False}]=0$. The $\epsilon$ in Eq.~(\ref{eq_upperbound}) is a value larger than the maximum $\beta$-Divergence in $\Lambda_S$,
\begin{align}
\epsilon &\geq \max\limits_{i, j\in[N_S], i \neq j, t\in [0, +\infty)} 2^{\frac{q-1}{q}\mathrm{RD}_q(\mathcal{S}_i(t) \| \mathcal{S}_j(t))},
\end{align}
where
\begin{equation}
\begin{aligned}
\mathrm{RD}_q(\mathcal{S}_i(t) \| \mathcal{S}_j(t)) =\frac{q(\mu_{j, t}-\mu_{i, t})^2}{2(1-q)\sigma_{i, t}^2 + 2\sigma_{j, t}^2} + \frac{\ln{\frac{\sqrt{(1-q)\sigma_{i, t}^2 + \sigma_{j, t}^2}}{\sigma_{i, t}^{1-q}\sigma_{j, t}^q}}}{1-q} 
\end{aligned}
\end{equation}}

\textbf{Proof.} According to Theorem 3 of~\citet{divergence}, if $\mathcal{H}$ is a hypothesis space, and $\mathcal{S}, \mathcal{T}$ respectively are the source and target domains. For all $\rho$ in $\mathcal{H}$,
\begin{align}
R_\mathcal{T}[\rho] \leq \frac{1}{2}\mathrm{d}_\mathcal{T}(\rho) + \beta_q(\mathcal{T} \| \mathcal{S}) \cdot \left[\mathrm{e}_\mathcal{S}(\rho) \right]^{1-\frac{1}{q}} + \eta_{\mathcal{T} \setminus \mathcal{S}},
\label{eq_standard_bound}
\end{align}
where $\eta_{\mathcal{T} \setminus \mathcal{S}}$ denotes the distribution of $(\mathbf{x}, y) \sim \mathcal{T}$ conditional to $(\mathbf{x}, y) \in \mathrm{SUPP}(\mathcal{S})$. But because it is hardly conceivable to estimate the joint error $\mathrm{e}_{\mathcal{T} \setminus \mathcal{S}}(\rho)$ without making extra assumptions,~\citet{divergence} defines the worst risk for this unknown area,
\begin{align}
\eta_{\mathcal{T} \setminus \mathcal{S}} = \mathrm{Pr}_{(\mathbf{x}, y) \sim \mathcal{T}}\left[(\mathbf{x}, y) \notin \mathrm{SUPP}(\mathcal{S}) \right] \sup_{h \in \mathcal{H}} R_{\mathcal{T} \setminus \mathcal{S}} [h].
\end{align}
In Theorem~\ref{theorem_risk}, all domains from the convex hull $\Lambda_S$ have the same value range, in other words, their supports are continuous and fully overlapped. In this case, $\mathrm{Pr}_{(\mathbf{x}, y) \sim \mathcal{T}}\left[(\mathbf{x}, y) \notin \mathrm{SUPP}(\mathcal{S}) \right] = 0$, i.e., $\eta_{\mathcal{T} \setminus \mathcal{S}} = 0$.

With Eq.~(\ref{eq_standard_bound}), if the target domain $\mathcal{T}$ is assumed as an unseen domain $\mathcal{D}_\mathrm{U}$ from the convex hull $\Lambda_S$, and we select its closest element $\mathcal{D}_{\Bar{\mathrm{U}}} = \arg \min\limits_{\pi_1, ..., \pi_{N_S}} \beta_q(\mathcal{D}_{\Bar{\mathrm{U}}} \| \sum_{i=1}^{N_S}\pi_i\mathcal{S}_i)$ and regard it as the source domain, we can derive Eq.~(\ref{eq_standard_bound}) into
\begin{align}
R_{\mathcal{D}_\mathrm{U}}[\rho] \leq \frac{1}{2}\mathrm{d}_{\mathcal{D}_\mathrm{U}}(\rho) + \beta_q(\mathcal{D}_\mathrm{U} \| \mathcal{D}_{\Bar{\mathrm{U}}}) \cdot \left[\mathrm{e}_{\mathcal{D}_{\Bar{\mathrm{U}}}}(\rho) \right]^{1-\frac{1}{q}} + 0.
\end{align}
Then according to Lemma~\ref{lemma_bounding_beta}, as both $\mathcal{D}_\mathrm{U}$ and $\mathcal{D}_{\Bar{\mathrm{U}}}$ are from the convex hull $\Lambda_S$, $\beta_q(\mathcal{D}_\mathrm{U} \| \mathcal{D}_{\Bar{\mathrm{U}}}) \leq \epsilon$. As for acquiring Eq.~(\ref{eq_rd_details}), we only need to substitute the time series domains in the form of random variable distributions into the R\'{e}nyi Divergence.

\end{proof}

\begin{theorem}[\textbf{Non-stationarity Change of Hilbert Transform}]
\label{theorem_hilbert}
Suppose there are $M_\mathcal{D}$ samples (observations) available for a nonstationary time-series domain $\mathcal{D}_\mathbf{x}$, and each sample $\mathbf{x}_i=\{x_{i, 0}, ..., x_{i, t}, ...\}$ is characterized by its deterministic function, i.e., $\mathbf{x}_i(t) = x_{i, t} = \mathrm{x}_i(t)$, $i\in[1, M_\mathcal{D}]$. If we apply Hilbert Transformation $\mathrm{HT}(\mathbf{x}(t)) = \widehat{\mathbf{x}}(t) = \int_{-\infty}^{\infty}\mathrm{x}(\tau)\frac{1}{\pi(t-\tau)}d\tau$ to augment these time-series samples, the nonstationary statistics of augmented samples are different from the original ones,
$
\mathrm{Pr}_{\mathbf{x} \sim \widehat{\mathcal{D}}_\mathbf{x}}(\mathbf{x})(t) \neq \mathrm{Pr}_{\mathbf{x} \sim \mathcal{D}_\mathbf{x}}(\mathbf{x})(t).
$

\end{theorem}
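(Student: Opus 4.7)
The plan is to reduce the statement to a claim about two first-order statistics, then exploit the spectral characterization of the Hilbert transform from Eq.~(\ref{eq:ht}) to show those statistics generically differ. First, I would invoke Definition~\ref{definition_nonstationary} to note that the time-slice distribution $\mathrm{Pr}_{\mathbf{x}\sim\mathcal{D}_\mathbf{x}}(\mathbf{x})(t)$ is parameterized by $(\mu_t,\sigma_t)$ with a fixed standardized component $z$; hence showing the distributions differ reduces to exhibiting at least one $t$ where $(\widehat{\mu}_t,\widehat{\sigma}_t)\neq(\mu_t,\sigma_t)$. Since $\mathrm{HT}$ is linear on the deterministic sample functions $\mathrm{x}_i$, I would take expectations through the integral defining $\widehat{\mathbf{x}}(t)$ to obtain $\widehat{\mu}_t=\mathrm{HT}(\mu_\cdot)(t)$, and use $\Var[\widehat{\mathbf{x}}(t)] = \frac{1}{M_\mathcal{D}}\sum_i\bigl(\widehat{\mathrm{x}}_i(t)-\widehat{\mu}_t\bigr)^2$ together with the identity $\widehat{\mathrm{x}}_i(t)-\widehat{\mu}_t = \mathrm{HT}(\mathrm{x}_i-\mu_\cdot)(t)$ to express $\widehat{\sigma}_t^2$ as a quadratic functional of the HT-transformed centered samples.

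Next, I would port the equality $\widehat{\mu}_t=\mu_t$ to the frequency domain using Eq.~(\ref{eq:ht}): it becomes $\bigl(-i\,\mathrm{sgn}(\xi)-1\bigr)\mathcal{F}\{\mu\}(\xi)=0$ almost everywhere. Because the prefactor vanishes only on a set of measure zero, this forces $\mathcal{F}\{\mu\}\equiv 0$, hence $\mu\equiv 0$. Symmetrically, if $\widehat{\sigma}_t=\sigma_t$ for every $t$, then applying the same spectral argument to the centered signals (or equivalently their autocovariance) forces each $\mathrm{x}_i-\mu_\cdot$ to have vanishing Fourier content, which would collapse all samples to a constant and contradict Definition~\ref{definition_nonstationary}, since non-stationarity requires either $\mu_t\neq\mu_{t+L}$ or $\sigma_t\neq\sigma_{t+L}$ for some shift $L$. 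Thus at least one of the two statistics must change at some $t$, which suffices.

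The main obstacle I anticipate is the degenerate corner where $\mu\equiv 0$ but the variance is nonzero and somehow preserved pointwise by $\mathrm{HT}$. Handling this cleanly requires working at the level of the empirical second-moment function $m_2(t)=\tfrac{1}{M_\mathcal{D}}\sum_i \mathrm{x}_i(t)^2$ and its HT-image: I would show that $\widehat{m}_2(t) = \tfrac{1}{M_\mathcal{D}}\sum_i \widehat{\mathrm{x}}_i(t)^2$ satisfies $\widehat{m}_2=m_2$ only if every $\mathrm{x}_i$ is invariant under $\mathrm{HT}$ up to sign, which, combined with the involution property $\mathrm{HT}^2=-\mathrm{Id}$, forces each $\mathrm{x}_i\equiv 0$ and again contradicts the non-degeneracy implicit in Definition~\ref{definition_nonstationary}. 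A secondary technical subtlety is that Eq.~(\ref{eq:ht}) is defined as a principal-value integral, so I would briefly justify passing the finite sum $\sum_i$ inside the transform and interpret the Fourier arguments in the tempered-distribution sense to cover the typical non-$L^1$ sample functions encountered in practice.
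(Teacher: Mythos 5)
Your main line of argument is essentially the paper's: both proofs reduce the claim to the first moment, use linearity of $\mathrm{HT}$ to get $\widehat{\mu}=\mathrm{HT}(\mu)$, assume $\widehat{\mu}_t=\mu_t$ for all $t$ toward a contradiction, and conclude $\mu\equiv 0$, contradicting the stipulation in Definition~\ref{definition_nonstationary} that $\mu_t$ is not always zero. The only real difference is the closing mechanism: you observe that the Fourier multiplier $-i\,\mathrm{sgn}(\xi)$ never equals $1$, so $(-i\,\mathrm{sgn}(\xi)-1)\mathcal{F}\{\mu\}=0$ forces $\mathcal{F}\{\mu\}\equiv 0$; the paper instead proves the orthogonality property $\int \mathrm{u}\,\widehat{\mathrm{u}}\,dt=0$ via Plancherel and substitutes $\widehat{\mathrm{u}}=\mathrm{u}$ to get $\int \mathrm{u}^2\,dt=0$. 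Your multiplier argument is slightly more direct (it needs no integrability for the orthogonality identity beyond what Plancherel already demands) and is in fact stronger, since the multiplier is \emph{nowhere} equal to $1$, not merely off a null set.

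One caution on your contingency branch for the degenerate case $\mu\equiv 0$: the claim that $\widehat{m}_2=m_2$ forces each $\mathrm{x}_i$ to be $\mathrm{HT}$-invariant up to sign is false. A sum of squares can be preserved by a rotation of the sample vector without any individual term being preserved: with $\mathrm{x}_1=\cos(\omega t)$, $\mathrm{x}_2=-\cos(\omega t)$, $\mathrm{x}_3=\sin(\omega t)$, $\mathrm{x}_4=-\sin(\omega t)$ one has $\mu\equiv 0$ and $\widehat{m}_2=m_2$ identically, yet no sample is fixed up to sign. (That particular configuration has constant $\mu_t,\sigma_t$ and so is excluded by the nonstationarity hypothesis, but it shows your lemma cannot be invoked as stated, and closing the gap for genuinely nonstationary zero-mean ensembles would require more work, e.g.\ Bedrosian-type control of $\mathrm{HT}$ on modulated signals.) The paper sidesteps this entirely by reading Definition~\ref{definition_nonstationary} as guaranteeing $\mu\not\equiv 0$, so the mean branch alone suffices; you can do the same and drop the second-moment branch.
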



\begin{proof}


\textbf{Proof.} According to Definition~\ref{definition_nonstationary}, the statistics of the non-stationary time-series domain consist of non-stationary mean and variance. To prove Theorem~\ref{theorem_hilbert}, we only need to prove that the mean of the time-series domain changes after applying Hilbert Transformation (HT). HT can only be conducted on deterministic signals, thus we use the empirical statistics of $M_\mathcal{D}$ samples to approximate the real statistics, 
\begin{align}
\mathbb{E}_{\mathbf{x} \sim \widehat{\mathcal{D}}_\mathbf{x}}(\mathbf{x})(t) = \sum_{i=1}^{M_\mathcal{D}} \widehat{\mathbf{x}_i}(t) = \widehat{\mu}_t, \quad \mathbb{E}_{\mathbf{x} \sim \mathcal{D}_\mathbf{x}}(\mathbf{x})(t) = \sum_{i=1}^{M_\mathcal{D}} \mathbf{x}_i(t) = \mu_t.
\end{align}
According to the standard definition of HT~\citep{king2009hilbert} and the linear property of integral operation, we have
\begin{equation}
\begin{aligned}
\mathbb{E}_{\mathbf{x} \sim \widehat{\mathcal{D}}_\mathbf{x}}(\mathbf{x})(t) = \sum_{i=1}^{M_\mathcal{D}} \widehat{\mathbf{x}_i}(t) = \sum_{i=1}^{M_\mathcal{D}} \int_{-\infty}^{\infty} \mathrm{x}_i(\tau)\frac{1}{\pi(t-\tau)}d\tau &= \int_{-\infty}^{\infty}\sum_{i=1}^{M_\mathcal{D}}\left[\mathrm{x}_i(\tau)\frac{1}{\pi(t-\tau)}d\tau\right] \\
&= \frac{1}{\pi}\int_{-\infty}^{\infty}\frac{\mu_\tau}{t-\tau}d\tau.
\end{aligned}
\label{eq_ht_mean}
\end{equation}

To interpret Eq.~(\ref{eq_ht_mean}), we can assume there is a new signal $\mathbf{s}=\{\mu_0, ..., \mu_t, ...\}$ with the deterministic function $\mu_t = \mathrm{u}(t)$, and we next apply proof by contradiction for the following proof. Suppose the non-stationary statistics of the original and HT-transformed samples are identical, i.e., $\mathbb{E}_{\mathbf{x} \sim \widehat{\mathcal{D}}_\mathbf{x}}(\mathbf{x})(t) = \mathbb{E}_{\mathbf{x} \sim \mathcal{D}_\mathbf{x}}(\mathbf{x})(t)$, we can derive the following formula,
\begin{align}
\frac{1}{\pi}\int_{-\infty}^{\infty}\frac{\mathrm{u}(\tau)}{t-\tau}d\tau = \mathrm{u}(t),
\label{eq_mean_no_change}
\end{align}
which indicates that the HT-transformed $\widehat{\mathbf{s}}$ is identical to the original $\mathbf{s}$. HT has a property called Orthogonality~\citep{king2009hilbert}: if $\mathbf{x}(t)$ is a real-valued energy signal, then $\mathbf{x}(t)$ and its HT-transformed signal $\widehat{\mathbf{x}}(t)$ are orthogonal, i.e., 
\begin{align}
\int_{-\infty}^{\infty} \mathbf{x}(t)\widehat{\mathbf{x}}(t)dt = 0.
\end{align}
To prove the property of Orthogonality, we need to use Plancherel's Formula,
\begin{theorem}[Plancherel's Formula~\citep{lang1985plancherel}]
Suppose that $u, v \in L^1(\mathbb{R}) \cap L^2(\mathbb{R})$, then
\begin{align}
\int_{-\infty}^{\infty}u(t)\overline{v(t)}dt = \frac{1}{2\pi}\int_{-\infty}^{\infty} \mathcal{F}u(\omega)\overline{\mathcal{F}v(\omega)}d\omega,
\end{align}
where $L^1(\cdot), L^2(\cdot)$ denote the $L^p$ spaces with $p=1, p=2$ respectively, $\mathbb{R}$ represents the real-valued space, and $\mathcal{F}$ denotes the Plancherel transformation. 
\end{theorem}
With Plancherel's Formula, we can prove the property of Orthogonality as follows,
\begin{equation}
\begin{aligned}
\int_{-\infty}^{\infty} \mathbf{x}(t)\widehat{\mathbf{x}}(t)dt &= \frac{1}{2\pi}\int_{-\infty}^{\infty} \mathcal{F}(\omega)(-i\,\mathrm{sgn}(\omega)\mathcal{F}(\omega))^*d\omega\\
&= \frac{i}{2\pi}\int_{-\infty}^{\infty} \mathrm{sgn}(\omega)\mathcal{F}(\omega)\mathcal{F}^*(\omega)d\omega\\
&= \frac{i}{2\pi}\int_{-\infty}^{\infty} \mathrm{sgn}(\omega)|\mathcal{F}(\omega)|^2d\omega\\
&=0,
\end{aligned}
\end{equation}
where $\mathrm{sgn}(\cdot)$ is a sign function. After proving the Orthogonality, we can use it with the condition of Eq.~(\ref{eq_mean_no_change}), i.e.,
\begin{align}
\int_{-\infty}^{\infty} \mathrm{u}(t)\widehat{\mathrm{u}}(t) dt = \int_{-\infty}^{\infty} \mathrm{u}^2(t) dt = 0.
\label{eq_contradiction}
\end{align}
Eq.~(\ref{eq_contradiction}) holds true only if $\forall t \in [0, +\infty), \mathrm{u}(t)=0$, which is contradict to our initial assumption that $\mu_t = \mathrm{u}(t)$ is not always zero in Definition~\ref{definition_nonstationary}. As a result, the assumption of $\widehat{\mu}_t = \mu_t$ is false.
\end{proof}

\textbf{Insights.} This theorem illustrates that HT does change the nonstationary statistics of time series, proving that our phase augmentation can diversify the non-stationarity of time series.

\section{Additional Details on \method{}} \label{app:nw_design}

\textbf{Augmented Dickey Fuller (ADF) Test.} This is a statistical tool to assess the non-stationarity of a given time-series signal. This test operates under a null hypothesis $\mathbb{H}_0$ where the signal has a \textit{unit-root}. The existence of \textit{unit-root} is a guarantee that the signal is non-stationary~\citep{said1984testing}. To reject $\mathbb{H}_0$, the statistic value of the ADF test should be less than the critical values associated with a significance level of $0.05$ (denoted by $p$, the probability of observing such a test statistic under the null hypothesis). Throughout the paper, for multivariate time series, the average ADF statistics across all variates are reported. Besides, since this is a statistical tool to evaluate non-stationarity for each instance of time-series data, we provide an average of this number across a dataset to give the reader a view of the degree of non-stationarity. 

\noindent \textbf{Phase Augmentation.} \label{app:phase}
In this work, we are particularly interested in learning representations robust to temporal distribution shifts. Incorporating a phase shift in a signal is a less-studied augmentation technique. One of the main challenges is that real-world signals are not composed of a single frequency component and accurately estimating and controlling the shifting of the phase while retaining the magnitude spectrum of a signal is difficult. To solve this, we leverage the analytic transformation of a signal using the Hilbert Transform. The key advantages of this technique are maintaining global temporal dependencies and magnitude spectrum, no exploration of design parameters and being extendible to non-stationary and periodic time series. 

Lets walk through a simple example for a signal, $\mathbf{x}(t) = 2cos(w_0 t)$ which can be written in the polar coordinates as $\mathbf{x}(t) = e^{iw_0 t} + e^{-iw_0 t}$. Applying the HT conditions from Equation~\ref{eq:ht}, $\mathrm{HT}(\mathbf{x}(t)) = 2sin(w_0 t)$. Essentially, HT shifts the signal by $\pi / 2$ radians. We conduct this instance-level augmentation for each variate of the time series input. The aim is to diversify the phase representation. We use the \textit{scipy}~\citep{2020SciPy-NMeth} library to implement this augmentation.

\noindent \textbf{STFT Specifications.} Non-stationary signals contain time-varying spectral properties. We use STFT to capture these magnitude and phase responses in both time and frequency domains. There are three main arguments to compute STFT - length of each segment (characterized by the window size and the ratio for overlap), the number of frequency bins, and the sampling rate. We use the scipy library to implement this operation and use a $k < 1$ as a multiplier to the length of the window $W$ to give the segment length as $k \times W$ with no overlap between segments. The complete list of STFT specifications is given in Table~\ref{tab:stft_specs}. We also demonstrate a sensitivity analysis concerning the number of frequency bins and the segment length in Figure~\ref{fig:app_sns_stft}. 

\begin{figure}[!htb]
\centering
\includegraphics[width=0.7\linewidth]{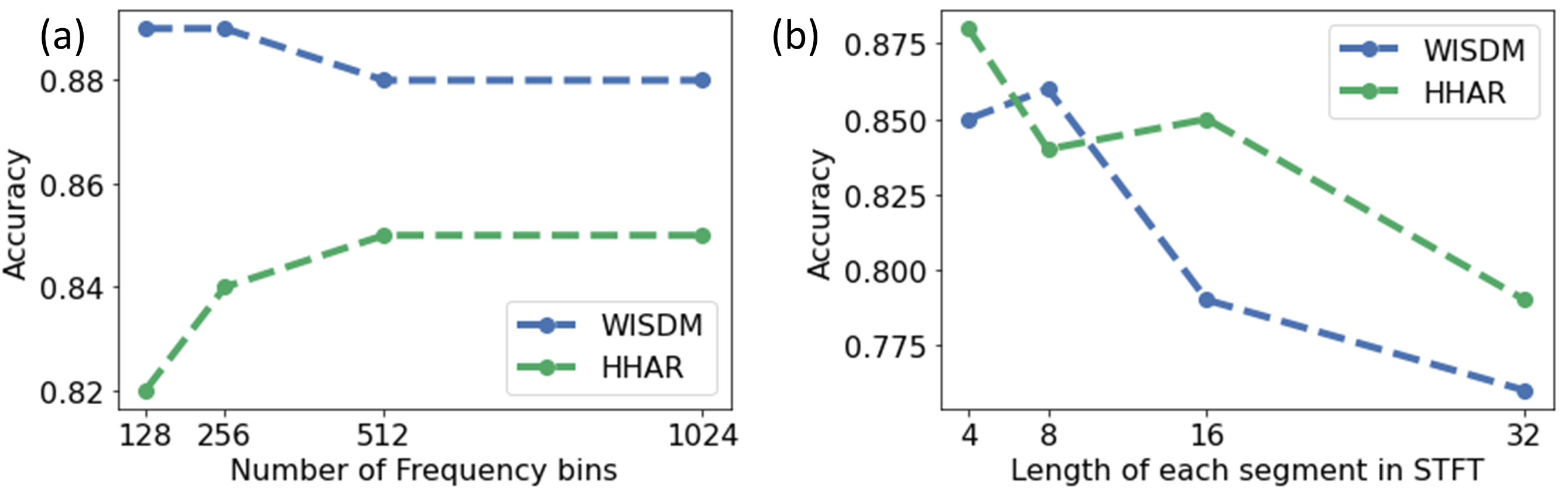}
\caption{Illustration of the sensitivity of performance to the design choices of STFT by varying a) the number of frequency bins with a fixed segment length of 4 and b) by varying the segment lengths with a 1024 frequency bins.}
\label{fig:app_sns_stft}
\end{figure}

\begin{table}[htbp]
\caption{Arguments for STFT computation}
\centering
\resizebox{\textwidth}{!}{
\begin{tabular}{ccccc}
\hline
Dataset & Sampling Rate & Sequence Length & STFT   segment length & Number of frequency bins \\ \hline
WISDM   & 20 Hz         & 128             & 4                     & 1024                     \\
HHAR    & 100 Hz        & 128             & 4                     & 1024                     \\
UCIHAR  & 50 Hz         & 128             & 4                     & 1024                     \\
SSC     & 100 Hz        & 3000            & 16                    & 1024                     \\
GR      & 200 Hz        & 200             & 4                     & 1024                     \\ \hline
\end{tabular}}
\label{tab:stft_specs}
\end{table}

\noindent \textit{Note:} It is tempting to use an empirical mode transformation and then apply a Hilbert-Huang transformation to obtain an instantaneous phase and amplitude response in the case of non-stationary signals. It absolves us from a finite time-frequency resolution for the STFT spectra. However, our initial results indicate a high dependence on the choice of the number of intrinsic mode functions~\citep{huang2014hilbert} for signal decomposition. Hence, for a generalizable approach, we choose STFT as the tool for the time-frequency spectrum. 

\tmlr{
\noindent \textbf{Additional analyses with wavelet transform.} We present a comparison of Discrete Wavelet Transform (DWT) based analyses of the WISDM dataset with STFT within the \method{} architecture in Table~\ref{tab:dwt}.

\begin{table}[h]
\centering
\tmlr{
\caption{Accuracy and standard deviation for different methods.}
\begin{tabular}{l c}
\toprule
\textbf{Method} & \textbf{Accuracy ± Std Dev} \\
\midrule
STFT & 0.87 ± 0.01 \\
DWT  & 0.82 ± 0.01 \\
\bottomrule
\end{tabular}
}

\label{tab:dwt}
\end{table}

We also present the spectrograms obtained using STFT, DWT, and EMD for a sample from the WISDM dataset in Figure~\ref{fig:spec}.

\begin{figure}[h]
    \centering
    \includegraphics[width=0.8\linewidth]{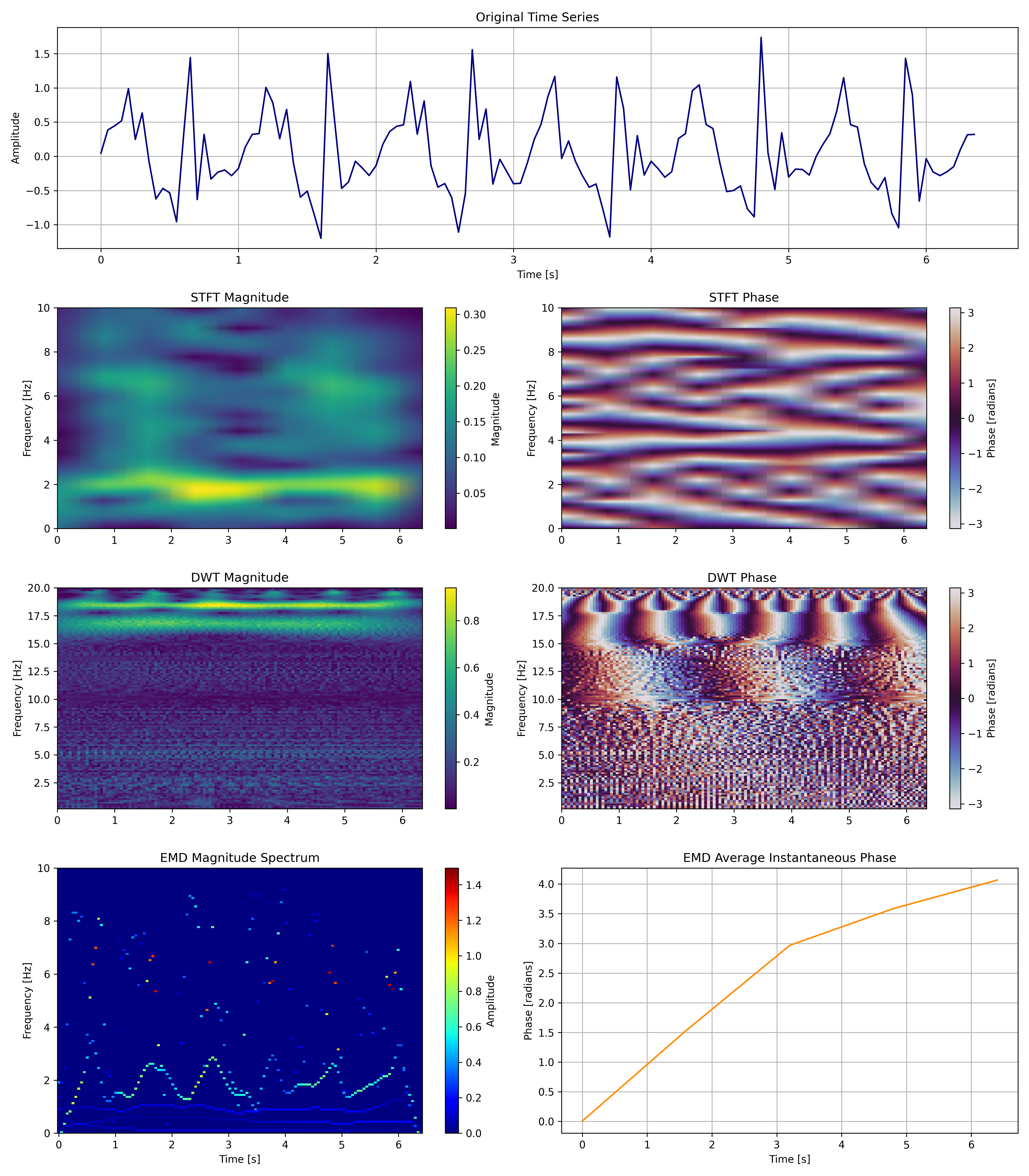}
    \caption{\tmlr{Comparison of STFT, DWT, and EMD-based frequency domain transformations of a time series sample from WISDM.}}
    \label{fig:spec}
\end{figure}
}

\noindent \textbf{Backbones for Temporal Encoder.}\label{app:F_temp}
The choice of temporal encoder, $F_\mathrm{Tem}$, is not central to our design. Table~\ref{tab:backbone} demonstrates the performance of \method{} under the identical settings for four cross-person settings using WISDM datasets using different backbones for $F_\mathrm{Tem}$. For the convolution-based self-attention (second row in Table~\ref{tab:backbone}) we use three encoders to compute query ($W_q$), key ($W_k$), and value($V)$ matrices for $\mathbf{r}_\mathrm{Dep}$ following the guidelines from~\citet{vaswani2017attention}. Then we compute self-attention as, $A = softmax\Biggl(\frac{QK^T}{\sqrt{d_k}}\Biggl)V$, where $d_k$ is the temporal dimension of $\mathbf{r}_\mathrm{Dep}$. Subsequently, we use $\hat{\mathbf{r}}_\mathrm{Dep} = \mathbf{r}_\mathrm{Dep} + A$, as the input to $F_\mathrm{Tem}$. For more details on the convolution and transformer backbones refer to Section~\ref{app:imp}.

\begin{table}[htbp]
\centering
\caption{Results for 4 different cross-person settings for WISDM dataset.}
\begin{tabular}{c|cccc}
\toprule
Backbones for $F_\mathrm{Tem}$ & 1    & 2    & 3    & 4    \\ \hline
2D Convolution based                   & 0.86 & 0.85 & 0.86 & 0.84 \\
2D Convolution based with self-attention  &  0.88	& 0.83	& 0.84	& 0.81 \\
Transformer           & 0.87 & 0.84 & 0.87 & 0.84 \\
\bottomrule
\end{tabular}
\label{tab:backbone}
\end{table}

\section{Dataset Details} \label{app:dataset}
Past works~\citep{gagnon2022woods, ragab2023adatime} have shown that the datasets used in our work suffer from a distribution shift across users and also within the same user temporally. This makes them suitable for evaluating the efficacy of our framework. In this section, we provide more details on the datasets. Table~\ref{tab:app_dataset} summarizes the average ADF statistics of the datasets along with their variates and their number of classes and domains. 
\begin{table}[htbp]
\centering
\caption{Summary of the dataset attributes. Higher value of ADF stat indicates greater non-stationarity within a signal.}
\resizebox{\textwidth}{!}{
\begin{tabular}{|c|c|c|c|c|c|}
\hline
\textbf{Category}                                  & \textbf{Dataset}               & \textbf{\begin{tabular}[c]{@{}c@{}}Representative ADF-Statistic \\ (mean across all variates)\end{tabular}} & \textbf{Variates} & \textbf{Domains} & \textbf{Classes} \\ \hline
\cellcolor[HTML]{FFFFFF}Human Activity recognition & \cellcolor[HTML]{FFFFFF}UCIHAR & -2.58  \iclr{(0.044)}                                                                                                     & 9                 & 31               & 6                \\ \hline
\cellcolor[HTML]{FFFFFF}Human Activity recognition & \cellcolor[HTML]{FFFFFF}HHAR   & -1.74  \iclr{(0.062)}                                                                                                    & 3                 & 9                & 6                \\ \hline
\cellcolor[HTML]{FFFFFF}Human Activity recognition & \cellcolor[HTML]{FFFFFF}WISDM  & -0.78 \iclr{(0.051)}                                                                                                      & 3                 & 36               & 6                \\ \hline
\cellcolor[HTML]{FFFFFF}Gesture Recognition        & \cellcolor[HTML]{FFFFFF}EMG    & -33.14   \iclr{(0.011)}                                                                                                   & 8                 & 36               & 6                \\ \hline
\cellcolor[HTML]{FFFFFF}Sleep Stage Classification & \cellcolor[HTML]{FFFFFF}EEG    & -3.7 \iclr{(0.047)}                                                                                                        & 1                 & 20               & 5                \\ \hline
\end{tabular}}
\label{tab:app_dataset} 
\end{table}

\noindent \textbf{WISDM}~\citep{kwapisz2011activity}: It originally consists of 51 subjects performing 18 activities but we follow the ADATime~\citep{ragab2023adatime} suite to utilize 36 subjects comprising of 6 activity classes given as walking, climbing upstairs, climbing downstairs, sitting, standing, and lying down. The dataset consists of 3-axis accelerometer measurements sampled at 20 Hz to predict the activity of each participant for a segment of 128-time steps. According to~\citet{ragab2023adatime}, this is the most challenging dataset suffering from the highest degree of class imbalance.

\noindent \textbf{HHAR}~\citep{stisen2015smart}: To remain consistent with the existing AdaTime benchmark we leverage the Samsung Galaxy recordings of this dataset from 9 participants from a 3-axis accelerometer sampled at 100 Hz. The 6 activity classes, in this case, are - biking, sitting, standing, walking, climbing up the stairs, and climbing down the stairs.

\noindent \textbf{UCIHAR}~\citep{bulbul2018human}: This dataset is collected from 30 participants using 9-axis inertial motion unit using a waist-mounted cellular device sampled at 50 Hz. The six activity classes are the same as WISDM dataset.

\noindent \textbf{SSC}~\citep{goldberger2000physiobank}: This is a single channel EEG dataset collected from 20 subjects to classify five sleep stages - wake, non-rapid eye movement stages - N1, N2, N3, and rapid-eye-movement.

\noindent \textbf{GR}~\citep{lobov2018latent}: For surface-EMG based gesture recognition we follow~\citet{lu2023outofdistribution}'s preprocessing and use an 8-channel data recorded from 36 participants for six types of gestures sampled at 200 Hz. Note, that this is the least stationary dataset (see Table~\ref{tab:app_dataset}, yet \method{} performs as well as or better than the stat-of-the-art techniques as shown in Table~\ref{tab:eeg_result} in the main paper.

\section{Implementation Details} \label{app:implementation}
All experiments are performed on an Ubuntu OS server equipped with NVIDIA TITAN RTX GPU cards using PyTorch framework. Every experiment is carried out with 3 different seeds (2711, 2712, 2713). During model training, we use Adam optimizer~\citep{kingma2020method} with a learning rate from 1e-5 to 1e-3 and maximum number of epochs is set to 150 based on the suitability of each setting. We tune these optimization-related hyperparameters for each setting and save the best model checkpoint based on early exit based on the minimum value of the loss function achieved on the validation set.

\subsection{Dataset Configuration}\label{app:subsubdataset}
There is no standard benchmarking for domain generalization for time-series where the domain labels and target samples are inaccessible. We leverage past works of~\citet{ragab2023adatime, lu2023outofdistribution} for preprocessing steps. For each dataset, we use a cross-person setting in four scenarios. The details of the target domains chosen in each scenario are given in Table~\ref{tab:app_scenario}, the rest are used as source domains. Note for GR we use the same splits as~\citet{lu2023outofdistribution}. Our method is not influenced by domain labels as we do not require them for our optimization.

\begin{table}[htbp]
\centering
\caption{Target domain splits for 4 scenarios of each dataset.}
\begin{tabular}{ccccc}
\hline
\begin{tabular}[c]{@{}c@{}}Target \\ Domains\end{tabular} & Scenario 1 & Scenario 2 & Scenario 3 & Scenario 4 \\ \hline
WISDM                                                     & 0-9        & 10-17      & 18-27      & 28-35      \\
HHAR                                                      & 0,1        & 2,3        & 4,5        & 6-8        \\
UCIHAR                                                    & 0-7        & 8-15       & 16-23      & 24-29      \\
GR                                                        & 0-8        & 9-17       & 18-26      & 27-35      \\
SSC                                                       & 0-5        & 5-9        & 10-14      & 15-20      \\ \hline
\end{tabular}
\label{tab:app_scenario}
\end{table}

Figure~\ref{app:data_dist} illustrates the class distribution for each dataset. Only the WISDM and Sleep Stage Classification (SSC) datasets exhibit notable imbalances among certain classes. To validate the consistency of our conclusions, we compare the Area Under the Curve (AUC) with the adopted accuracy metric in Figure~\ref{app:auc_roc}. Generally, past works~\citep{lu2023outofdistribution, gagnon2022woods}, utilizing these datasets have adopted accuracy as the primary performance metric, and we follow the same approach.

\begin{figure}
    \centering
    \includegraphics[width=\textwidth]{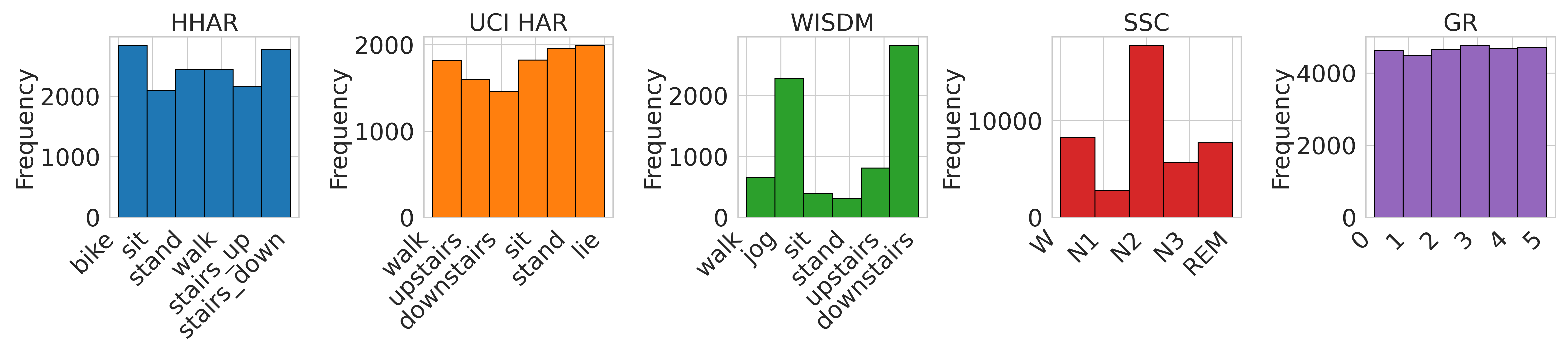}
    \vspace{-25pt}
    \caption{Class Distributions of the datasets used for evaluation.}
    \vspace{-10pt}
\end{figure}\label{app:data_dist}

\begin{figure}
    \centering
    \includegraphics[scale=0.4]{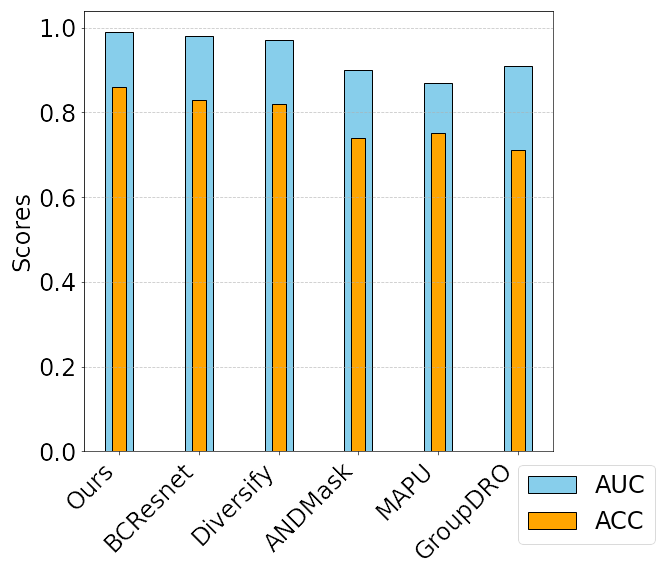}
    \caption{Illustration of additional performance metric, Area Under the ROC Curve (AUC), along with Accuracy—for Scenario 1 of the WISDM dataset, for the top-performing baselines. These metrics demonstrate consistency and justify our choice of accuracy as the primary evaluation metric.}
\end{figure}\label{app:auc_roc}

\subsection{Baseline Methods}\label{app:baseline}

\noindent \textbf{General Domain Generalization Methods.}
For all the standard domain generalization baselines we use conv2D layers for feature transformation of multivariate time series. It is worth mentioning that DANN is actually a domain adaptation study, which requires access to certain unlabeled target domain data. For cross-person generalization, the source domain consists of data from multiple people, in which we divide the source domain data into two parts with equal size and view one of them as the target domain to leverage DANN for domain-invariant training. As for one-person-to-another cases, we randomly sample a small number of unlabeled instances from each target person and merge them into the target set that is needed for running DANN.

\noindent \textbf{BCResNet.} This is a competitive benchmark for several audio-scene recognition challenges and demonstrates many useful techniques for domain generalization. BCResNet originally required mel-frequency-cepstral-coefficients but it is not suitable for time-series, hence, we use standard STFT of the multivariate-time series as input in this case. 

\noindent \textbf{Non-Stationary Transformer and Koopa.}
These are forecasting baselines that particularly address non-stationarity in short-term time sequences, Non-stationary transformer (NSTrans)~\citep{non_transformer_1} and Koopa~\citep{liu2024koopa}. To adapt it to our setting we use the encoder part of NSTrans followed by a classification head composed of fully connected layers. We simply average the encoder's output from all time steps and feed it to this classifier head.

\noindent \textbf{Ours+RevIN.} Further, we demonstrate that statistical techniques like Reversible Instance Normalization (RevIN)~\citep{kim2021reversible} may be used as a plug-and-play module with our framework. One limitation of using RevIN is that the input and output dimensions of this module must have the same dimensions to de-normalize the instance in the feature space. This may limit the usability of the module, however, we find that applying this module around the fusion encoder specifying the same number of input and output channels in the 2D convolution layer is suitable. We do not observe any significant benefit of incorporating this module from the experiments, however, if an application can specifically benefit from such RevIN, \method{} framework can support it.

\noindent \textbf{Diversify.}
The goal of this design is to characterize the latent domains and use a proxy-training schema to assign pseudo-domain labels to the samples to learn generalizable representations. It is an end-to-end version of the adaptive RNN~\citep{du2021adarnn} method which also proposes to identify sub-domains within a domain for generalization. It is interesting to note that for time-series generalizable representation viewing the non-stationarity or intra-domain shifts is crucial. Both diversify and \method{} address this problem from completely different approaches and demonstrate improvement over other standard methods or even domain adaptation methods that have the advantage of accessing samples from unseen distributions. While diversify aims to characterize latent distributions and uses a parametric setting, \method{} forces the model to learn domain-invariant features by anchoring the design to the phase which is intricately tied to non-stationarity. It also highlights that time-series domain generalization is a unique problem (compared to the more popular visual domain) and dedicated frameworks need to be designed in this case.

\noindent \textbf{MAPU.} MAPU is the state-of-the-art source-free domain adaptation study for time series, thus, in fact, it does not apply to the time-series domain generalizable learning problem. However, we still view it as an effective approach that can address distribution shifts and achieve domain-invariant learning. In our implementation, in addition to the source domain data, we still provide MAPU with the unlabeled target domain data for both cross-person generalization and one-person-to-another cases. The training procedure is identical to the default MAPU design, which is to pre-train the model on labeled source domain data and then conduct the training on unlabeled target domain data.

\noindent \textbf{Chronos.} Large foundation models are a sought-after approach in many domains and Chronos is one such most recent candidate for time-series. It is trained on 42 datasets and presents impressive zero-shot and few-shot abilities. Although it is largely targeted as a forecasting tool, the authors indicate its universal representation ability for a variety of tasks. Four variants of Chronos model checkpoints are available ranging from 20M to 70M parameters and embedding sizes from 256 to 1024. Based on pilot testing with scenario 1 on WISDM dataset (accuracies with a 1M parameter downstream model for the three variants: tiny-0.65, base-0.41, large-0.36), we find that the smallest version of the model, Chronos-tiny best suits our conservative dataset sizes for downstream fine-tuning. We use a few layers of 2D convolution layers with max-pooling to reduce the feature size which is dependent of the length of the sequence and then flatten and input to fully-connected layers as our downstream model. 

\noindent\textit{Note:} A few works~\citep{jin2024position, liu2023biosignal} use large language models directly to analyze raw time-series despite the obvious modality gap and can report comparable performance. However, our preliminary testing with ChatGPT~\citep{radford2019language} with in-context-learning by prompting similar to Jin et. al~\citep{jin2024position} using the HHAR dataset does not provide satisfactory results and we do not pursue that direction. Instead, we use a domain-specific large foundation model like Chronos as a fair baseline. 

\begin{table}[h]
\centering
\caption{Complete set of results from three trials on each baseline for WISDM cross-person generalization setting.}
\begin{tabular}{|c|cc|cc|cc|cc|}
\hline
Baselines       & \multicolumn{2}{c|}{Scenario 1}                     & \multicolumn{2}{c|}{Scenario 2}                     & \multicolumn{2}{c|}{Scenario 3}                     & \multicolumn{2}{c|}{Scenario   4}                   \\ \cline{2-9} 
                & \multicolumn{1}{c}{Mean} & \multicolumn{1}{c|}{Std} & \multicolumn{1}{c}{Mean} & \multicolumn{1}{c|}{Std} & \multicolumn{1}{c}{Mean} & \multicolumn{1}{c|}{Std} & \multicolumn{1}{c}{Mean} & \multicolumn{1}{c|}{Std} \\ \hline
ERM             & \multicolumn{1}{c}{0.57}     & \multicolumn{1}{c|}{0.02}    & \multicolumn{1}{c}{0.50}     & \multicolumn{1}{c|}{0.02}    & \multicolumn{1}{c}{0.51}     & \multicolumn{1}{c|}{0.02}    & \multicolumn{1}{c}{0.55}     & \multicolumn{1}{c|}{0.02}    \\
GroupDRO        & 0.71                     & 0.06                     & 0.67                     & 0.06                     & 0.60                     & 0.07                     & 0.67                     & 0.04                     \\
DANN            & 0.71                     & 0.02                     & 0.65                     & 0.01                     & 0.65                     & 0.06                     & 0.70                     & 0.03                     \\
RSC             & 0.69                     & 0.05                     & 0.71                     & 0.07                     & 0.64                     & 0.10                     & 0.61                     & 0.11                     \\
ANDMask         & 0.74                     & 0.01                     & 0.73                     & 0.03                     & 0.69                     & 0.06                     & 0.69                     & 0.03                     \\
InceptionTime   & 0.83                     & 0.01                     & 0.82                     & 0.02                     & 0.80                     & 0.04                     & 0.77                     & 0.01                     \\
BCResNet        & 0.83                     & 0.00                     & 0.79                     & 0.04                     & 0.75                     & 0.04                     & 0.78                     & 0.04                     \\
NSTrans             & 0.43                     & 0.02                     & 0.40                     & 0.01                     & 0.37                     & 0.02                     & 0.37                     & 0.03                     \\
Koopa &0.63&0.02&0.61&0.04&0.72&0.03&0.57&0.01 \\
MAPU            & 0.75                     & 0.02                     & 0.69                     & 0.04                     & 0.79                     & 0.06                     & 0.79                     & 0.03                     \\
Diversify       & 0.82                     & 0.01                     & 0.82                     & 0.01                     & 0.84                     & 0.01                     & 0.81                     & 0.01                     \\ 
Chronos         & 0.71                     & 0.01                     & 0.67                     & 0.01                     & 0.65                     & 0.01                     & 0.62                     & 0.01                     \\ \hline
Ours +   RevIN* & 0.86                     & 0.01                     & 0.85                     & 0.01                     & 0.84                     & 0                     & 0.84                     & 0.03                     \\ 
Ours            & 0.86                     & 0.01                     & 0.85                     & 0.01                     & 0.85                     & 0.01                     & 0.82                     & 0.02                     \\\hline
\end{tabular}
\label{tab:app_wisdm}
\vspace{-16pt}
\end{table}

\begin{table}[h]
\centering
\caption{Complete set of results from three trials on each baseline for HHAR cross-person generalization setting.}
\begin{tabular}{|c|cc|cc|cc|cc|}
\hline
Baselines       & \multicolumn{2}{c|}{Scenario 1}                     & \multicolumn{2}{c|}{Scenario 2}                     & \multicolumn{2}{c|}{Scenario 3}                     & \multicolumn{2}{c|}{Scenario   4}                   \\ \cline{2-9} 
                & \multicolumn{1}{c}{Mean} & \multicolumn{1}{c|}{Std} & \multicolumn{1}{c}{Mean} & \multicolumn{1}{c|}{Std} & \multicolumn{1}{c}{Mean} & \multicolumn{1}{c|}{Std} & \multicolumn{1}{c}{Mean} & \multicolumn{1}{c|}{Std} \\ \hline
ERM             & \multicolumn{1}{c}{0.49}     & \multicolumn{1}{l|}{0.05}    & \multicolumn{1}{c}{0.46}     & \multicolumn{1}{c|}{0.01}    & \multicolumn{1}{c}{0.45}     & \multicolumn{1}{c|}{0.02}    & \multicolumn{1}{c}{0.47}     & \multicolumn{1}{c|}{0.03}    \\
GroupDRO        & 0.60                     & 0.01                     & 0.53                     & 0.02                     & 0.59                     & 0.02                     & 0.64                     & 0.03                     \\
DANN            & \multicolumn{1}{c}{0.66}     & \multicolumn{1}{l|}{0.01}    & \multicolumn{1}{c}{0.71}     & \multicolumn{1}{l|}{0.01}    & \multicolumn{1}{c}{0.67}     & \multicolumn{1}{l|}{0.09}    & \multicolumn{1}{c}{0.69}     & \multicolumn{1}{l|}{0.03}    \\
RSC             & 0.52                     & 0.05                     & 0.49                     & 0.04                     & 0.44                     & 0.03                     & 0.47                     & 0.03                     \\
ANDMask         & 0.63                     & 0.02                     & 0.64                     & 0.06                     & 0.66                     & 0.11                     & 0.69                     & 0.05                     \\
InceptionTime   & 0.77                     & 0.04                     & 0.80                     & 0.01                     & 0.82                     & 0.03                     & 0.83                     & 0.01                     \\
BCResNet        & 0.66                     & 0.05                     & 0.70                     & 0.06                     & 0.75                     & 0.04                     & 0.68                     & 0.04                     \\
NSTrans             & 0.21                     & 0.02                     & 0.22                     & 0.03                     & 0.27                     & 0.04                     & 0.28                     & 0.02                     \\
Koopa &0.72&0.04&0.63&0.03&0.72&0.05&0.69&0.02 \\
MAPU            & 0.73                     & 0.02                     & 0.72                     & 0.03                     & 0.81                     & 0.01                     & 0.78                     & 0.03                     \\
Diversify       & 0.82                     & 0.01                     & 0.76                     & 0.01                     & 0.82                     & 0.01                     & 0.68                     & 0.01                     \\ 
Chronos       & 0.73                     & 0.04                     & 0.75                     & 0.03                     & 0.73                     & 0.01                     & 0.66                     & 0.12                     \\
\hline
Ours +   RevIN* & 0.82                     & 0.05                     & 0.82                     & 0.02                     & 0.92                     & 0.04                     & 0.85                     & 0.03                     \\
Ours            & 0.83                     & 0.02                     & 0.83                     & 0.02                     & 0.94                     & 0.03                     & 0.88                     & 0.02                     \\ \hline
\end{tabular}
\label{tab:app_hhar}
\end{table}

\begin{table}[h]
\centering
\caption{Complete set of results from three trials on each baseline for UCIHAR cross-person generalization setting.}
\begin{tabular}{|c|cc|cc|cc|cc|}
\hline
Baselines       & \multicolumn{2}{c|}{Scenario 1}                     & \multicolumn{2}{c|}{Scenario 2}                     & \multicolumn{2}{c|}{Scenario 3}                     & \multicolumn{2}{c|}{Scenario   4}                   \\ \cline{2-9} 
                & \multicolumn{1}{c}{Mean} & \multicolumn{1}{c|}{Std} & \multicolumn{1}{c}{Mean} & \multicolumn{1}{c|}{Std} & \multicolumn{1}{c}{Mean} & \multicolumn{1}{c|}{Std} & \multicolumn{1}{c}{Mean} & \multicolumn{1}{c|}{Std} \\ \hline
ERM    &0.72&0.09&0.64&0.05&0.70&0.01&0.72&0.03    \\
GroupDRO       &0.91&0.02&0.84&0.01&0.89&0.04&0.85&0.07                     \\
DANN            &0.84&0.02&0.79&0.01&0.81&0.02&0.86&0.03    \\
RSC             &0.82&0.13&0.73&0.07&0.74&0.03&0.81&0.06                     \\
ANDMask         &0.86&0.08&0.80&0.06&0.76&0.13&0.78&0.09                     \\
InceptionTime   &0.91&0.03&0.82&0.07&0.88&0.02&0.91&0.04                     \\
BCResNet        &0.81&0.02&0.77&0.02&0.78&0.02&0.83&0.02                     \\
NSTrans             &0.35&0.02&0.35&0.01&0.51&0.02&0.47&0.01                    \\
Koopa           &0.81&0.02&0.72&0.05&0.81&0.06&0.77&0.03 \\
MAPU            &0.85&0.03&0.80&0.01&0.85&0.02&0.82&0.03                     \\
Diversify       &0.89&0.03&0.84&0.04&0.93&0.02&0.90&0.02                     \\ 
Chronos       &0.56&0.05&0.57&0.01&0.50&0.02&0.82&0.13                     \\ \hline
Ours + RevIN*  &0.96&0.01&0.90&0.01&0.93&0.03&0.97&0.01                     \\
Ours            &0.96&0.01&0.91&0.01&0.95&0&0.97&0.01                     \\ \hline
\end{tabular}
\label{tab:app_ucihar}
\vspace{-16pt}
\end{table}

\subsection{Implementation Details of \method{}}\label{app:imp}
The magnitude and phase encoders, $F_\mathrm{Mag}$ and $F_\mathrm{Pha}$ are implemented using 2D convolution layers with the number of input channels equal to the variates, $V$, and the out channels as $2c$ with $(5 \times 5)$ kernels. $c$ is a hyperparameter used to conveniently control the size of the overall network. For all HAR and GR models we adopt $c$ as 1 and for SSC $c$ is 4. For more specific details please refer to our code. The sub-spectral feature normalization uses a group number of 3 and follows Equation~\ref{eq:ssn} for operation. This is inspired by Chang et. al~\citep{chang2021subspectral} subspectral normalization for audio applications with a frequency spectrum input. The key idea is to conduct sub-band normalization (across a fixed set of frequency bins along time and examples for each channel). We find merit in using this technique for domain generalizable applications, as it can help overcome the low-frequency drifts arising due to device differences (for eg. DC drifts in various sensors). One implementation-specific modification we carried out to ensure a generalizable framework is that if the number of sub-bands is not divisible by the total number of features then we choose to apply the remainder bands with batch-normalization. The output from the respective encoders is then fused along the channel/variate axis by multiplying with 2D convolution kernels to provide a new feature map which is the input to our phase-driven residual network. The $F_\mathrm{Fus}$ similarly is implemented using 2D convolution layers with the number of input channels as $4c$ and output channels to be $2c$.

Subsequently for the depth-wise encoder, $F_\mathrm{Dep}$, we use 2D convolution layers with batch normalization and SiLU~\citep{elfwing2018sigmoid} activation function. This style of architecture is closely adapted from the basic building blocks in BCResNet~\citep{kim2021broadcasted}. After average pooling the $F_\mathrm{Tem}$ can assume any backbone as per the requirements of the application. As demonstrated previously in Section~\ref{app:F_temp}, the choice of backbone is not central to our design here. 
We find that some applications(like WISDM and GR) benefit from attention-based temporal encoding more than others. For the attention-based version of $F_\mathrm{Tem}$ we used a multi-headed attention based on a transformer encoder~\citep{vaswani2017attention}. Regarding positional encoding, we used a simple sinusoid-based encoding and added it to the sequence representation $\mathbf{r}_\mathrm{Dep}$. However, arriving at the best positional encoding for numerical time-series data is an active area of research~\citep{kazemi2019time2vec, tang2023bio, mohapatra2023effect} given its uniqueness compared to typical natural language inputs and further optimizations can be carried out. For the the convolution-based $F_\mathrm{Tem}$ we simply use a kernel of size $(1 \times 3)$ in a 2D convolution layer to conduct temporal convolutions. 

For the classification head, $g_\mathrm{Cls}$, we apply 2D convolution layers to have the number of output channels equal to the number of classes in an application, followed by softmax operation. Interestingly, if the choice of $F_\mathrm{Tem}$ remains convolutional the entire network can be implemented in a purely convolutional form allowing applicability to real-time problems. The model sizes across the different datasets range from 40k-100k trainable parameters (based on the number of variates, temporal encoding etc.) which is modest and can be further tuned for resource-constrained applications by adjusting the $c$ parameter. 


\begin{table}[htbp]
\centering
\caption{Complete set of results from three trials on each baseline for SSC cross-person generalization setting.}
\begin{tabular}{|c|cc|cc|cc|cc|}
\hline
Baselines       & \multicolumn{2}{c|}{Scenario 1}                     & \multicolumn{2}{c|}{Scenario 2}                     & \multicolumn{2}{c|}{Scenario 3}                     & \multicolumn{2}{c|}{Scenario   4}                   \\ \cline{2-9} 
                & \multicolumn{1}{c}{Mean} & \multicolumn{1}{c|}{Std} & \multicolumn{1}{c}{Mean} & \multicolumn{1}{c|}{Std} & \multicolumn{1}{c}{Mean} & \multicolumn{1}{c|}{Std} & \multicolumn{1}{c}{Mean} & \multicolumn{1}{c|}{Std} \\ \hline
ERM    &0.50&0.05&0.46&0.04&0.49&0.02&0.45&0.03    \\
GroupDRO       &0.57&0.07&0.56&0.03&0.55&0.05&0.59&0.06                     \\
DANN            &0.64&0.02&0.63&0.02&0.69&0.03&0.63&0.04    \\
RSC             &0.50&0.09&0.48&0.02&0.52&0.07&0.46&0.01                     \\
ANDMask         &0.55&0.10&0.50&0.09&0.54&0.07&0.57&0.08                     \\
InceptionTime   &0.74&0.04&0.78&0.03&0.72&0.05&0.80&0.02                     \\
BCResNet        &0.79&0&0.82&0.01&0.79&0.01&0.81&0                     \\
NSTrans             &0.43&0.02&0.37&0.04&0.42&0.06&0.35&0.03                    \\
Koopa           &0.58&0.02&0.62&0.01&0.53&0.04&0.49&0.06 \\
MAPU            &0.69&0.01&0.68&0.01&0.65&0.03&0.69&0.02                     \\
Diversify       &0.73&0.03&0.76&0.02&0.68&0.05&0.77&0.02                     \\ 
Chronos       &0.53&0.04&0.47&0.04&0.47&0.01&0.57&0.03                     \\ 
\hline
Ours + RevIN*   &0.82&0.01&0.79&0.02&0.78&0.01&0.81&0.01                    \\
Ours            &0.85&0.01&0.80&0.01&0.79&0.01&0.83&0.01                     \\ \hline
\end{tabular}
\label{tab:app_ssc}
\end{table}

\begin{table}[htbp]
\centering
\caption{Complete set of results from three trials on each baseline for GR cross-person generalization setting.}
\begin{tabular}{|c|cc|cc|cc|cc|}
\hline
Baselines       & \multicolumn{2}{c|}{Scenario 1}                     & \multicolumn{2}{c|}{Scenario 2}                     & \multicolumn{2}{c|}{Scenario 3}                     & \multicolumn{2}{c|}{Scenario   4}                   \\ \cline{2-9} 
                & \multicolumn{1}{c}{Mean} & \multicolumn{1}{c|}{Std} & \multicolumn{1}{c}{Mean} & \multicolumn{1}{c|}{Std} & \multicolumn{1}{c}{Mean} & \multicolumn{1}{c|}{Std} & \multicolumn{1}{c}{Mean} & \multicolumn{1}{c|}{Std} \\ \hline
ERM    &0.45&0.02&0.58&0.03&0.57&0.03&0.54&0.04    \\
GroupDRO       &0.53&0.08&0.36&0.11&0.59&0.05&0.45&0.13                     \\
DANN            &0.60&0.01&0.66&0.04&0.65&0.02&0.64&0.03    \\
RSC             &0.50&0.10&0.66&0.05&0.64&0.03&0.56&0.03                     \\
ANDMask         &0.41&0.13&0.54&0.20&0.45&0.15&0.39&0.12                     \\
InceptionTime   &0.68&0.07&0.70&0.09&0.72&0.03&0.69&0.02                     \\
BCResNet        &0.62&0.06&0.67&0.09&0.65&0.05&0.61&0.07                     \\
NSTrans             &0.31&0.01&0.34&0.01&0.34&0.01&0.32&0.02                    \\
Koopa           &0.47&0.03&0.54&0.02&0.60&0.05&0.70&0.06 \\
MAPU            &0.64&0.02&0.69&0.03&0.71&0.01&0.68&0.04                     \\
Diversify       &0.69&0.01&0.80&0.01&0.76&0.02&0.76&0.01                     \\ 
Chronos       &0.49&0.01&0.54&0.03&0.51&0.05&0.48&0.02                     \\ 
\hline
Ours + RevIN*  &0.68&0.03&0.81&0.04&0.77&0.03&0.76&0.02                     \\
Ours            &0.70&0.02&0.82&0.02&0.77&0.04&0.75&0.01                     \\ \hline
\end{tabular}
\label{tab:app_emg}
\end{table}

\begin{table}[htbp]
\small
\centering
\caption{Complete set of results from three trials on each baseline for HHAR one-person-to-another setting.}
\resizebox{1\textwidth}{!}{
\setlength{\tabcolsep}{.6mm}{
\begin{tabular}{|c|cc|cc|cc|cc|cc|cc|cc|cc|cc|}
\hline
Baselines       & \multicolumn{2}{c|}{0}                     & \multicolumn{2}{c|}{1}                     & \multicolumn{2}{c|}{2}                     & \multicolumn{2}{c|}{3}      & \multicolumn{2}{c|}{4}                     & \multicolumn{2}{c|}{5}                     & \multicolumn{2}{c|}{6}                     & \multicolumn{2}{c|}{7}   & \multicolumn{2}{c|}{8}          \\ \cline{2-19} 
                & \multicolumn{1}{c}{Mean} & \multicolumn{1}{c|}{Std} & \multicolumn{1}{c}{Mean} & \multicolumn{1}{c|}{Std} & \multicolumn{1}{c}{Mean} & \multicolumn{1}{c|}{Std} & \multicolumn{1}{c}{Mean} & \multicolumn{1}{c|}{Std} & \multicolumn{1}{c}{Mean} & \multicolumn{1}{c|}{Std} & \multicolumn{1}{c}{Mean} & \multicolumn{1}{c|}{Std} & \multicolumn{1}{c}{Mean} & \multicolumn{1}{c|}{Std} & \multicolumn{1}{c}{Mean} & \multicolumn{1}{c|}{Std}  & \multicolumn{1}{c}{Mean} & \multicolumn{1}{c|}{Std}\\ \hline
ERM    &0.27&0.01&0.40&0.05&0.41&0.05&0.44&0.05&0.42&0.08&0.44&0.01&0.45&0.04&0.44&0.04&0.48&0.02    \\
GroupDRO       &0.33&0.02&0.53&0.02&0.38&0.05&0.48&0.04&0.47&0.04&0.51&0.08&0.47&0.03&0.48&0.02&0.49&0.05                    \\
DANN            &0.32&0.03&0.44&0.05&0.42&0.03&0.45&0.06&0.42&0.03&0.48&0.04&0.49&0.02&0.45&0.05&0.51&0.01    \\
RSC             &0.27&0.03&0.45&0.06&0.38&0.05&0.45&0.09&0.40&0.08&0.47&0.02&0.50&0.06&0.44&0.08&0.53&0.01                     \\
ANDMask         &0.34&0.06&0.50&0.03&0.37&0.04&0.43&0.05&0.46&0.04&0.51&0.07&0.46&0.03&0.47&0.02&0.52&0.03                     \\
InceptionTime   &0.52&0.05&0.62&0.02&0.44&0.03&0.69&0.04&0.60&0.09&0.57&0.05&0.66&0.03&0.64&0.01&0.61&0.01                     \\
BCResNet        &0.28&0.03&0.48&0.08&0.32&0.04&0.47&0.03&0.42&0.06&0.52&0.05&0.44&0.02&0.45&0.02&0.49&0.06                     \\
NSTrans         &0.20&0.01&0.22&0.02&0.17&0.02&0.20&0.01&0.21&0.01&0.22&0.01&0.26&0.07&0.17&0.05&0.20&0.01                    \\
Koopa 
&0.32 &0.02 &0.42 &0.04 &0.37 &0.01 &0.40 &0.01 &0.42 &0.02 &0.45 &0.05 &0.35 &0.02 &0.43 &0.03 &0.48 &0.02 \\

MAPU            &0.39&0.05&0.57&0.05&0.35&0.06&0.52&0.03&0.49&0.04&0.54&0.02&0.49&0.01&0.50&0.06&0.52&0.04                     \\
Diversify       &0.42&0.04&0.62&0.04&0.32&0.09&0.62&0.01&0.56&0.03&0.61&0.01&0.53&0.04&0.52&0.10&0.61&0.05                     \\ 
Chronos         &0.32&0.03&0.23&0.05&0.26&0.04&0.25&0.03&0.27&0.09&0.23&0.08&0.24&0.06&0.21&0.08&0.24&0.05                     \\ 
\hline
Ours + RevIN*   &0.48&0.02&0.66&0.08&0.57&0.05&0.65&0.03&0.61&0.04&0.64&0.05&0.65&0.06&0.64&0.01&0.63&0.03                     \\
Ours            &0.53&0.04&0.70&0.03&0.63&0.01&0.66&0.03&0.64&0.06&0.67&0.01&0.65&0.03&0.67&0.04&0.62&0.02                     \\ \hline
\end{tabular}}}
\label{tab:app_hhar_one_to_x}
\end{table}

\
\subsection{Ablation Details of \method{}}\label{app:NST_ablation}
For row 1 in Table~\ref{tab:ablation}, the modification to \method{} is straightforward by simply omitted the Hilbert transformation during data preprocessing. When the separate encoders are not used (rows 6 and 7 in Table~\ref{tab:ablation}), we only use $F_\mathrm{Mag}$ and connect the output of the sub-feature normalization block directly to the $F_\mathrm{Dep}$. When the residual is removed entirely (rows 5 and 6 in Table~\ref{tab:ablation}), we cannot broadcast the 1D input to 2D anymore so we take the mean across all the temporal indices of $F_\mathrm{Tem}(\mathbf{r}_\mathrm{Dep})$ and flatten it to input to fully connected layers. Based on the dataset we choose a few fully connected layers truncating to the number of classes finally.

\subsection{Phase-driven NSTrans}\label{app:NSTrans}
Non-stationary transformer, NSTrans~\citep{non_transformer_1}, applies a destationarizing attention around the transformer block. Since it is typically used for forecasting tasks, it comprises of encoder and a decoder module. For adapting this model to classification we update the design to conduct normalization and denormalization around the encoder block. We use this modified version of NSTrans as the $F_\mathrm{Tem}$ module in \method{} and observe significant improvement in performance as shown in Figure~\ref{fig:ablation_2}.

\noindent \textit{Note:} The poor performance of the Nonstationary transformer can be attributed to two main reasons:

(1) Originally, the Nonstationary transformer was designed for forecasting time-series tasks and employs an encoder-decoder style architecture. To successfully apply the core module of the Nonstationary transformer~\citep{non_transformer_1}, stationarization-destationarization, the input-output space needs to remain consistent. This consistency is naturally ensured in an encoder-decoder design. However, in our classification applications, we only utilize the encoder module. Although we maintain the input-output dimensions, the semantics of the latent space and input space are not the same. Hence, destationarization is not very successful.

(2) Nonstationary transformer inputs consist of raw time-series data with positional encoding. Given the fine-grained nature of current tasks, such an approach can be more data-hungry as they try to establish a relation (attention) among every time step. Therefore, it may not perform well on short-range classification tasks that focus on domain generalization. This indicates a limitation in its direct usage for optimizing a categorical objective function using only the encoder part with a classification head.


\subsection{Computational Analyses}\label{app:comp_analyses}
To assess the resource utilization of \method{} against other baselines, we offer two metrics - 1) Number of Multiply and Accumulate operations per sample (MACs) for approximate computational complexity at run-time and 2) Number of trainable parameters to determine the memory footprint. We compute these for the HHAR dataset in Table~\ref{tab:mac} (these metrics are dependent on input dimensions, hence different choices of dataset, sequence length, and modalities can yield different numbers).

\begin{table}[htbp]
    \centering
    \caption{Model comparison based on MACs and number of trainable parameters.}
    \begin{tabular}{lcc}
        \toprule
        Model & MACs ($\times 10^6$) & Trainable Parameters ($\times 10^3$) \\
        \midrule
        ERM & 19.5 & 98.1 \\
        GroupDRO & 19.5 & 98.1 \\
        DANN & 21.7 & 102.9 \\
        RSC & 19.5 & 98.1 \\
        ANDMask & 19.5 & 98.1 \\
        BCResNet & 55.3 & 154.7 \\
        NSTrans & 35.3 & 75.6 \\
        Koopa & 32.7 & 118.7 \\
        MAPU & 46.9 & 128.3 \\
        Diversify & 35.7 & 922.9 \\
        Chronos & 345.5 & 1049.8\\
        Ours & 48.6 & 81.4 \\
        \bottomrule
    \end{tabular}
    \label{tab:mac}
\end{table}

Our computation cost is comparable to the other methods, achieving much better performance. We also determine the asymptotic time complexity of the \method{} modules in Table~\ref{tab:module_complexity}. For multi-layer neural network modules, the representative time complexity for one layer is provided (rows 3-7).

\begin{table}[htbp]
    \centering
    \small
    \caption{Complexity per module and input notation for each module.}
    \resizebox{\textwidth}{!}{
    \begin{tabular}{lp{0.6\textwidth}c}
        \toprule
         & \centering{Module} & Complexity \\
        \midrule
        1 & Hilbert augmentation (using Fast-Fourier transform) & $\mathcal{O}(V \cdot N \log N)$ \\
        2 & Short-Term Fourier Transform & $\mathcal{O}(V \cdot N \cdot W \log W)$ \\
        3 & Magnitude Encoder ($F_\mathrm{Mag}$), Phase Encoder ($F_\mathrm{Pha}$), Phase Projection Head ($g_\mathrm{Res}$) - 2D Convolution Layers & $\mathcal{O}(k^2 \cdot N \cdot d \cdot c_{in} \cdot c_{out} )$ \\
        4 & Depthwise Feature Encoder ($F_\mathrm{Dep}$) - 2D Convolution Layers with average pooling along feature axis & $\mathcal{O}(k^2 \cdot N \cdot d \cdot c_{in} \cdot c_{out} ) + \mathcal{O}(d)$ \\
        5 & Temporal Encoder ($F_\mathrm{Tem}$) - (worst case backbone) Transformer Encoder & $\mathcal{O}(N \cdot d)$ \\
        6 & Classification Encoder ($g_\mathrm{Cls}$) - fully connected layers & $\mathcal{O}(d \cdot h)$ \\
        \bottomrule
    \end{tabular}}
    \label{tab:module_complexity}
\end{table}

\subsection{Additional Analyses}\label{subsec:app_analyses}

\subsubsection{Traditional Augmentation}
For time series, brute augmentations like scaling, reverting, cropping, and jittering may not be always suitable as they may alter the morphological properties that are important for the task. Even more advanced techniques like frequency-time warping and additive noise, need deliberate characterization of the signal's frequency response to meaningfully provide an augmented view while retaining the task-relevant semantics. This is one of the key motivating factors for us to explore a general-purpose augmentation strategy that diversifies the non-stationarity in a signal without altering its task-specific semantics (magnitude and frequency responses).

To demonstrate the use of traditional augmentations with \method{} for human-activity recognition, we incorporate the following augmentations proposed by past works~\citep{qin2023generalizable, um2017data} on the HHAR dataset. 

\begin{itemize}
    \item Rotation - incorporating arbitrary rotation matrices to simulate different sensor locations.
    \item Permutation - random temporal perturbation for fixed window within each sample~\citep{um2017data}.
    \item Circular Time-shift - shifting the signal by a random time interval, constrained by a predefined maximum time-shift parameter (20\% of the sample length in this case) for each sample. The shifted time points from the trailing edge are wrapped around and padded to the leading edge of the signal
\end{itemize}

We incorporate these augmentations in place of the Hilbert augmentation and apply the \method{}. We also run an experiment with identical settings with no augmentations and illustrate in Figure~\ref{fig:aug_pha}. These results are indicative that arbitrary augmentations in the time domain do not necessarily diversify the non-stationarity of a signal. Hence, PhASER principles like residual connections to re-introduce nonstationary dictionary as phase-projection and broadcasting (using $g_\mathrm{Res}$) do not bode well here, and even the performance of a no-augmentation scenario is sometimes better than the traditional temporal augmentations for domain-generalization tasks in this case. However, in the future, we may encounter applications where established augmentation strategies, in combination with Hilbert augmentation, might be the best choice. In this work, we aim to propose a more generic framework that can benefit most time-series classification tasks to achieve better generalizability.





\subsubsection{Random Phase Augmentation using Hilbert Transform} \label{subsub:randHT}

We aimed to explore a random phase augmentation while ensuring minimal distortion to the signal's magnitude response to preserve important task-relevant properties. To achieve this, we leverage an adaptation of the Hilbert Transform. We illustrate our approach using a simple example: let the input signal be $\mathbf{x}(t) = \sin(\omega t)$, and its Hilbert Transform be $\mathrm{HT}(\mathbf{x}(t)) = \widehat{\mathbf{x}}(t) = -\cos(\omega t)$. For an arbitrary phase shift $\phi$, the following trigonometric identity holds:

\begin{equation} \sin(\omega t + \phi) = \sin(\omega t) \cos(\phi) + \cos(\omega t) \sin(\phi). \end{equation}

This gives us the desired randomly phase-shifted version of $\mathbf{x}(t)$, expressed as $\mathbf{y}(t) = a\mathbf{x}(t) - b\widehat{\mathbf{x}}(t)$, where $a = \cos(\phi)$ and $b = \sin(\phi)$. The following constraint is imposed on the scalars $a$ and $b$:

\begin{equation} a^2 + b^2 = 1, \end{equation}

which defines a valid phase shift $\phi$ as:

\begin{equation} \phi = \arctan\left(\frac{b}{a}\right). \end{equation}

We solve for $a$ and $b$, and apply them as shown in Figure~\ref{fig:randomHT} to obtain an approximately identical random phase shift across all frequency components of a nonstationary signal. The desired $\phi$ is randomly sampled from the range $[-\pi/2, \pi/2]$.

\begin{figure}[!htbp] 
\centering 
\includegraphics[width=0.7\linewidth]{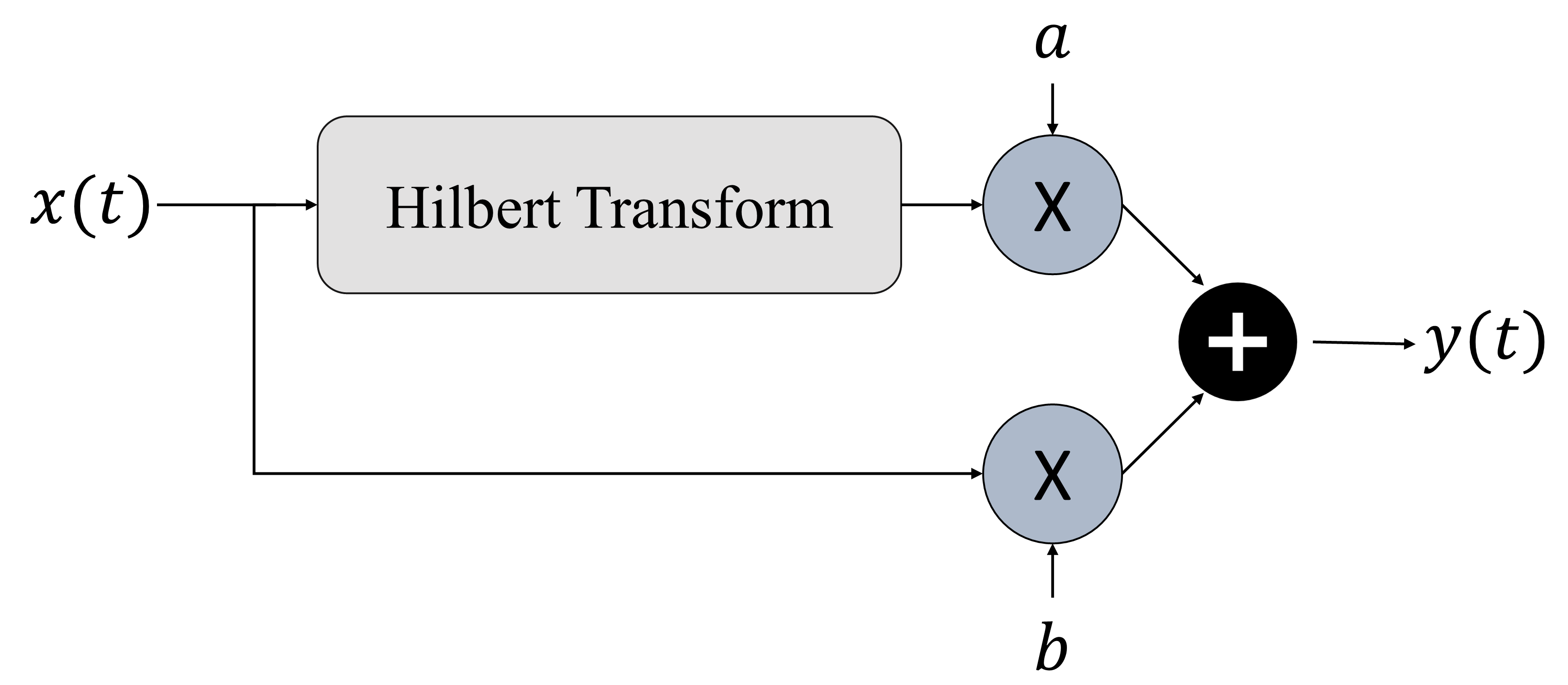} \caption{Schema illustrating the process for obtaining random phase augmentation by leveraging the Hilbert Transform of the original input $\mathbf{x}(t)$.} 
\label{fig:randomHT} 
\end{figure}

As shown in Figure~\ref{fig:aug_pha}, we observe no significant benefit from this randomization on the generalization performance of the current classification tasks. However, we are interested in exploring this direction in future by imposing additional constraints inspired by underlying processes for other time-series tasks.


\subsection{Visualization}\label{subsec:app_vis}
We present some visualizations using the t-distributed stochastic neighbor embedding (t-sne) analyses on our \method{}, Diversify, and BCResNet for the HHAR dataset for the left-out domains in scenario 1 in Figure~\ref{fig:tsne}. We illustrate the t-sne plots for in-domain and out-of-domain data and the different colors indicate the six activity classes of this dataset. In all the cases, we only make necessary modifications to extract the embeddings from the last layer of the network before categorical score assignment and tune the perplexity parameters during the t-sne plotting for optimal 2-dimensional projection. Figure~\ref{fig:app_tsne}. (a,d) shows that the clustering for each class is distinct and clearly separable for both in-domain and out-of-domain data using PhASER. The accuracy disparity for unseen domains is also very low, 0.97 for in-domain PhASER accuracy and 0.94 for out-of-domain, which justifies the overall strong generalization ability of PhASER without access to any target domain samples. We would also like to point out that t-sne plots are susceptible to hyperparameters, hence, even though the accuracy of Diversify is better than BCResnet for out-of-domain data, visually Figure~\ref{fig:app_tsne}. (f) may convey better separation between classes than Figure~\ref{fig:app_tsne}. (e).

\begin{figure}[htbp]
\centering
\includegraphics[width=\linewidth]{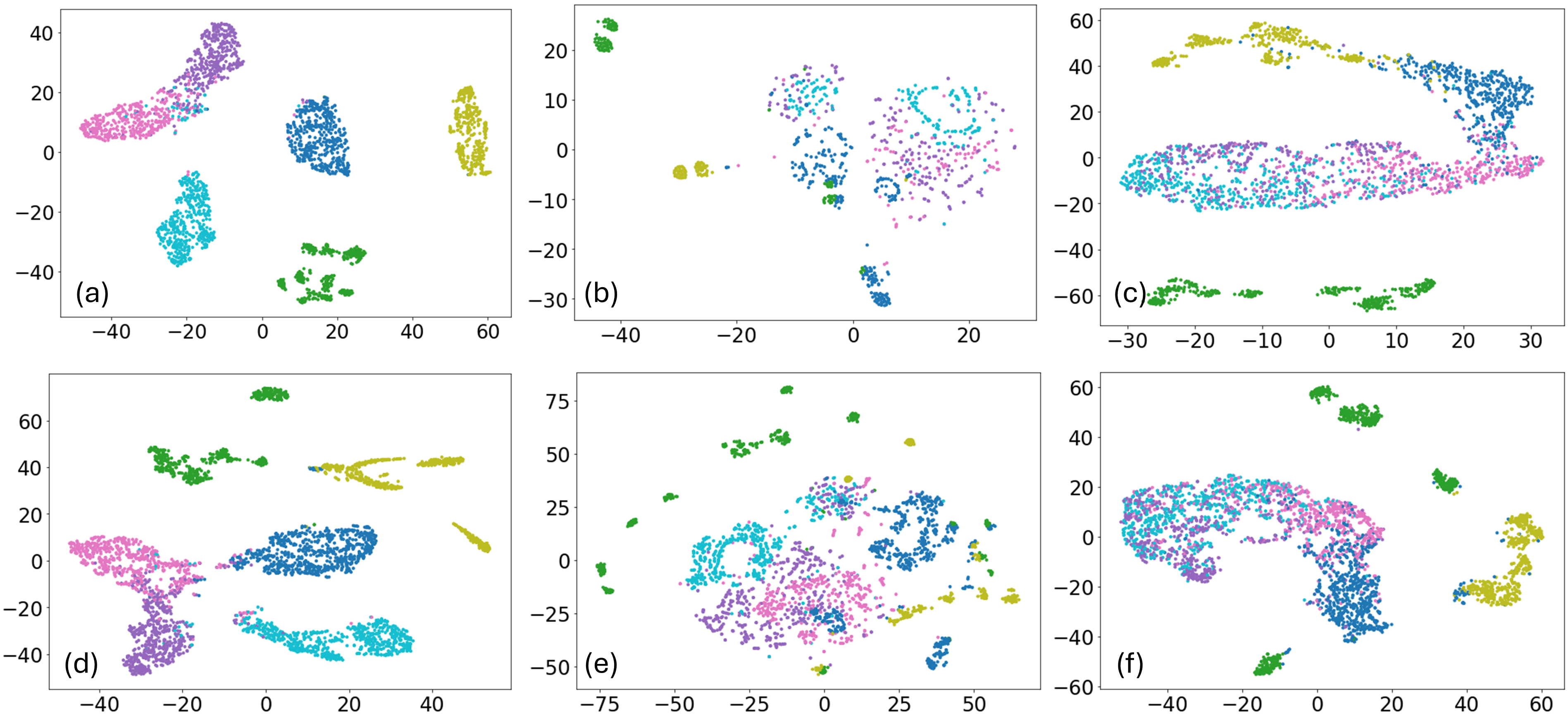}
\caption{t-sne plots for visualizations using embeddings from HHAR scenario 1 for in-domain samples in (a) \method{} with an in-domain-accuracy of 0.97, (b) Diversify with in-domain accuracy of 0.82 and (c) BCResNet with in-domain accuracy of 0.78; and out-of-domain samples in (c) \method{} with accuracy of 0.94, (d) Diversify with accuracy of 0.77 and (e) BCResNet with accuracy of 0.74.}
\label{fig:app_tsne}
\end{figure}

\section{Supplementary of Main Results} \label{app:add_results}
We conduct all experiments with three random seeds (2711, 2712, 2713), and present the error range
in this section. Tables~\ref{tab:app_wisdm},~\ref{tab:app_hhar} and ~\ref{tab:app_ucihar} represent the mean and standard deviation corresponding to the main paper's Table~\ref{tab:har_result} for the WISDM, HHAR and UCIHAR datasets respectively. Tables~\ref{tab:app_ssc} and ~\ref{tab:app_emg} are the complete representations of all the runs corresponding to Table~\ref{tab:eeg_result} in the main paper for sleep stage classification and gesture recognition respectively. Table~\ref{tab:app_hhar_one_to_x} corresponds to the Table~\ref{tab:one_to_x} in the main paper for the complete performance statistics for one person to another generalization using HHAR dataset.


\tmlr{\section{Experimental Setup for Figure~\ref{fig:motivation_ns_phase}}\label{app:ns_exp}

We analyze three synthetic signals sampled at 100 Hz with 500 points: a stationary sinusoid \(x_1(t) = \sin(2 \pi 3 t)\), and two nonstationary signals \(x_2(t) = 5000 \sin(2 \pi 3 t) - 10 t^5\) and \(x_3(t) = 5 t \sin(2 \pi 3 t) + 10 t\). Each is normalized by its maximum amplitude. The Hilbert transform is applied to extract imaginary components representing instantaneous phase information. Using FFT, we compute normalized magnitude and unwrapped phase spectra for original and Hilbert-imaginary signals. }

\section{Broader Impacts} \label{app:broad}
\method{}, with its advanced approach to time-series domain-generalizable learning, offers significant societal benefits to various fields and domains, such as healthcare, environment monitoring, and manufacturing domains, by enabling more precise and dependable data analysis. While \method{} itself does not directly cause negative social impacts, its application within these critical areas necessitates a thoughtful examination of ethical concerns. In healthcare, the application of \method{} could usher in a new era of patient monitoring and treatment, leading to improved experiences and outcomes for individuals across diverse demographics. Its robust generalization capabilities, even with limited access to source domains (see Table~\ref{tab:one_to_x}), offer the potential to bridge gaps and foster inclusivity, particularly in minority communities, while enabling insights from rare occurrences. Moreover, for applications in environmental monitoring—ranging from continuous sensing of ambient living conditions to remote and sporadic sensing of inaccessible geological sites—\method{}'s principles hold promise for sample-efficient, generalizable analysis. Similarly, in manufacturing applications, \method{} can be deployed for both qualitative and quantitative analyses of physical components, as well as for enhancing workers' safety through continuous sensing instrumentation. However, the implementation of \method{} in such vital areas brings to the forefront ethical considerations like data privacy, bias prevention, and the careful management of automation reliance. Addressing these issues is important to leverage \method{}'s benefits across these domains while ensuring ethical integrity and maintaining public trust in these areas.

\pagebreak

\end{document}